\newcounter{mynotes}
\newcommand{\ignore}[1]{}
\newcommand{\size}[1]{\left| #1 \right|}
\newcommand{\E}{\mathbb{E}}
\newcommand{\remove}[1]{}
\newcommand{\R}{\mathbb{R}}
\newcommand{\cA}{\mathcal{A}}
\newcommand{\cD}{\mathcal{D}}
\newcommand{\Oh}{\mathcal{O}}
\newcommand{\tOh}{\widetilde{{\mathcal O}}}
\newcommand{\pr}{\mathrm{Pr}}
\newcommand{\eps}{\varepsilon}
\newcommand{\Var}{\operatorname{{\bf Var}}}
\newcommand{\Cov}{\operatorname{{\bf Cov}}}
\newcommand{\wh}{\widehat}
\newcommand{\wt}{\widetilde}
\newcommand{\paren}[1]{\left(#1\right)}
\newcommand{\braces}[1]{\left\{#1\right\}}
\newcommand{\abs}[1]{\left|#1\right|}
\theoremstyle{plain}
\newtheorem{theo}{Theorem}[section]
\newtheorem{lem}[theo]{Lemma}
\newtheorem{coro}[theo]{Corollary}
\newtheorem{fact}[theo]{Fact}
\newtheorem{cl}[theo]{Claim}
\theoremstyle{definition}
\newtheorem{defi}[theo]{Definition}
\newtheorem{obs}[theo]{Observation}
\newcommand{\kl}{\mathsf{D_{KL}}}
\newcommand{\dfar}{\mathcal{D}_{\mathsf{far}}}
\newcommand{\ham}{\mathsf{Ham}}
\newcommand{\tr}{\mathsf{tr}}
\Crefname{algocf}{Algorithm}{Algorithms}
\title{Efficient Sample-optimal Learning of Gaussian Tree Models via Sample-optimal Testing of Gaussian Mutual Information}
\author{Sutanu Gayen\footnote{Indian Institute of Technology, Kanpur. Email: sutanugayen@gmail.com.} \and 
Sanket Kale\footnote{Fujitsu Research. This work was done when the author was a Master's student at the Indian Institute of Technology, Kanpur. Email: kalesanket99@gmail.com.} \and
Sayantan Sen\footnote{Centre for Quantum Technologies, National University of Singapore. Email: sayantan789@gmail.com.}}
\date{}
\begin{document}

\maketitle

\begin{abstract}
    Learning high-dimensional distributions is a significant challenge in machine learning and statistics in this era of big data. Classical research has mostly concentrated on asymptotic analysis of such data when the underlying distribution exhibits structured patterns like product distributions, tree structures, or Bayesian networks. While existing works [Bhattacharyya et al.: SICOMP 2023, Daskalakis et al.: STOC 2021, Choo et al.: ALT 2024] focus on discrete distributions, the current work addresses the tree structure learning problem for Gaussian distributions, providing efficient algorithms with solid theoretical guarantees. This is crucial as real-world distributions are often continuous and differ from the discrete scenarios studied in prior works.

In this work, we first design a conditional mutual information tester for Gaussian random variables that can test whether two Gaussian random variables are independent, or their conditional mutual information is at least $\varepsilon$, for some parameter $\varepsilon \in (0,1)$ using $\mathcal{O}(\varepsilon^{-1})$ samples which we show to be near-optimal. In contrast, an additive estimation would require $\Omega(\varepsilon^{-2})$ samples.
Our upper bound technique uses linear regression on a pair of suitably transformed random variables. Importantly, we show that the chain rule of conditional mutual information continues to hold for the estimated (conditional) mutual information. As an application of such a mutual information tester, we give an efficient $\varepsilon$-approximate structure-learning algorithm for an $n$-variate Gaussian tree model that takes $\widetilde{\Theta}(n\varepsilon^{-1})$ samples which we again show to be near-optimal. In contrast, when the underlying Gaussian model is not known to be tree-structured, we show that $\widetilde{{{\Theta}}}(n^2\varepsilon^{-2})$ samples are necessary and sufficient to output an $\varepsilon$-approximate tree structure.

We also extensively perform experiments for our mutual information testing and structure learning algorithms. Our experiments corroborate our theoretical convergence bounds for the empirical (conditional) mutual information estimators. Finally, we also compare our tree-structured learning result with Graphical Lasso (GLASSO) and Constrained $\ell_1$ Inverse Covariance Estimation (CLIME), two widely used models in the literature, and show that our algorithm outperforms both for learning tree-structured distributions.

\end{abstract}

\newpage

\tableofcontents

\newpage

\section{Introduction}\label{sec:intro}

Learning high-dimensional distributions has been one of the cornerstone problems in the field of machine learning and statistics. In this era of big data, data is being generated in massive amounts, and traditional methods of analyzing data are often not feasible. As a result, over the past two decades, there has been significant interest in developing efficient algorithmic techniques to analyze high dimensional data more efficiently, which also have sound theoretical guarantees. Although learning an arbitrary high dimensional distribution is hard in general, often these tasks become easier when these distributions have some structure. Examples include when these unknown distributions over $n$-dimensional Hamming cube is a \emph{product distribution}, \emph{tree-structured distribution}, or \emph{bounded degree Bayesian networks}.

\emph{Probabilistic graphical models} (see \cite{koller2009probabilistic}) were proposed to address the above situation of succinctly modeling high-dimensional probability distributions. Concrete examples of such models include the class of \emph{Ising models}, Bayesian networks, and Gaussian graphical models. The focus of this work is the Gaussian graphical model. Since its introduction, it has been used to model important datasets across various scientific disciplines such as modeling \emph{brain connectivity networks}: \cite{huang2010learning,varoquaux2010brain}, \emph{gene regulatory networks}: \cite{basso2005reverse,menendez2010gene,schafer2005learning,wille2004sparse} etc. 

A \emph{Gaussian graphical model (GGM)} is simply a multivariate Gaussian distribution $X\sim N(\mu,\Sigma)$, with the mean vector $\mu$ and the covariance matrix $\Sigma$, over the sample space $\mathbb{R}^n$. Its probability density is given by:
\begin{align*}
    \pr[X=x]=\frac{1}{\paren{2\pi\det(\Sigma)}^{\frac{n}{2}}} \exp\paren{
    -\paren{x-\mu}\Sigma^{-1}\paren{x-\mu}^\top
    }
\end{align*}

We can think of the random variable $X$ to be consisting of $n$ (possibly correlated) component random variables $X=(X_1,\dots, X_n)$. 
The inverse of the covariance matrix $\Theta=\Sigma^{-1}$, called the \emph{precision matrix}, encodes the conditional independence relations of the component random variables as follows. First, construct a dependency graph $G_{\Theta}$ on the $n$ vertices: $[n]$, whose $(i,j)$-th entry of the adjacency matrix $A$ is $1$  iff $\Theta_{i,j}\neq 0$. Let $\mathrm{nbr}(i)$ be the set of nodes adjacent to $i$ in $G_{\Theta}$. Then, any particular component $X_i$, conditioned on the set of neighboring components $ X_{\mathrm{nbr}(i)}:=\{X_j:j\in \mathrm{nbr}(i)\}$ is independent of all other components:
\begin{align*}
    X_i \bot X_{\mathrm{nbr}(i)} \mid X\setminus \paren{\{X_i\}\cup  X_{\mathrm{nbr}(i)}}.
\end{align*}

Specifically, in this paper, we consider the set of \emph{tree-structured Gaussian graphical models} where the graph $G_{\Theta}$ is \emph{tree-structured}, namely it does not have any cycles. Perhaps, this is the simplest class of GGMs one can think of apart from the class of product distributions where $\Sigma$ is diagonal and the graph $G_{\Theta}$ does not have any edges. 
Tree-structured graphical models are used in Biology to model \emph{phylogenetic networks}~\citep[Page 53]{jones2021bayesian}, \cite{boix2022chow} and references therein. \cite{chow1968approximating} in a classic work have shown that when the underlying GGM is tree-structured, the underlying tree structure $G_{\Theta}$ coincides with the maximum spanning tree of the complete graph whose weight of the edge $(i,j)$ is given by the mutual information $I(X_i;X_j)$ between the component random variables $X_i$ and $X_j$ for every $i\neq j \in [n]$. More generally in the non-realizable case, even if the distribution $P$ is not tree-structured, its closest tree-structured distribution $P^*$ in terms of the reverse-KL divergence to $P$,
\begin{align*}
    P^*=\arg\min\limits_{\mbox{tree structured} \ Q} \kl(P||Q),
\end{align*}
is also given by the same maximum spanning tree, where the reverse KL divergence is defined by $\kl(P||Q)=\int_{\R^n} P(x) \log \frac{P(x)}{Q(x)} dx$. A tree $\widehat{T}$ is said to be \emph{$\eps$-approximate tree} for $P$ if $\kl(P||P_{\widehat{T}}) \leq  \kl(P||P^*) + \eps$. A tree-structured distribution is said to be \emph{$T$-structured} if the underlying tree is $T$. In the \emph{structure learning problem}, we are interested in approximately recovering the best tree structure for the unknown distribution $P$ from random samples. Therefore, a natural approach to estimating the best tree structure is to estimate the pairwise mutual informations and simply return the estimated maximum spanning tree. This algorithm, which we refer to as the \emph{Chow-Liu algorithm}, has been extensively studied from asymptotic viewpoints where the goal is to accurately recover the underlying tree structure as the number of samples tends to infinity. In \cite{tan2010learningexp}, the authors studied the ``error exponent'' for the maximum-likelihood estimator of the tree structure, \cite{tan2010error}  studied the error exponents for composite hypothesis testing of Markov forest distributions, \cite{tan2010learninghypothesis} considered learning two tree graphical models to classify future observations into one of two categories, \cite{tan2010scaling} studied the problem of learning forests for discrete graphical models by removing edges from
the Chow–Liu tree. See the PhD thesis of \citep[Chapter 7]{jones2021bayesian} for several related discussions and references.

In contrast, a series of recent papers have looked at the question of \emph{prediction-centric learning}, where the goal is not to recover the exact tree structure but to only {\em approximately} recover the underlying tree structure such that the best distribution on the recovered tree structure is close to the true one in some distance measure such as KL divergence or variation distance, see \cite{bresler2020learning, boix2022chow, bhattacharyya2023near, DBLP:conf/stoc/DaskalakisP21,DBLP:journals/corr/abs-2310-06333} and the references therein.  
The idea is that such an approximate structure should be good enough for downstream inference tasks. Despite the simplicity and usefulness of tree-structured GGMs, a finite sample guarantee of approximately learning these models from samples has remained elusive so far.

Our current paper initiates the study of prediction-centric learning of Gaussian tree models in KL divergence and settles down its sample complexity using efficient algorithms.
Specifically, we characterize the minimum number of samples needed to learn an $\eps$-approximate tree in both realizable and non-realizable cases, up to small factors by giving almost tight sample-complexity upper and lower bounds. We show that the Chow-Liu algorithm always takes a near-optimal number of samples in terms of the approximation error ($\eps$) and the number of component random variables ($n$), to achieve a desired approximation guarantee for structure learning in both the realizable and non-realizable cases. Moreover, there is a quadratic gap for the optimal sample complexity of the realizable and the non-realizable cases: $\widetilde{\Theta}(n\eps^{-1})$ versus $\widetilde{\Theta}(n^2\eps^{-2})$ for any algorithm.

\subsection{Our results}

We will start with the result of additive estimation of the mutual information (MI) between two arbitrary-mean Gaussian random variables.  Given two arbitrary-mean Gaussian random variables $X$ and $Y$, we show that if we take $\Oh(1/\eps^2 + \log 1/\delta)$ samples, the empirical mutual information $\widehat{I}(X;Y)$ is $\eps$-close to the true mutual information $I(X;Y)$ with probability at least $1-\delta$. We also prove that the dependence on $\eps$ is tight. This result might be known in the community. However, as we will use this result for structure learning (to be discussed later), we give a formal proof for completeness.

\begin{restatable}{lem}{additiveUB}\label{cl:miestboundintro}
Let $X$ and $Y$ be two real-valued Gaussian random variables, and $\eps, \delta \in (0,1)$. If we take $\Oh(1/\eps^2 + \log 1/\delta)$ samples, with probability at least $1-\delta$, $\size{\widehat{I}(X;Y)-I(X;Y)} \leq \eps$ holds, where $\wh{I}(X;Y)$ is the empirical mutual information between $X$ and $Y$ obtained from~\Cref{alg:condmitester} with $m=\Oh(1/\eps^2 + \log 1/\delta)$. Moreover, the dependence of $\eps$ on the sample complexity is tight, and the time complexity is $\Oh(m)$.    
\end{restatable}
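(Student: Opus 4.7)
The plan is to reduce the problem to chi-squared concentration via a variance-ratio rewriting of the Gaussian mutual information, and then match that rate by a simple two-point lower bound.

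First, I would use the standard identity for jointly Gaussian $(X,Y)$ with correlation $\rho$, namely $I(X;Y) = -\tfrac{1}{2}\log(1-\rho^2)$, and rewrite it in the form
\[
I(X;Y) \;=\; \tfrac{1}{2}\log\frac{\sigma_Y^2}{\sigma_Y^2(1-\rho^2)} \;=\; \tfrac{1}{2}\log\frac{\sigma_Y^2}{\sigma_{Y|X}^2},
\]
where $\sigma_{Y|X}^2$ is the residual variance of $Y$ after linear regression on $X$. This matches the ``linear regression on a pair of suitably transformed random variables'' viewpoint mentioned in the introduction, and it is the form that Algorithm~\ref{alg:condmitester} presumably computes: given $m$ samples, it returns $\wh{I}(X;Y) = \tfrac{1}{2}\log(\hat{\sigma}_Y^2/\hat{\sigma}_{Y|X}^2)$, where $\hat{\sigma}_Y^2$ is the sample variance of $Y$ and $\hat{\sigma}_{Y|X}^2$ is the OLS residual mean-square of $Y$ on $X$. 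The key distributional fact (Cochran's theorem, after projecting out the $2$-dimensional subspace spanned by the intercept and $X$) is that $(m-1)\hat{\sigma}_Y^2/\sigma_Y^2 \sim \chi^2_{m-1}$ and $(m-2)\hat{\sigma}_{Y|X}^2/\sigma_{Y|X}^2 \sim \chi^2_{m-2}$.

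Next, I would invoke the Laurent--Massart tail bound: for $Z\sim\chi^2_k$ and any $t>0$,
\[
\Pr\!\bigl[\,|Z/k - 1| > 2\sqrt{t/k} + 2t/k\,\bigr] \;\leq\; 2e^{-t}.
\]
Taking $t = \log(4/\delta)$ and a union bound over the two chi-squared terms, with probability $\geq 1-\delta$ both of the ratios $\hat{\sigma}_Y^2/\sigma_Y^2$ and $\hat{\sigma}_{Y|X}^2/\sigma_{Y|X}^2$ lie in $1 \pm \eta$ with $\eta = O\!\bigl(\sqrt{\log(1/\delta)/m} + \log(1/\delta)/m\bigr)$. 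Using the elementary inequality $|\log(1+u)|\leq 2|u|$ for $|u|\leq 1/2$, this yields $|\wh{I}(X;Y)-I(X;Y)| \leq 2\eta$. Requiring $2\eta\leq\varepsilon$ forces the sub-Gaussian term $\sqrt{\log(1/\delta)/m}\lesssim \varepsilon$, hence $m \gtrsim \log(1/\delta)/\varepsilon^2$, and the sub-exponential term $\log(1/\delta)/m\lesssim \varepsilon$, hence $m\gtrsim \log(1/\delta)/\varepsilon$; a standard case split on whether $\varepsilon$ or $\sqrt{\log(1/\delta)/m}$ dominates gives the claimed $m=\Omega(1/\varepsilon^2+\log(1/\delta))$ regime. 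The linear time complexity $\Oh(m)$ is immediate since $\hat{\sigma}_Y^2$ and $\hat{\sigma}_{Y|X}^2$ are both one-pass sums-of-squares/cross-products.

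For the matching lower bound on the $\varepsilon$ dependence, I would run a two-point Le Cam argument. Fix $\rho_0 = 1/2$ and choose $\rho_1 = \rho_0 + c\varepsilon$ so that $|I(\rho_1)-I(\rho_0)|\geq\varepsilon$; this is possible because $dI/d\rho = \rho/(1-\rho^2)$ is bounded and bounded below at $\rho_0$. A direct calculation of the KL between the two bivariate Gaussians $N(0,\Sigma_i)$ with $\Sigma_i$ having unit diagonal and off-diagonal $\rho_i$ shows $\kl(N(0,\Sigma_0)\,\|\,N(0,\Sigma_1)) = O(\varepsilon^2)$. By standard Le Cam, no tester can distinguish these two cases with $o(1/\varepsilon^2)$ samples, and an additive $(\varepsilon/3)$-estimator of $I$ would yield such a tester; hence $m=\Omega(1/\varepsilon^2)$.

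The main obstacle I anticipate is bookkeeping rather than conceptual: handling the mismatch between the $m-1$, $m-2$ degrees of freedom and the target rate (these only introduce $O(1/m)$ relative slack and can be absorbed), and splitting the chi-squared tail into its sub-Gaussian and sub-exponential components so that the final sample complexity collapses to the additive $1/\varepsilon^2 + \log(1/\delta)$ form rather than the naive multiplicative $\log(1/\delta)/\varepsilon^2$.
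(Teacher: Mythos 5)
Your proposal is correct and the estimator you analyze is exactly the paper's: the residual-variance ratio $\hat\sigma_Y^2/\hat\sigma_{Y|X}^2$ equals $1/(1-\hat\rho_{X,Y}^2) = \hat\Sigma_{X,X}\hat\Sigma_{Y,Y}/\det(\hat M_{X,Y})$, so both are computing $-\tfrac12\log(1-\hat\rho_{X,Y}^2)$. The concentration argument, however, takes a different route. You invoke Cochran's theorem to obtain exact $\chi^2$ laws for the sample second moment of $Y$ and for the OLS residual sum of squares, and then apply Laurent--Massart. The paper instead works with the determinant formulation and invokes the folklore spectral sandwiching $(1-\eps)M \preceq \hat M \preceq (1+\eps)M$ of the empirical $2\times 2$ covariance matrix (their Observation 2.17), from which it bounds the determinant and diagonal ratios entrywise before taking logs. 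Your route is distribution-exact and more elementary in the bivariate case; the paper's route reuses the general empirical-covariance machinery that also appears elsewhere. The lower bound is essentially identical to the paper's: both are two-point Le Cam/Pinsker arguments placing the two correlations $\Theta(\eps)$ apart with $\kl = \Theta(\eps^2)$; the paper parametrizes by the regression coefficient $1/2 \pm \eps$ (so the diagonal drifts slightly) whereas you perturb $\rho$ directly around $1/2$ with unit diagonal, but this is the same argument up to constants.

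One caution on the bookkeeping: the ``case split'' you gesture at for collapsing $m \gtrsim \log(1/\delta)/\eps^2$ down to the additive form $\Oh(1/\eps^2 + \log(1/\delta))$ does not actually exist for a single-shot estimator. Setting $t = \log(1/\delta)$ in Laurent--Massart and requiring the sub-Gaussian term $\sqrt{t/m}$ to be $O(\eps)$ forces $m \gtrsim \log(1/\delta)/\eps^2$, which is not dominated by $1/\eps^2 + \log(1/\delta)$ when $\delta$ is small. Turning the bound into the additive form would require a median-of-means aggregation over $\Theta(\log(1/\delta))$ batches of size $\Theta(1/\eps^2)$, which Algorithm~\ref{alg:condmitester} does not perform. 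The paper's own proof has the same feature --- Observation~\ref{obs:empcovmatrixbound} it cites requires $m = \Oh((n+\log\delta^{-1})\eps^{-2})$ --- so this is a mismatch between the lemma statement and both proofs rather than a defect specific to yours; but you should not present the case split as an actual proof step. Also, the degrees of freedom should be $m$ and $m-1$ (not $m-1$ and $m-2$) when, as here, the zero-mean assumption means no intercept and no centering, though this is immaterial to the rate.
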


Now, instead of estimating the mutual information additively, we design an unconditional mutual information tester. It turns out that this problem is quadratically easier compared to the additive estimation problem. Here, given parameters $\eps, \delta \in (0,1)$, and two jointly distributed arbitrary-mean Gaussian $(X,Z)$, with probability at least $1-\delta$, our goal is to distinguish between the cases if $I(X;Z)=0$ or $I(X;Z) \geq \eps$. We show that the empirical mutual estimator is sufficient to distinguish between these two cases when taking $\Oh(1/\eps \log 1/\delta)$ samples. The formal statement of our result is as follows:

\begin{restatable}{lem}{mitestub}\label{lem:miubintro}
Let $(X,Z)$ be jointly distributed as an arbitrary-mean Gaussian. Fix any $\eps, \delta \in (0,1)$. Then the empirical mutual information estimator $\wh{I}(X;Z)$ satisfies the following guarantees with probability at least $(1-\delta)$ whenever $m\ge \Oh(\eps^{-1}\log \delta^{-1})$: (i) $I(X;Z) = 0$ implies $\wh{I}(X;Z) \leq \eps/20$, (ii) $I(X;Z) \geq \eps$ implies $\wh{I}(X;Z) \geq \eps/8$. Moreover, the time complexity is $\Oh(m)$.
\end{restatable}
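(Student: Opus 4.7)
The key identity to leverage is the closed-form Gaussian mutual information formula $I(X;Z) = -\tfrac{1}{2}\log(1-\rho^2)$, where $\rho$ is the Pearson correlation between $X$ and $Z$; correspondingly, the empirical estimator from \Cref{alg:condmitester} takes the plug-in form $\wh I(X;Z) = -\tfrac{1}{2}\log(1-\hat\rho^2)$, with $\hat\rho$ the sample correlation computed from the $m$ samples. The null $I(X;Z)=0$ is equivalent to $\rho=0$ (by joint Gaussianity), whereas $I(X;Z)\ge\eps$ is equivalent to $\rho^2 \ge 1-e^{-2\eps} = \Omega(\eps)$ for $\eps\in(0,1)$. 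Hence the lemma reduces to controlling $|\hat\rho - \rho|$ sharply and then tuning the constants inside the outer $-\tfrac{1}{2}\log(1-\cdot)$ shell.

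Since $\hat\rho$ is location- and scale-invariant, I would WLOG normalize $X$ and $Z$ to have zero mean and unit variance, and analyze the $2\times 2$ empirical covariance matrix $\wh\Sigma$. By standard sub-Gaussian/Wishart concentration (each entry is an empirical average of a subexponential random variable whose mean is the corresponding true entry), every entry of $\wh\Sigma$ lies within $c\sqrt{\log(1/\delta)/m}$ of its truth with probability at least $1-\delta$. Writing $\hat\rho = \wh\Sigma_{XZ}/\sqrt{\wh\Sigma_{XX}\,\wh\Sigma_{ZZ}}$ and noting that for large enough $m$ the denominator is pinned in $[1/2,3/2]$, this translates to $|\hat\rho - \rho| \le c'\sqrt{\log(1/\delta)/m}$ with probability $\ge 1-\delta$.

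In the null case this gives $\hat\rho^2 \le c'^{\,2}\log(1/\delta)/m$, so choosing $m = C\eps^{-1}\log(1/\delta)$ with $C$ large enough forces $\hat\rho^2 \le \eps/20$, and hence $\wh I(X;Z) \le \hat\rho^2 \le \eps/20$ (using $-\tfrac{1}{2}\log(1-x)\le x$ for $x\le 1/2$). In the alternative case, $|\rho| = \Omega(\sqrt\eps)$, and with the same $C$ we obtain $|\hat\rho-\rho| \le |\rho|/2$, so $\hat\rho^2 \ge \rho^2/4 \ge c_0 \eps$ for an explicit constant $c_0$; choosing $C$ slightly larger pushes $\hat\rho^2$ past the required threshold $1-e^{-\eps/4}\le \eps/4$, delivering $\wh I(X;Z)\ge \eps/8$.

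The principal technical subtlety is the soundness side near the boundary $\rho^2\approx 1-e^{-2\eps}$: the accuracy required of $\hat\rho$ scales only as $\sqrt\eps$ rather than $\eps$, which is exactly the rate that a sub-Gaussian/$\chi^2$ deviation of order $\sqrt{\log(1/\delta)/m}$ provides at $m = \Theta(\eps^{-1}\log(1/\delta))$. This ``additive in $\hat\rho$ but multiplicative in $\rho^2$'' phenomenon is precisely the source of the quadratic saving over the additive MI estimation rate of \Cref{cl:miestboundintro}, where one instead demands accuracy $\eps$ in $I$ itself. The $\Oh(m)$ runtime claim is immediate, since $\hat\rho$ is computed from a constant number of empirical moments in a single pass.
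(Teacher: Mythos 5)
Your proposal takes essentially the same route as the paper: both reduce to sharp concentration of the empirical correlation coefficient $\hat\rho$ around $\rho$ at scale $\Theta(\sqrt{\log(1/\delta)/m})$, established via Bernstein-type concentration of the sub-exponential quantities $\tfrac1m\sum X_i^2$, $\tfrac1m\sum Z_i^2$, $\tfrac1m\sum X_iZ_i$, and then finish by thresholding $\hat\rho^2$ and applying $-\tfrac12\log(1-x)\in[x/2, x]$ (for $x$ bounded away from $1$) to translate to $\wh{I}$. Your observation that the alternative $I\ge\eps$ forces $|\rho|=\Omega(\sqrt\eps)$ — so that the required additive accuracy in $\hat\rho$ is only $\Theta(\sqrt\eps)$, yielding the quadratic saving over additive estimation — is exactly the paper's insight.

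One point in your write-up is incorrect and should be fixed: you dispose of the arbitrary-mean assumption by saying ``WLOG normalize $X$ and $Z$ to have zero mean and unit variance, since $\hat\rho$ is location- and scale-invariant.'' The estimator actually used in \Cref{alg:condmitester} is the \emph{uncentered} sample correlation $\hat\rho_{X,Z} = (\bar X\cdot\bar Z)/\sqrt{(\bar X\cdot\bar X)(\bar Z\cdot\bar Z)}$, which is scale-invariant but \emph{not} location-invariant: if $\mu_X,\mu_Z\neq 0$ then $\tfrac1m\sum X_iZ_i\to\Cov(X,Z)+\mu_X\mu_Z$ and the estimator does not converge to $\rho$. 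The paper instead handles the arbitrary-mean case via a separate transfer lemma (\Cref{lem:diffsample}/\Cref{lem:transferMI}): form pairwise differences of samples, producing a zero-mean Gaussian with the same correlations, at the cost of a factor-$2$ in sample complexity. Replace your WLOG sentence with this reduction (or explicitly switch to the mean-centered sample correlation and carry that variant through), and the rest of your argument matches the paper's.
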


Similar result also holds for conditional mutual information (CMI) tester. We show that $\Oh(1/\eps \log 1/\delta)$ samples are sufficient to distinguish $I(X;Y \mid Z) =0$ from $I(X;Y \mid Z) \geq \eps$. In fact, we prove that similar to the unconditional mutual information setting, empirical conditional mutual information tester is sufficient for this purpose.

\begin{restatable}{lem}{cmitestub}\label{lem:cmiubintro}
Let $(X,Y,Z)$ be jointly distributed as an arbitrary-mean Gaussian, and $\eps, \delta \in (0,1)$. Then the empirical conditional mutual information estimator satisfies the following with probability at least $(1-\delta)$ whenever $m \geq \Oh(1/\eps \log 1/\delta)$: (i) $I(X;Y \mid Z) = 0$ implies $\wh{I}(X;Y \mid Z) \leq \eps/20$, (ii) $I(X;Y \mid Z) \geq \eps$ implies $\wh{I}(X;Y \mid Z) \geq \eps/8$. Moreover, the time complexity is $\Oh(m)$.
\end{restatable}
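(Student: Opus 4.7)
The plan is to reduce the conditional mutual information testing problem to the unconditional one already solved in \Cref{lem:miubintro}, by exploiting the fact that for jointly Gaussian variables, conditioning on $Z$ is equivalent to subtracting out the linear projection on $Z$. Concretely, let $\alpha, \beta$ be the (population) linear-regression coefficients of $X$ on $Z$ and $Y$ on $Z$, and define the residuals $X' := X - \alpha^\top Z$ and $Y' := Y - \beta^\top Z$. By standard Gaussian-projection facts, $X'$ and $Y'$ are Gaussian, jointly independent of $Z$, and satisfy
\begin{equation*}
I(X;Y\mid Z) \;=\; I(X';Y'),
\end{equation*}
because the conditional law $(X,Y)\mid Z$ is a $Z$-dependent location shift of the residual distribution and mutual information is translation invariant.

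Next, I would establish the empirical analogue. Writing $\wh{\alpha},\wh{\beta}$ for the least-squares coefficients computed from the $m$ samples, and $\wh{X}'_i = X_i - \wh{\alpha}^\top Z_i$, $\wh{Y}'_i = Y_i - \wh{\beta}^\top Z_i$ for the plug-in residuals, a Schur-complement identity for the sample covariance matrix gives
\begin{equation*}
\wh{I}(X;Y\mid Z) \;=\; \wh{I}(\wh{X}';\wh{Y}').
\end{equation*}
This sample-level identity is precisely the "chain rule for empirical (conditional) mutual information" mentioned in the introduction. Hence the whole question reduces to running the unconditional tester from \Cref{lem:miubintro} on $\{(\wh{X}'_i,\wh{Y}'_i)\}_{i=1}^m$.

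The main obstacle is that the plug-in residuals are \emph{not} i.i.d.\ Gaussian draws from $N(0,\Sigma')$, because $\wh{\alpha},\wh{\beta}$ depend on the same sample; this breaks the clean hypothesis of \Cref{lem:miubintro}. To control this coupling, I would show that with $m = \Oh(\eps^{-1}\log\delta^{-1})$ samples, standard Gaussian regression concentration gives $\|\wh{\alpha}-\alpha\|,\|\wh{\beta}-\beta\| = o(1)$ with probability at least $1-\delta/3$, so that the $2\times 2$ empirical covariance of $(\wh{X}',\wh{Y}')$ differs from that of the true residuals by a $(1\pm o(1))$ multiplicative factor. Since $\wh{I}(\wh{X}';\wh{Y}')$ is a smooth function of this $2\times 2$ spectrum (a log-det expression), this perturbation is strictly smaller than the gap between the thresholds $\eps/20$ and $\eps/8$ of \Cref{lem:miubintro}, so a union bound on the event that regression concentrates and the event that the unconditional tester succeeds yields both guarantees (i) and (ii) with probability $\geq 1-\delta$. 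The $\Oh(m)$ running time is immediate, since computing $\wh{\alpha},\wh{\beta}$ and a constant-sized $\log\det$ takes linear time in the number of samples.
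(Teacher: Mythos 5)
Your high-level plan matches the paper's: reduce $I(X;Y\mid Z)$ to the unconditional MI of residuals obtained by regressing $X$ and $Y$ on $Z$, plug in empirical regression coefficients, and bound the perturbation this introduces before invoking the unconditional tester. The population identity $I(X;Y\mid Z)=I(X';Y')$, the observation that the algorithm computes $\wh{I}(X;Y\mid Z)$ as $\wh{I}(\wh{X}';\wh{Y}')$ on the plug-in residuals (which the paper arrives at via Fact~\ref{fact:alphabetagammahat}, noting $\wh{\alpha}\wh{\beta}+\wh{\gamma}$ is just the OLS coefficient of $Y$ on $Z$), and the identification of the coupling between $\wh{\alpha},\wh{\beta}$ and the sample as the main difficulty are all the right ingredients.

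However, the perturbation step as you have written it has a genuine quantitative gap that, as stated, would give the \emph{wrong} sample complexity. With $m=\Theta(\eps^{-1}\log\delta^{-1})$ samples the regression coefficients concentrate only to accuracy $\Theta(\sqrt{\eps})$, so the correlation coefficient of the plug-in residuals differs from the true conditional correlation by an \emph{additive} $\Theta(\sqrt{\eps})$. If you now appeal to $\wh{I}$ being a ``smooth (log-det) function'' of the $2\times 2$ covariance and argue the resulting change in $\wh{I}$ is below the threshold gap $\eps/8-\eps/20=\Theta(\eps)$, the arithmetic does not close: a Lipschitz transfer of a $\Theta(\sqrt{\eps})$ perturbation gives a $\Theta(\sqrt{\eps})$ perturbation in $\wh{I}$, which dominates $\Theta(\eps)$. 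What actually saves the argument—and what the paper's proof does explicitly with $\lambda=\lambda'=\sqrt{\eps/50}$ and the estimates $\abs{\wh{\rho}}\le\eps$-scale quantities and $\abs{\wh{\rho}}\ge\sqrt{\eps}/4$—is that $I=-\tfrac12\log(1-\rho^2)\approx\rho^2/2$ near $\rho=0$, so the map $\rho\mapsto I$ is \emph{quadratic}, not Lipschitz, and a $\sqrt{\eps}$ additive error in $\rho$ produces only an $\eps$-scale error in $I$. Correspondingly, when $I(X;Y\mid Z)\ge\eps$ the true conditional correlation satisfies $\abs{\rho}\gtrsim\sqrt{\eps}$ with a large constant, so it survives the $\sqrt{\eps}$-level noise. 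Without making this quadratic relationship explicit, a ``smooth function + union bound'' argument would naturally lead you to demand a $(1\pm\Theta(\eps))$ PSD sandwich and hence $m=\Oh(\eps^{-2})$ samples, losing the headline improvement. You should replace the smoothness appeal with the explicit $\rho\mapsto-\tfrac12\log(1-\rho^2)$ calculation and carry additive $\Theta(\sqrt{\eps})$ error bounds on the correlation coefficient through to the two cases $\rho=0$ and $\abs{\rho}\gtrsim\sqrt{\eps}$.

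One small labeling point: the identity $\wh{I}(X;Y\mid Z)=\wh{I}(\wh{X}';\wh{Y}')$ is built into the definition of the estimator in~\Cref{alg:condmitester}; it is not what the paper calls the chain rule. The chain rule (\Cref{lem:cmichainruleempintro}) is the separate statement $\wh{I}(X;Z)+\wh{I}(X;Y\mid Z)=\wh{I}(X;Y)+\wh{I}(X;Z\mid Y)$, which is used later in the realizable structure-learning proof, not here.
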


Chain rule of conditional mutual information is a celebrated result in the machine learning and information theory community. However, here, we will be estimating and testing the (conditional) mutual information between Gaussian random variables by using regression. So, it is not immediately clear if the chain rule will continue to hold. Fortunately for us, we show that the chain rule of empirical mutual information remains true. We will later crucially use this result in the correctness proof of our realizable structure learning result.

\begin{restatable}{lem}{chainrulecmi}\label{lem:cmichainruleempintro}
 Let $\wh{I}(X;Z)$, $\wh{I}(X;Y),\wh{I}(X;Z),\wh{I}(X;Y \mid Z)$ and $\wh{I}(X;Z \mid Y)$ be our estimations from \Cref{alg:condmitester} using $m > 3$ samples. Then 
$\wh{I}(X;Z)+\wh{I}(X;Y \mid Z)=\wh{I}(X;Y)+\wh{I}(X;Z \mid Y)$.
\end{restatable}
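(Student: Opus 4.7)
The plan is to exploit the fact that, for a trivariate Gaussian, both sides of the identity equal the joint mutual information $I(X;(Y,Z))$ by the classical chain rule, and then to transfer this identity to the empirical estimators by interpreting them as the exact (C)MI quantities of the empirical Gaussian defined by the sample covariance matrix.

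First I would verify that the estimator produced by \Cref{alg:condmitester} equals the plug-in Gaussian (C)MI obtained by substituting the sample covariance $\widehat{\Sigma}$ of $(X,Y,Z)$ into the closed-form Gaussian expressions $I(X;Y) = -\tfrac{1}{2}\log(\mathrm{Var}(X\mid Y)/\mathrm{Var}(X))$ and $I(X;Y\mid Z) = -\tfrac{1}{2}\log(\mathrm{Var}(X\mid Y,Z)/\mathrm{Var}(X\mid Z))$. The key linear-algebraic fact is that regressing $X$ on $Z$ on the sample yields a residual with empirical variance $\widehat{\mathrm{Var}}(X) - \widehat{\mathrm{Cov}}(X,Z)^{2}/\widehat{\mathrm{Var}}(Z)$, which is precisely the Schur complement of the relevant block of $\widehat{\Sigma}$. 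For $m > 3$ samples drawn from a non-degenerate continuous distribution, $\widehat{\Sigma}$ is almost surely positive definite, so all residual variances appearing in the estimator are strictly positive and the logarithms are well defined.

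Once the estimator is identified with the exact (C)MI of the Gaussian $\cN(\widehat{\mu},\widehat{\Sigma})$, the chain rule applied to this Gaussian gives
\[
\wh{I}(X;Z) + \wh{I}(X;Y\mid Z) \;=\; \wh{I}(X;Y,Z) \;=\; \wh{I}(X;Y) + \wh{I}(X;Z\mid Y).
\]
An essentially equivalent ``direct'' route, which avoids naming $\wh{I}(X;Y,Z)$, is to observe that both sides telescope to $-\tfrac{1}{2}\log\!\left(\widehat{\mathrm{Var}}(X\mid Y,Z)/\widehat{\mathrm{Var}}(X)\right)$ by the sample-level Frisch--Waugh--Lovell identity: the residual variance obtained from sequentially regressing $X$ first on $Z$ and then on the residualized $Y$ coincides with the residual variance from a single joint regression on $(Y,Z)$.

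The step I expect to be the main obstacle is the identification in the previous paragraph, namely checking that the multi-step regression performed by \Cref{alg:condmitester} yields the same numerical estimate as the single plug-in Schur-complement formula. This is a pure linear-algebra computation about sample moments and is equivalent to Frisch--Waugh--Lovell for ordinary least squares, but it must be carried out carefully because $\wh{I}(X;Y\mid Z)$ is not defined as a covariance plug-in but through an intermediate regression. Once this equality is in hand, the chain rule for the empirical quantities follows immediately from its counterpart for the Gaussian $\cN(\widehat{\mu},\widehat{\Sigma})$.
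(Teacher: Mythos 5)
Your proposal is correct and takes essentially the same approach as the paper: the paper likewise identifies the estimators from \Cref{alg:condmitester} with the exact (C)MI of a Gaussian whose covariance is the empirical one (phrased there as constructing a linear Gaussian model $(Z',X',Y')$ with covariance $\wh{M}$), and the paper's explicit algebra verifying $\wh{I}(X;Y\mid Z)=I(X';Y'\mid Z')$ is precisely the sample-level Frisch--Waugh--Lovell computation you name as the main obstacle. You have correctly flagged the one genuinely nontrivial step, and the rest is the chain rule applied to that Gaussian, exactly as in the paper.
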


We also show that our upper bound result on unconditional mutual information tester is tight, by showing a lower bound of $\Omega(1/\eps)$ samples for distinguishing between whether $I(X;Y) =0$ from $I(X;Y) \geq \eps$. Note that this also shows that our upper bound result for the conditional mutual information tester is tight. Moreover, any lower bound proven for zero-mean Gaussian random variables will continue to hold for arbitrary-mean Gaussian random variables.

\begin{restatable}{theo}{mitestlb}\label{theo:mitestlbintro}
Let $X, Y$ be two zero-mean Gaussian random variables. $\Omega(1/\eps)$ samples are necessary to distinguish if $I(X;Y)=0$ or $I(X;Y) \geq \eps$ with probability at least $2/3$.
\end{restatable}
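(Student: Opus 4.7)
The plan is to apply Le Cam's two-point method with a pair of zero-mean bivariate Gaussians that are information-theoretically indistinguishable with few samples yet differ by $\eps$ in mutual information. Set $P_0 = N(0, I_2)$, for which $I(X;Y) = 0$, and $P_1 = N(0, \Sigma_\rho)$ where $\Sigma_\rho$ has unit variances and off-diagonal entry $\rho := \sqrt{1-e^{-2\eps}} \in (0,1)$, chosen so that $I(X;Y) = -\tfrac{1}{2}\ln(1-\rho^2) = \eps$. Any correct tester with success probability at least $2/3$ yields, on samples drawn i.i.d.\ from $P_0$ or $P_1$, a distinguisher between $P_0^{\otimes m}$ and $P_1^{\otimes m}$ with the same success probability, so a lower bound for the latter implies one for the former.

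The key quantitative step is the KL calculation. Using the closed-form Gaussian KL with zero means,
\[
\kl(P_1 \| P_0) = \tfrac{1}{2}\paren{\tr \Sigma_\rho - 2 - \ln\det \Sigma_\rho} = -\tfrac{1}{2}\ln(1-\rho^2) = \eps.
\]
The identity $\kl(P_1\|P_0) = I(X;Y)$ is natural here because $P_0$ is exactly the product of the marginals of $P_1$, so by definition $\kl(P_1\|P_0)$ equals $I(X;Y)$ under $P_1$. Tensorizing across the $m$ samples, $\kl(P_1^{\otimes m} \| P_0^{\otimes m}) = m\eps$, and Pinsker's inequality then gives $\mathrm{TV}(P_0^{\otimes m}, P_1^{\otimes m}) \leq \sqrt{m\eps/2}$.

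Finally, Le Cam's two-point inequality forces $\mathrm{TV}(P_0^{\otimes m}, P_1^{\otimes m}) \geq 1/3$ for any tester with success probability at least $2/3$, so $\sqrt{m\eps/2} \geq 1/3$ and hence $m \geq 2/(9\eps) = \Omega(1/\eps)$. The construction is valid throughout $\eps \in (0,1)$ since $\rho \leq \sqrt{1-e^{-2}} < 1$. There is no substantial obstacle beyond choosing the right pair of Gaussians; the one observation worth flagging --- and the reason this bound is quadratically stronger than the $\Omega(\eps^{-2})$ bound for additive estimation --- is the exact equality between Gaussian KL and Gaussian mutual information in this setup, which produces a linear-in-$\eps$ (rather than quadratic-in-$\eps$) KL between the two hypotheses and hence the matching $1/\eps$ lower bound after Pinsker.
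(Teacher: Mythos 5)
Your proof is correct and uses the same Le Cam two-point method with a KL-then-Pinsker tensorization that the paper uses. Your choice of the alternative is slightly cleaner: you normalize the variances and tune the correlation to $\rho=\sqrt{1-e^{-2\eps}}$ so that $I(X;Y)=\eps$ exactly and, as you note, $\kl(P_1\|P_0)=I(X;Y)$ because $P_0$ is precisely the product of $P_1$'s marginals, whereas the paper takes $Y_1\gets\sqrt{\eps}X_1+V_1$ so that $\Sigma_1$ has $\det\Sigma_1=1$ and $I(X_1;Y_1)=\tfrac12\log(1+\eps)<\eps$, meaning that hard instance satisfies the hypothesis $I\geq\eps$ only after a constant rescaling of $\eps$ (and its null is not the product of the alternative's marginals, since $\mathrm{Var}(Y_1)=1+\eps$). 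Both constructions place a $\Theta(\sqrt{\eps})$ correlation between the coordinates to make the two hypotheses $\Theta(\eps)$-close in KL, and the remainder of the argument is identical.
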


We would like to note that we first study these problems under the assumption that the distributions are zero-mean Gaussian for the brevity of our calculations. However, there is nothing special about the zero-mean assumption, these results continue to hold for the case when the distributions have arbitrary-mean. Later, we present a reduction that translates the arbitrary mean assumption to the zero-mean assumption and requires only twice the number of samples as compared to the zero-mean setting.

Now we are ready to present our structure learning results. We start with the structure learning result for the non-realizable setting, where the unknown distribution $P$ from where we are obtaining samples may not be tree-structured. Our goal is to output an $\eps$-approximate tree $T$ of $P$ with probability at least $1-\delta$. We use an additive estimator of mutual information between two random variables where the goal is to estimate the mutual information of two Gaussian random variables, up to an additive error of $\eps$. This requires $\Oh(1/\eps^2)$ samples. We use this result along with a black-box reduction from \cite{bhattacharyya2023near} to prove that $\Oh(n^2/\eps^2 + \log n/\delta)$ samples are sufficient for outputting $\eps$-approximate tree for non-realizable distributions. 

\begin{restatable}{theo}{treelearningnonrealizable}\label{theo:treelearningnonrealizableintro}
Given sample access to an unknown $n$-variate arbitrary-mean Gaussian distribution $P$, with probability at least $1-\delta$, our algorithm takes $m=\Oh(n^2/\eps^2 + \log n/\delta)$ samples from $P$ and outputs an $\eps$-approximate tree $T$ of $P$ in $\Oh(mn^2)$ time.
\end{restatable}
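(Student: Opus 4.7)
The plan is to combine the additive mutual-information estimator of \Cref{cl:miestboundintro} with the Chow-Liu characterization: the optimal tree $T^*$ for $P$ is any maximum spanning tree of the complete graph on $[n]$ with edge weights $I(X_i;X_j)$, and for every tree $\widehat T$,
\[
\kl(P\|P_{\widehat T}) - \kl(P\|P_{T^*}) \;=\; \sum_{(i,j)\in T^*} I(X_i;X_j) \;-\; \sum_{(i,j)\in \widehat T} I(X_i;X_j).
\]
Thus the algorithm is immediate: draw $m$ i.i.d.\ samples from $P$, estimate every pairwise $\widehat I(X_i;X_j)$ via \Cref{cl:miestboundintro}, and return an MST $\widehat T$ with respect to these estimates.

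First I would quantify how accurate the pairwise estimates must be. If $\max_{i<j}|\widehat I(X_i;X_j) - I(X_i;X_j)| \leq \eta$, then using the optimality of $\widehat T$ on the empirical weights together with two triangle-inequality passes over the $n-1$ tree edges,
\[
\sum_{(i,j)\in T^*} I(X_i;X_j) - \sum_{(i,j)\in \widehat T} I(X_i;X_j) \;\leq\; 2(n-1)\eta,
\]
so choosing $\eta = \eps/(2n)$ certifies an $\eps$-approximate tree via the Chow-Liu identity above. Next, I would invoke \Cref{cl:miestboundintro} with accuracy $\eta$ and failure probability $\delta/n^2$ on each of the $\binom{n}{2}$ pairs, re-using the same sample set throughout. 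This costs
\[
m \;=\; \Oh\!\paren{\frac{1}{\eta^2} + \log\frac{n^2}{\delta}} \;=\; \Oh\!\paren{\frac{n^2}{\eps^2} + \log\frac{n}{\delta}}
\]
samples, and a union bound over the $\binom{n}{2}$ pairs bounds the overall failure probability by $\delta$. Each estimator evaluation runs in $\Oh(m)$ time, so computing all pairwise estimates and the subsequent Kruskal/Prim MST on $n$ vertices costs $\Oh(mn^2)$ time overall.

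The main subtlety is that the same $m$ samples are reused for every pair, so the correctness argument needs pairwise accuracy to hold \emph{simultaneously} for all $\binom{n}{2}$ estimates; the union bound yields only an additive $\log(n/\delta)$ term rather than a multiplicative $n^2$ blow-up precisely because \Cref{cl:miestboundintro} charges only an additive $\log(1/\delta')$ for confidence. Everything else is routine: the reduction from arbitrary-mean to zero-mean Gaussians that the paper describes costs at most a factor of two in samples, and the triangle-inequality analysis for MST under perturbed edge weights is textbook. One can either view the overall argument as the direct Chow-Liu MST reduction above or, equivalently, as an instantiation of the black-box reduction of \cite{bhattacharyya2023near} in the Gaussian setting, since that reduction depends only on the distribution-agnostic identity for $\kl(P\|P_T)$ on trees.
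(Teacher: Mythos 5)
Your proof is correct and takes essentially the same approach as the paper: the paper invokes the black-box reduction of \cite{bhattacharyya2023near} (restated as \Cref{lem:nonrealizabletheodiscrete}) together with the additive estimator of \Cref{cl:miestboundintro} and the arbitrary-mean-to-zero-mean transfer, whereas you simply inline the standard MST-perturbation argument behind that reduction (two triangle-inequality passes plus the Chow--Liu identity $\kl(P\|P_T)=J_P-\mathsf{wt}_P(T)$); the two formulations are equivalent.
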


Moreover, we show that the dependencies on $n$ and $\eps$ on the sample complexity are tight, ignoring poly-logarithmic factors. The associated hard instances of the lower bound are presented in the overview, and the formal proof is presented in \Cref{sec:nonrealizablelb}.


\begin{restatable}{theo}{treelearningnonrealizablelb}\label{theo:treelearningnonrealizablelbintro}
Given sample access to an unknown $n$-variate zero-mean Gaussian distribution $P$, with probability at least $9/10$, $\Omega(n^2/\eps^2)$ samples from $P$ are necessary to output an $\eps/200$-approximate tree $T$ of $P$.   
\end{restatable}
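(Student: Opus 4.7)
My plan is to establish the lower bound via Fano's inequality applied to a carefully constructed family of zero-mean Gaussians indexed by spanning trees of $K_n$, designed so that (i) every $\eps/200$-approximate tree within the family uniquely identifies its index, and (ii) the pairwise KL divergences are small enough to yield the claimed rate.

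\textbf{Hard instance construction.} Fix a constant $\rho\in(0,1)$ and set $\delta=c\eps/n$ for a small constant $c>0$. For each spanning tree $T$ of $K_n$, define $P_T=\mathcal{N}(0,\Sigma_T)$ where $(\Sigma_T)_{ii}=1$, $(\Sigma_T)_{ij}=\rho$ if $(i,j)\in T$, and $(\Sigma_T)_{ij}=\rho-\delta$ if $(i,j)\notin T$, $i\ne j$. For sufficiently small $\rho,\delta$, a Gershgorin-type argument shows $\Sigma_T$ is positive definite. Since $I(X_i;X_j)=-\tfrac12\log(1-\rho_{ij}^2)$, edges of $T$ carry mutual information exceeding that of non-edges by $\Theta(\rho\delta)$, so the Chow--Liu tree of $P_T$ is exactly $T$.

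\textbf{Approximation determines the tree.} By the Chow--Liu identity,
\begin{align*}
\kl(P_T\|P_{\hat T}) - \min_{T'}\kl(P_T\|P_{T'}) = \sum_{e\in T}I(e) - \sum_{e\in\hat T}I(e) = \tfrac{|T\triangle \hat T|}{2}\cdot \Theta(\rho\delta),
\end{align*}
so any $\eps/200$-approximate $\hat T$ must satisfy $|T\triangle\hat T|=O(n/(c\rho))$. Choosing $c\rho$ to be a sufficiently small absolute constant forces $|T\triangle\hat T|<n/50$. A standard probabilistic argument (two uniformly random spanning trees share only $O(1)$ edges in expectation by Cayley-type counts, so have symmetric difference $\Omega(n)$ with high probability, and a greedy extraction then succeeds) yields a packing $\mathcal{F}=\{T_1,\ldots,T_N\}$ of $N=2^{\Omega(n)}$ spanning trees pairwise at edit distance $\geq n/50$. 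On $\mathcal{F}$, an $\eps/200$-approximate tree uniquely identifies the index.

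\textbf{Fano's inequality and KL bound.} Any algorithm producing an $\eps/200$-approximate tree with probability $\geq 9/10$ must therefore identify the index $i\in[N]$ from $m$ i.i.d.\ samples of $P_{T_i}$ with probability $\geq 9/10$. Fano then yields $m\geq\Omega(\log N/\max_{i,j}\kl(P_{T_i}\|P_{T_j}))$. Using the closed form $\kl(\mathcal{N}(0,\Sigma_1)\|\mathcal{N}(0,\Sigma_2))=\tfrac12(\tr(\Sigma_2^{-1}\Sigma_1)-n-\log\det(\Sigma_2^{-1}\Sigma_1))$ and a second-order perturbation around the common baseline covariance (the difference $\Sigma_{T_i}-\Sigma_{T_j}$ has at most $2(n-1)$ nonzero entries each of magnitude $\delta$), a direct calculation gives $\max_{i,j}\kl(P_{T_i}\|P_{T_j})=O(n\delta^2)=O(\eps^2/n)$. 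Combining, $m\geq\Omega(n/(\eps^2/n))=\Omega(n^2/\eps^2)$.

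\textbf{Main obstacle.} The critical step is bounding the pairwise KL at $O(n\delta^2)$ rather than $O(n^2\delta^2)$; the latter would yield only $\Omega(\log n/\eps^2)$, losing a full factor of $n$. Although $\Sigma_T^{-1}$ depends globally on the entire matrix, the leading second-order KL term is $\tfrac12\|\Sigma_0^{-1/2}(\Sigma_{T_i}-\Sigma_{T_j})\Sigma_0^{-1/2}\|_F^2$, which benefits from the sparsity of $\Sigma_{T_i}-\Sigma_{T_j}$ as long as $\Sigma_0$ is well-conditioned—ensured by choosing $\rho$ bounded away from $0$ and $1$. A secondary obstacle is to make the packing of $\mathcal F$ fully explicit, which can be done either by the probabilistic method above or by an explicit construction (e.g., random Hamiltonian paths, or random perfect matchings extended to spanning trees by a fixed completion).
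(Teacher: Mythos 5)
Your approach is a valid alternative to the paper's, but it takes a genuinely different route. The paper builds the hard family from $\Theta(n)$ independent $3$-variate blocks, each being one of two latent-variable Gaussians $R_1,R_2$ satisfying $\kl(R_1\|R_2)=\Theta(\eps^2/n^2)$ and a pairwise-MI gap of $\Theta(\eps/n)$; independence makes the global KL the sum of per-block KLs, and the packing is obtained from the Gilbert--Varshamov bound on binary codes. Your construction instead uses a single global family indexed by spanning trees of $K_n$, with constant background correlation $\rho$ decremented by $\delta$ off the tree. The extra ingredient you then need is the sparsity-aware KL bound $\kl(P_{T_i}\|P_{T_j})\lesssim \|\Sigma_0^{-1}\|^2\|\Sigma_{T_i}-\Sigma_{T_j}\|_F^2 = O(n\delta^2)$, because KL no longer tensorizes across blocks; the paper gets the factorization for free. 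Likewise your packing is a random spanning-tree code, which needs a concentration argument (negative association of uniform spanning tree edge indicators gives Chernoff tails for the intersection), in place of the paper's appeal to Gilbert--Varshamov. Both routes yield $\Omega(n^2/\eps^2)$; yours works over genuinely $n$-variate dense Gaussians and makes visible where the factor-$n$ saving in the KL comes from, while the paper's is more elementary.

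There are, however, two concrete errors that would need fixing. First, the claim that ``a Gershgorin-type argument shows $\Sigma_T$ is positive definite'' is false once $\rho$ is a constant: the off-diagonal row sum is $\approx (n-1)\rho\gg 1$, so the Gershgorin disks contain zero. What actually saves you is the spectral decomposition of the baseline $(1-\rho)I+\rho\mathbf{1}\mathbf{1}^\top$, whose eigenvalues are $1-\rho$ (multiplicity $n-1$) and $1+(n-1)\rho$, followed by the observation that the perturbation $-\delta(J-I-A_T)$ has operator norm $O(\delta n)=O(c\eps)$, so the minimum eigenvalue stays $\ge(1-\rho)-O(c\eps)>0$ for small $\eps$; this is consistent with the $\|\Sigma_0^{-1}\|=(1-\rho)^{-1}=O(1)$ you already use in the KL step. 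Second, the sign of your constant choice is reversed: since the per-edge-swap cost is $\Theta(\rho\delta)=\Theta(\rho c\eps/n)$, to force $|T\triangle\hat T|<n/100$ from an $\eps/200$-approximation you need $\rho c$ bounded \emph{below} by a sufficiently large absolute constant, not above by a small one; together with the PSD requirement $c\eps\lesssim 1-\rho$, one feasible choice is $\rho=1/2$, $c=2$, $\eps$ smaller than an absolute constant. With these corrections the argument goes through.
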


Now we proceed to present our result for the realizable setting where the unknown distribution $P$ from where we are obtaining samples is promised to be a tree-structured distribution. In contrast to the non-realizable setting, here we show that $\Oh(n/\eps \log n/\delta)$ samples from $P$ are sufficient for constructing an $\eps$-approximate tree $T$ of $P$. Note that this is quadratically better compared to the non-realizable setting. We achieve this advantage in sample complexity, because, unlike the non-realizable setting where we use an additive estimator of unconditional mutual information between a pair of Gaussian random variables, here we use the conditional mutual information tester as described in \Cref{lem:cmiubintro}. It is interesting to note that for our correctness analysis, we crucially use the chain rule of empirical conditional mutual information as stated in \Cref{lem:cmichainruleempintro}.
\begin{restatable}{theo}{treelearningrealizable}\label{theo:treelearningrealizableintro}
Given sample access to an unknown $n$-variate tree-structured arbitrary-mean Gaussian distribution $P$, with probability at least $1-\delta$, our algorithm takes $m=\Oh(n/\eps \log n/\delta)$ samples from $P$ and outputs an $\eps$-approximate tree $T$  in $\Oh(mn^2)$ time. 
\end{restatable}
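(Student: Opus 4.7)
The algorithm I would run is classical Chow--Liu driven by the empirical mutual information estimator of \Cref{alg:condmitester}: draw $m = \Oh(n\eps^{-1} \log (n/\delta))$ samples from $P$, compute $\wh{I}(X_i; X_j)$ for every pair $i < j$, and output $\wh{T}$, the maximum-weight spanning tree under these empirical weights. The time bound $\Oh(mn^2)$ follows from computing $\binom{n}{2}$ empirical correlations and running Kruskal. What needs proof is that $\wh{T}$ is $\eps$-approximate for $P$ with probability $1-\delta$, and for this I combine a spanning-tree matroid exchange, the CMI tester of \Cref{lem:cmiubintro}, and the empirical chain rule of \Cref{lem:cmichainruleempintro}.

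By the Chow--Liu identity, the excess KL divergence is $\sum_{e^* \in T^*} I(e^*) - \sum_{e \in \wh{T}} I(e)$, where $T^*$ is the true tree. Using the symmetric spanning-tree exchange, I fix a bijection $\phi: T^* \setminus \wh{T} \to \wh{T} \setminus T^*$ matching each missing true edge $e^* = (w_i, w_{i+1})$ to an edge $e = (u,v) = \phi(e^*)$ of $\wh{T}$ such that $e^*$ lies on the unique $u$-to-$v$ path $u = w_0, \ldots, w_k = v$ in $T^*$ (and, symmetrically, $e$ lies on the unique cycle in $\wh{T}+e^*$). The MST property of $\wh{T}$ under $\wh{I}$ then forces $\wh{I}(e^*) \le \wh{I}(e)$ for each matched pair. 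Two applications of the true chain rule, combined with the tree-separation identities $I(X_u; X_{w_{i+1}} \mid X_{w_i}) = 0$ and $I(X_u; X_v \mid X_{w_{i+1}}) = 0$, give the non-negative decomposition
\begin{align*}
I(e^*) - I(e) \;=\; I(X_{w_i}; X_{w_{i+1}} \mid X_u) + I(X_u; X_{w_{i+1}} \mid X_v).
\end{align*}
If the total excess exceeds $\eps$, then by pigeonhole over the at most $n-1$ matched pairs, some pair contributes at least $\eps/n$, so one of the two conditional mutual informations above is at least $\eps/(2n)$.

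I derive a contradiction by proving $\wh{I}(e^*) > \wh{I}(e)$ for this pair. Replaying the same two-step chain-rule derivation using the \emph{empirical} chain rule of \Cref{lem:cmichainruleempintro} yields an \emph{exact} identity expressing $\wh{I}(e^*) - \wh{I}(e)$ as the two empirical versions of the CMIs above minus two additional empirical CMIs whose true values are zero (by tree separation). Running the CMI tester of \Cref{lem:cmiubintro} at threshold $\eps/(2n)$ on each of the $\Oh(n^3)$ relevant triples with per-test failure $\delta/\mathrm{poly}(n)$: the ``large'' CMI has empirical value $\ge \eps/(16n)$ by the $I \ge \eps_0 \Rightarrow \wh{I} \ge \eps_0/8$ side of the tester; the two truly-zero CMIs have empirical value $\le \eps/(40n)$ by the $I=0 \Rightarrow \wh{I} \le \eps_0/20$ side; and the remaining empirical CMI is non-negative since $\wh{I}$ for Gaussians takes the form $-\tfrac{1}{2}\log(1-\wh{\rho}^2) \ge 0$. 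Summing, $\wh{I}(e^*) - \wh{I}(e) \ge \eps/(16n) - 2\cdot \eps/(40n) = \eps/(80n) > 0$, the required contradiction.

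The main subtlety is reconciling the exact tree-separation identities on the true-distribution side with the only-approximate zeros on the empirical side. This is where \Cref{lem:cmichainruleempintro} is indispensable: because the empirical chain rule is an \emph{exact} algebraic identity (not an approximation), the gap $\wh{I}(e^*) - \wh{I}(e)$ splits into named empirical CMIs that the tester certifies \emph{individually}, so the one-sided errors from the ``should-be-zero'' terms do not compound across chain-rule invocations. A union bound over the $\Oh(n^3)$ CMI queries at threshold $\eps/(2n)$ gives the claimed sample complexity $m = \Oh(n\eps^{-1}\log(n/\delta))$.
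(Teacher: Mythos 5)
Your proof is correct and follows essentially the same route as the paper: the matroid exchange lemma pairs up symmetric-difference edges, the true chain rule plus tree-separation identities express each gap $I(e^*)-I(e)$ as a sum of two conditional mutual informations, pigeonhole extracts a single large CMI, and the CMI tester of \Cref{lem:cmiubintro} together with the exact empirical chain rule of \Cref{lem:cmichainruleempintro} force $\wh{I}(e^*) > \wh{I}(e)$, contradicting the MST property. The only differences are cosmetic (you simplify away the two truly-zero CMIs on the population side before passing to the empirical identity, and your constants in the threshold differ from the paper's by a factor of two), so the argument is the same.
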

Additionally, we show that the dependencies of sample complexity on $n$ and $\eps$ are tight.

\begin{restatable}{theo}{treelearningrealizablelb}\label{theo:treelearningrealizablelbintro}
Given sample access to an unknown $n$-variate tree-structured zero-mean Gaussian distribution $P$, with probability at least $9/10$, $\Omega(n/\eps)$ samples from $P$ are necessary to output an $\eps/320$-approximate tree $T$.   
\end{restatable}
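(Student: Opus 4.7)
The plan is a Fano/Assouad-style lower bound built on a family of $2^{\Theta(n)}$ tree-structured zero-mean Gaussians assembled from $k = \lfloor n/3 \rfloor$ independent ``triple gadgets,'' each encoding one bit that any $(\eps/320)$-approximate tree must recover. Fix $\rho := c\sqrt{\eps/n}$ for a small absolute constant $c$. For every $s \in \{0,1\}^k$ we build $P_s$ as a product Gaussian on $n$ variables using $k$ disjoint triples $(A_i, B_i, C_i)$ together with $n - 3k$ independent $N(0,1)$ dummies. Inside triple $i$ all variances equal $1$ and: if $s_i = 0$, the path is $A_i\!-\!B_i\!-\!C_i$ with $\mathrm{Cov}(A_i,B_i)=\mathrm{Cov}(B_i,C_i)=\rho$ and $\mathrm{Cov}(A_i,C_i)=\rho^2$; if $s_i = 1$, the path is $A_i\!-\!C_i\!-\!B_i$ with covariances $\rho^2, \rho, \rho$. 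Each $P_s$ is tree-structured (in fact, forest-structured), and a spanning tree $T^*(s)$ with $P_{T^*(s)} = P_s$ is obtained by appending zero-MI bridging edges.

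\smallskip
\noindent\textbf{From an approximate tree to a bit-string.} By the Chow--Liu identity, $\kl(P_s\,\|\,P_{\hat T}) = \sum_{e \in T^*(s)} I_s(e) - \sum_{e \in \hat T} I_s(e)$. Under $P_s$ the $k$ gadgets and the dummies are mutually independent, so every inter-gadget edge carries zero MI and the right-hand side decomposes additively across triples. Inside triple $i$, the two $s_i$-correct edges each have MI $\tfrac{1}{2}\log(1/(1-\rho^2)) \sim \rho^2/2$, while the remaining ``diagonal'' edge has MI $\sim \rho^4/2$. A short case analysis over the at most two intra-gadget edges of $\hat T$ shows that unless triple $i$ is covered by exactly the two $s_i$-correct edges, the per-triple loss is at least $\rho^2/2 - \rho^4/2 = \Theta(\eps/n)$. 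Hence $\kl(P_s\,\|\,P_{\hat T}) \leq \eps/320$ forces $\hat T$ to be correct on all but at most $\alpha n$ gadgets for a sufficiently small constant $\alpha = \alpha(c)$, and reading off the intra-gadget edges of $\hat T$ yields a bit-string $\hat s$ with $d_H(\hat s, s) \leq \alpha n$.

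\smallskip
\noindent\textbf{Information-theoretic lower bound.} A Taylor expansion of the Gaussian KL divergence at $\rho = 0$ yields $\kl(P_s\,\|\,P_{s'}) = \Oh(\rho^2) = \Oh(\eps/n)$ whenever $s, s'$ differ in a single coordinate, since the two covariance matrices then differ in only two entries, each by $\rho - \rho^2$, and the Gaussian KL is quadratic in the covariance gap at leading order. By Pinsker, $\mathrm{TV}(P_s^{\otimes m}, P_{s'}^{\otimes m}) \leq \Oh(\sqrt{m\eps/n})$ for any such pair, and Assouad's lemma with the uniform prior on $\{0,1\}^k$ and Hamming distortion gives
\[
\inf_{\hat s}\max_s \E\big[d_H(\hat s, s)\big] \;\geq\; \frac{k}{2}\Big(1 - \Oh\big(\sqrt{m\eps/n}\big)\Big).
\]
Combined with the Step~2 decoding and the $9/10$-success guarantee (which bounds $\E[d_H]$ by $\alpha n + k/10$), this inequality forces $m = \Omega(n/\eps)$.

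\smallskip
\noindent\textbf{Main obstacle.} The delicate piece is the clean per-gadget decomposition in Step~2: an adversarially chosen spanning tree might try to compensate for a wrong gadget by rerouting through inter-gadget or dummy edges, but under $P_s$ every such edge has mutual information exactly zero, so the losses are genuinely additive across triples and any deviation from the two correct edges of a gadget costs $\Theta(\eps/n)$. The remaining work is bookkeeping: choosing $c$ small enough that the $\eps/320$ budget in the theorem statement translates into a constant-fraction Hamming tolerance, and carrying out the Gaussian KL Taylor expansion with enough care to pin down the $\Oh(\eps/n)$ per-bit rate.
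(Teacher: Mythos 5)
Your proposal is correct, and its high-level architecture coincides with the paper's: a family of $2^{\Theta(n)}$ tree-structured Gaussians built from $\Theta(n)$ independent $3$-node gadgets, each encoding one bit; a decoding step showing that an $\eps/320$-approximate tree recovers most bits; and a hypercube-style information-theoretic lower bound with per-bit KL on the order of $\eps/n$. The two concrete differences are worth noting. First, the gadget itself: you use a symmetric Markov chain on $(A,B,C)$ where every true edge has small strength $\rho=\Theta(\sqrt{\eps/n})$ and the bit selects which vertex is interior; the paper instead couples $Y$ and $Z$ through a shared latent $B\sim N(0,1)$ so that the $Y$\mbox{-}$Z$ edge always has $\Theta(1)$ mutual information, and only the weak $X$\mbox{-}$Y$ versus $X$\mbox{-}$Z$ edge carries the bit. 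The paper's strong base edge makes the decoding step essentially trivial (dropping it costs $\Theta(1)\gg\eps/320$, so every approximate tree must retain it), whereas your all-weak gadget needs the slightly longer per-triple case analysis you sketch. Second, the final inequality: you invoke Assouad's lemma with Hamming distortion, whereas the paper packs $\{0,1\}^{\Theta(n)}$ via Gilbert--Varshamov and applies Fano's inequality, coupled with a nearest-codeword decoder; both are standard and give the same $\Omega(n/\eps)$.

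One small correction to the bookkeeping remark at the end: the constant $c$ in $\rho=c\sqrt{\eps/n}$ must actually be chosen \emph{not too small}, not small. The per-gadget loss for a wrong bit is $\Theta(c^2\eps/n)$, so a budget of $\eps/320$ permits roughly $n/(320\,\Theta(c^2))$ gadget errors; for this to be a fraction of $k=n/3$ that is small enough to beat Assouad's $\tfrac{k}{2}\bigl(1-\Oh(\sqrt{m\eps/n})\bigr)$ bound together with the $1/10$ failure slack, you need $c^2$ bounded away from $0$ (concretely $c\gtrsim 0.3$ suffices). Since $\eps/n\le 1/3$, any such fixed $c<\sqrt{3}$ keeps $\rho<1$, so the construction remains valid; the point is only that $c$ sits in a bounded window, not that it is small.
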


As with the non-realizable case, we present the hard instances for \Cref{theo:treelearningrealizablelbintro} in the overview, while the formal proof is in \Cref{sec:realizablelb}.
In fact, all our structure learning lower bound results follow the same recipe, where we use coding theoretic arguments, along with Fano's inequality (see \Cref{lem:fanoinequality}). On the other hand, for proving lower bounds on unconditional and conditional mutual information testers, we use Le Cam's method (\Cref{lem:lecam}).  Similar to our results on unconditional and conditional mutual information testers, we prove \Cref{theo:treelearningnonrealizableintro} and \Cref{theo:treelearningrealizableintro} under the assumption that the distributions are zero-mean and later design a reduction from the arbitrary-mean setting to the zero-mean setting.

We note that given $\tOh(n/\eps)$ samples from $P$ and a tree $T$, one can recover a $T$-structured distribution $Q$ such that with probability at least $1-\delta$, $\kl(P||Q)- \kl(P||P_T) \leq \eps$ holds, where $P_T$ is the best $T$-structured distribution for $P$. This problem is known as the \emph{distribution recovery} problem and the result follows from the result of \citep[Theorem 3.1]{DBLP:conf/aistats/0001CGGW22}. This algorithm can be daisy-chained with our structure recovery algorithms to get a distribution $Q$ such that $\kl(P||Q) \leq \kl(P||P^*) + \eps$, using $\tOh(n/\eps)$ samples with high probability, where $P^*$ is the best tree-structured distribution for $P$.
In particular, when $P$ is itself tree-structured, it gives us an algorithm for sample-optimally learning a tree-structured distribution in variation distance using Pinkser's inequality (see \Cref{sec:realizable}).

Note that, in contrast to the discrete setting, it is always possible to efficiently learn an $n$-variate Gaussian in reverse-KL divergence distance at most $\eps$ using $O(n^2\eps^{-1})$ samples~\cite{zhang2024properties,ashtiani2020near,bhattacharyya2022learning} and this is sample-optimal. However, our work shows that the sample complexity of learning an $\eps$-approximate tree is a $\Theta(\eps^{-1})$-factor worse in general and is $\Theta(n)$-factor better for tree-structured distributions.

\subsection{Additional Related works}
As mentioned in the introduction, learning high dimensional distributions from samples is intractable in general (see \cite{kamath2015learning}). As a result, several structural assumptions are made to design efficient algorithms for learning high dimensional distributions such as Bayesian networks, Markov Random Fields etc. Unfortunately, even in these settings, learning the best structure is known to be NP-hard (\cite{chickering1996learning,meek2001finding}). Interestingly, Chow-Liu is one of the very few algorithms that are efficient. As a result, there have been several works to understand the applicability of this algorithm in various settings such as in \emph{bounded treewidth graphs}: \cite{srebro2003maximum,narasimhan2012pac}, \emph{mixture of trees}: \cite{meila2000learning,anandkumar2012learning}, \emph{polytrees}: \cite{dasgupta2013learning} etc. 
However, these works mostly focus on the asymptotic regime of the sample complexity. Only recently, researchers have been looking into the performance of these algorithms in the non-asymptotic regime. \cite{bhattacharyya2023near} and \cite{DBLP:conf/stoc/DaskalakisP21} were the one of the first few works to study the non-asymptotic behavior of Chow-Liu algorithm. Very recently \cite{DBLP:journals/corr/abs-2310-06333} studied the problem of learning polytree when the underlying skeleton is known.
There have also been many interesting recent works on Ising model: \cite{bresler2015efficiently,bresler2020learning,boix2022chow,kandiros2023learning,DBLP:conf/stoc/DaskalakisP21,gaitonde2023unified,klivans2017learning}.

One of the crucial components of our algorithm for structure learning is a novel conditional mutual information tester of Gaussian random variables, which is closely related to the independence testing of two Gaussian random variables. On a general note, this problem falls in the purview of the area of \emph{distribution testing}, where the goal is to infer some property of an unknown probability distribution, with only sample access from it. See the surveys~\cite{rubinfeld2012taming,Canonne:Survey:ToC,CanonneTopicsDT2022} for an introduction to this field. Formally, given sample access to an unknown joint distribution $D$ on variables $(X,Y)$, the goal is to distinguish with probability at least $2/3$, if $X$ and $Y$ are independent or far from being independent under some suitable distance measure. This problem has been extensively studied in the distribution testing community, see \cite{batu2001testing, acharya2015optimal,diakonikolas2021optimal}. Recently, the conditional variant of independence testing has also been studied (\cite{canonne2018testing,neykov2021minimax, kim2022local, marx2019testing, kim2023conditional}). There have also been some works for Bayesian networks (\cite{DBLP:conf/colt/CanonneDKS17,DBLP:journals/corr/abs-2204-08690}).

However, all these works are mostly focused on the case when the unknown distribution is discrete. There have been very few works for the continuous setting: see for example: \cite{shah2020hardness,neykov2021minimax}.
However, most of these works are studied in the asymptotic setting, namely the performance of the algorithms when the number of samples tends to infinity. One important work in this scenario is \cite{kelner2020learning}, where the authors studied the problem of exact structure learning (not necessarily tree) and gave certain parameterized sample-complexity bounds.
In this work, we study these problems in the non-asymptotic finite sample regime and make significant progress in this direction for unconditionally recovering an approximate tree structure.

\paragraph{Deviation from~\cite{bhattacharyya2023near}}
As mentioned before, the work of \cite{bhattacharyya2023near} also follows a similar line, where the idea is to reduce the structure learning problem to conditional mutual information testing as reported by \cite{bhattacharyya2023near}. However, there are several crucial differences between this work and \cite{bhattacharyya2023near}. We will now briefly discuss them.
\begin{enumerate}[wide, labelwidth=!, labelindent=0pt]
    \item In \cite{bhattacharyya2023near}, the authors studied the problem when the distributions were discrete. However, we are considering Gaussian distributions, so their techniques do not immediately translate here.

    \item In the correctness proof of the conditional mutual information tester, the authors in \cite{bhattacharyya2023near} showed that if $I(X;Y \mid Z)\geq \eps$, then $\wh{I}(X;Y \mid Z) \geq C \cdot I(X;Y \mid Z)$ for some absolute constant $C$. 
    Instead, here we show an alternative guarantee that if $I(X;Y \mid Z)\geq \eps$, then $\wh{I}(X;Y \mid Z) \geq C \cdot \eps$, for some suitable constant $C$, which is sufficient for our analysis. This requires some modifications in the structure learning proof of \cite{bhattacharyya2023near}, which we present in \Cref{sec:cmiub}.

    \item The chain rule of mutual information is crucially used for the correctness analysis for the structure learning problem in the realizable scenario in \cite{bhattacharyya2023near}. In their case, they estimated the (conditional) mutual information as the (conditional) mutual information of the observed distribution. Therefore the chain rule was immediate for them. 
    However, in this work, we are using regression to estimate the unconditional and conditional mutual information.
    As a result, it is not immediately clear that the chain rule will continue to hold for the empirical conditional mutual information as well.    
    One of the crucial contributions of this work is to show that similar results continue to hold in our setting.

    \item The authors in \cite{bhattacharyya2023near} designed a conditional mutual information tester using averaging arguments. However, this technique can not be extended for the Gaussian setting since the probability density at a particular point is $0$. In this work, we designed a novel tester for this problem using regression on a suitably transformed data.

    \item The sample complexity of the conditional mutual information tester for the discrete setting is $\Omega(\eps^{-1}\log \eps^{-1})$, as shown in \cite{bhattacharyya2023near} for constant $\delta$. However, interestingly, here in the Gaussian setting, our conditional mutual information tester requires only $\Oh(\eps^{-1})$ samples for constant $\delta$. So, there is a logarithmic improvement in the sample complexity as compared to \cite{bhattacharyya2023near}. Similar improvements also hold for structure learning. 

    \item Our lower bounds are technically different from that of \cite{bhattacharyya2023near}. In a broad sense, they use Le Cam's method for their proofs, whereas we use Fano's inequality along with coding theoretic arguments to construct the hard instances of our lower bounds.
 
\end{enumerate}

\vspace{-9.5pt}

\paragraph{Comparison with \cite{DBLP:conf/aistats/WangGTA024}:} 
In a parallel work, Wang et al.~\cite{DBLP:conf/aistats/WangGTA024} have looked at the problem of Gaussian polytree learning. They have independently established the same optimal sample complexity bounds for tree structure learning by the Chow-Liu algorithm in both realizable and agnostic cases. However, there are some important difference between the two papers that are worth highlighting. Their conditional mutual information tester and its analysis directly replaces certain suitable regression coefficients with its empirical counterpart. Whereas, our mutual information tester is based on the regression between some suitably transformed data. We also give a detailed proof of the empirical chain rule of Gaussian mutual information which is not straightforward and is missing in their paper. Finally, the experiments across the two papers are very different. We compare the Chow-Liu algorithm with GLASSO and CLIME algorithms for structure recovery tasks on the hard instances given by our lower bounds; whereas they compare Chow-Liu algorithm with PC and GES algorithms for random trees. Additionally, we perform experiments to confirm the theoretical convergence rates of the mutual information testers on the hard instances given by our lower bounds. Their paper also gives a result for learning Gaussian polytrees.

\subsection*{Organization of the paper}
The rest of the paper is organized as follows. In \Cref{sec:prelim}, we present the preliminaries, followed by overviews of our results in \Cref{sec:overview}. In \Cref{sec:nonzeromean}, we start by showing that all our testing results continue to work for arbitrary mean Gaussian random variables by presenting a reduction to the zero-mean setting. 
Then we present our proof of the additive estimator of mutual information between two Gaussian random variables (\Cref{cl:miestboundintro}) in \Cref{sec:addtivemiest}, which we also prove is tight. Then in \Cref{sec:miub}, we give the proof of our mutual information tester for two Gaussian random variables (\Cref{lem:miubintro}), followed by the proof of the conditional mutual information tester (\Cref{lem:cmiubintro}) in \Cref{sec:cmiub}. We prove the chain rule of empirical conditional mutual information (\Cref{lem:cmichainruleempintro}) in \Cref{sec:cmichainrule}.
In \Cref{sec:mitestlb}, we present our result of the tight lower bound of mutual information tester (\Cref{theo:mitestlbintro}). 

Later, in \Cref{sec:structurelearnnonzeromean}, we show reductions which prove that our structure learning results continue to hold for non-zero-mean setting. In \Cref{sec:nonrealizable}, we prove our structure learning result for non-realizable setting (\Cref{theo:treelearningnonrealizableintro}), followed by the lower bound in the non-realizable setting (\Cref{theo:treelearningnonrealizablelbintro}). Then, in \Cref{sec:nonrealizablelb}, we present the proof of our structure learning result for realizable setting (\Cref{theo:treelearningrealizableintro}), followed by the associated lower bound (\Cref{theo:treelearningrealizablelbintro}) in \Cref{sec:realizablelb}. In \Cref{sec:experiments_full}, we present our experimental results, and conclude in \Cref{sec:conclusion}.

\section{Preliminaries}\label{sec:prelim}
In this paper, we will be using the following notations. we will use $[n]$ to denote the set $\{1, \ldots,n\}$. All logarithms considered here are natural.
For concise expressions and readability, we use the asymptotic complexity notion of $\widetilde{\Oh}$, where we hide poly-logarithmic dependencies of the parameters. 

Let us start with some definitions.

\begin{defi}[Positive semi-definite matrix]
Let $M$ be a symmetric matrix of dimension  $n \times n$. $M$ is said to be \emph{positive semi-definite} (PSD in short) if $x^TMx \geq 0$ holds for every vector $x \in \R^n$.  
\end{defi}

\begin{obs}[Difference between determinants of positive semi-definite matrices]
Let $A$ and $B$ be two positive semi-definite matrices of dimension $n \times n$ such that $A-B$ is also a positive semi-definite matrix. Then $\det(A) \geq \det(B)$.   
\end{obs}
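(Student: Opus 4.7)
The plan is to prove the inequality by reducing to a simultaneous-diagonalization-style computation in the case where $B$ is positive definite, and then handling the possibly singular case by a continuity argument.

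First I would assume $B$ is strictly positive definite, so that $B^{1/2}$ and $B^{-1/2}$ exist and are themselves positive definite. Writing
\[
\det(A) = \det\bigl(B^{1/2} (B^{-1/2} A B^{-1/2}) B^{1/2}\bigr) = \det(B)\cdot \det\bigl(B^{-1/2} A B^{-1/2}\bigr),
\]
and noting that $B^{-1/2} A B^{-1/2} = I + B^{-1/2}(A-B)B^{-1/2}$, the right-hand factor becomes $\det\bigl(I + M\bigr)$ where $M := B^{-1/2}(A-B)B^{-1/2}$. Since $A-B$ is PSD and congruence by an invertible matrix preserves positive semi-definiteness, $M$ is PSD, so all its eigenvalues are non-negative. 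Consequently the eigenvalues of $I+M$ are all at least $1$, and hence $\det(I+M) \geq 1$. This yields $\det(A) \geq \det(B)$ in the positive definite case.

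For the general case where $B$ is only PSD (and possibly singular), I would use a standard perturbation. Define $A_\eps := A + \eps I$ and $B_\eps := B + \eps I$ for $\eps > 0$; both are positive definite, and $A_\eps - B_\eps = A - B$ remains PSD. Applying the previous case gives $\det(A_\eps) \geq \det(B_\eps)$, and letting $\eps \to 0^+$ the continuity of the determinant yields $\det(A) \geq \det(B)$.

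There is no real obstacle here; the only subtlety is the singular case, which the $\eps I$ perturbation handles cleanly. As an alternative route one could invoke Weyl's monotonicity inequality, which states that $A - B \succeq 0$ implies $\lambda_i(A) \geq \lambda_i(B)$ for every $i$, and then take products of the (non-negative) eigenvalues; I would prefer the congruence computation above since it is self-contained and directly exhibits the ratio $\det(A)/\det(B)$ as the determinant of $I$ plus a PSD matrix.
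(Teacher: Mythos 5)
Your proof is correct. The paper states this observation without proof (it is used as a standard linear-algebra fact to pass from the Loewner-order bounds $(1-\eps)M_{X,Y}\preceq \widehat{M}_{X,Y}\preceq (1+\eps)M_{X,Y}$ to determinant bounds in \Cref{coro:detdiffbound}), so there is no competing argument in the paper to compare against. Your congruence computation $\det(A)=\det(B)\det(I+M)$ with $M=B^{-1/2}(A-B)B^{-1/2}\succeq 0$ for positive definite $B$, followed by the $\eps I$ perturbation and continuity for the singular case, is a clean, complete, and standard proof; the Weyl-monotonicity alternative you mention would also work.
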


Let us now define the notion of sub-exponential random variables.

\begin{defi}[Sub-exponential random variable]
A random variable $X$ is said to be \emph{sub-exponential} if the moment generating function of $\abs{X}$ satisfies:
\[
\E [\exp\paren{|X|/{K}}] \le 2,
\]
for some $K$. The smallest such $K$ is called the sub-exponential norm of $X$, denoted $\abs{\abs{X}}_{\psi}$.
\end{defi}

\begin{fact}\label{fact:sqgauss}
    Let $g\sim N(0,1)$  be a standard Gaussian random variable. Then $\abs{\abs{g^2}}_{\psi}\le \frac{8}{3}$.
\end{fact}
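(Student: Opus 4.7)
The plan is to directly bound the moment generating function of $g^2$ and then solve for the smallest admissible $K$ in the sub-exponential norm definition. Since $g^2 \ge 0$, we have $|g^2| = g^2$, so the requirement reduces to finding the smallest $K > 0$ such that $\E[\exp(g^2/K)] \le 2$.

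First I would compute the MGF of $g^2$ explicitly. For any $t < 1/2$,
\begin{equation*}
\E[\exp(t g^2)] = \int_{-\infty}^{\infty} \frac{1}{\sqrt{2\pi}} e^{-x^2/2} e^{t x^2}\, dx = \int_{-\infty}^{\infty} \frac{1}{\sqrt{2\pi}} e^{-x^2(1-2t)/2}\, dx = \frac{1}{\sqrt{1-2t}},
\end{equation*}
where the last equality uses the standard Gaussian integral after the substitution $y = x\sqrt{1-2t}$. The integral diverges if $t \ge 1/2$, so we implicitly need $1/K < 1/2$, i.e.\ $K > 2$.

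Next I would set $t = 1/K$ and impose the sub-exponential condition $\E[\exp(g^2/K)] = (1 - 2/K)^{-1/2} \le 2$. Squaring both sides (both positive) gives $1 - 2/K \ge 1/4$, equivalently $2/K \le 3/4$, equivalently $K \ge 8/3$. Since $8/3 > 2$ the MGF is indeed finite at $t = 3/8$, so $K = 8/3$ is admissible, and by the definition of the sub-exponential norm we conclude $\|g^2\|_\psi \le 8/3$, which is the desired bound.

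There is essentially no obstacle here: the only substantive step is the evaluation of the Gaussian MGF, which is standard. The bound is in fact tight (equality holds at $K = 8/3$), but for the purposes of the stated fact only the upper bound is needed.
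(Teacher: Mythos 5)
Your proof is correct and is the direct, essentially canonical derivation: since the paper states this as a Fact without proof, the natural argument is exactly the one you give, namely compute the chi-squared MGF $\E[\exp(tg^2)]=(1-2t)^{-1/2}$ for $t<1/2$, set $t=1/K$, and solve $(1-2/K)^{-1/2}\le 2$ to get $K\ge 8/3$; since $g^2\ge 0$ the absolute value in the definition is vacuous. Your remark that equality holds at $K=8/3$ (so the bound is tight, $\|g^2\|_\psi = 8/3$) is also correct, and is a nice observation the paper does not bother to record.
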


\begin{lem}[Bernstein inequality, Theorem $2.8.4$ of \cite{vershynin2018high}]\label{lem:bernsteain}
Let $X_1, \ldots, X_N$ a set of independent, zero-mean random variables, such that $|X_i| \leq K$  for all $i \in [N]$ for some $K$. Moreover, let $\sigma^2 = \sum_{i=1}^n \E[X_i^2]$ be the variance of the sum of the random variables. Then, for
every $t \geq 0$, we have:
$$\Pr\left[\size{\sum_{i=1}^N X_i} > t\right] \leq 2 \exp\left(-\frac{t^2/2}{\sigma^2+ K\cdot t/3}\right)$$
\end{lem}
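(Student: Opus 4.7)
The plan is to use the classical Chernoff-bound approach: bound the moment generating function (MGF) of each summand using only its mean-zero property and the boundedness $|X_i|\le K$, tensorize across the independent summands, and finally optimize over the exponent-shift parameter.

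The per-variable MGF estimate is the core ingredient. I would use the elementary inequality
\[
e^y \;\le\; 1 + y + \frac{y^{2}/2}{1 - |y|/3}, \qquad |y| < 3,
\]
which can be verified term-by-term from the Taylor series of $e^y$ by dominating the tail $\sum_{k\ge 2}|y|^k/k!$ by $(y^2/2)\sum_{k\ge 0}(|y|/3)^k$. For any $\lambda$ with $\lambda K < 3$, substituting $y=\lambda X_i$ and taking expectations, using $\E[X_i]=0$, yields
\[
\E[e^{\lambda X_i}] \;\le\; 1 + \frac{\lambda^{2}\, \E[X_i^2]/2}{1-\lambda K/3} \;\le\; \exp\!\paren{\frac{\lambda^{2}\, \E[X_i^2]/2}{1-\lambda K/3}}.
\]
By independence of the $X_i$'s, the MGFs multiply:
\[
\E\!\paren{\exp\!\paren{\lambda \sum_{i=1}^N X_i}} \;\le\; \exp\!\paren{\frac{\lambda^{2}\sigma^2/2}{1-\lambda K/3}}.
\]

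Next, applying Markov's inequality to $e^{\lambda \sum X_i}$ and to $e^{-\lambda \sum X_i}$ and taking a union bound over the two tails gives
\[
\Pr\!\left[\,\abs{\sum_{i=1}^N X_i} > t\,\right] \;\le\; 2\exp\!\paren{-\lambda t + \frac{\lambda^{2}\sigma^2/2}{1-\lambda K/3}}
\]
for every $\lambda\in(0,3/K)$. Finally I would pick $\lambda^\star = t/(\sigma^2 + Kt/3)$, which lies in $(0,3/K)$ and, by a short direct calculation, makes the fraction $\lambda^{\star 2}\sigma^2/(2(1-\lambda^\star K/3))$ equal to $\lambda^\star t/2$. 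The bracketed exponent thus collapses to $-\lambda^\star t/2 = -t^{2}/\paren{2(\sigma^2 + K t/3)}$, which is exactly the claimed rate.

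There is no hard step; the only small subtlety is to check that the chosen $\lambda^\star$ stays inside the range $(0,3/K)$ where the pointwise inequality for $e^y$ is valid, and that the ensuing algebra produces the advertised sub-exponential rate that smoothly interpolates between the Gaussian regime ($t \lesssim \sigma^2/K$) and the Poissonian regime ($t \gtrsim \sigma^2/K$). Since the statement is quoted verbatim from Vershynin's textbook, the paper simply cites it rather than reproducing this short computation.
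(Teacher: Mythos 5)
Your derivation is correct, and it is the standard Chernoff/MGF proof of Bernstein's inequality. The paper does not give a proof of this statement — it is quoted verbatim from Vershynin's textbook (Theorem 2.8.4) and used as a black box — so there is no in-paper argument to compare against. Two small checks worth noting explicitly, both of which you handled implicitly and correctly: (i) the series bound $\sum_{k\ge 2}|y|^k/k! \le \frac{y^2}{2}\sum_{j\ge 0}(|y|/3)^j$ relies on $k!\ge 2\cdot 3^{k-2}$ for $k\ge 2$, which holds with equality for $k=2,3$ and strictly thereafter; and (ii) after substituting $y=\lambda X_i$ you need to pass expectation through the nonlinear factor $\bigl(1-|\lambda X_i|/3\bigr)^{-1}$, which you do by replacing $|X_i|$ with the almost-sure bound $K$ in the denominator before taking $\E$, using $|X_i|\le K$ and $\lambda K<3$. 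The final algebra at $\lambda^\star = t/(\sigma^2+Kt/3)$ gives $1-\lambda^\star K/3 = \sigma^2/(\sigma^2+Kt/3)$, so the Chernoff exponent collapses to $-\lambda^\star t/2 = -t^2/\bigl(2(\sigma^2+Kt/3)\bigr)$ exactly as you stated.
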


\begin{defi}[KL-divergence between two distributions]
Let $P$ and $Q$ be two distributions defined over $\R^n$. Then the \emph{$\mathsf{KL}$-divergence} between $P$ and $Q$ is defined as: 
$\kl(P||Q)= \int_{x \in \R^n} P(x) \log \frac{P(x)}{Q(x)} dx$, where $P(x)$ and $Q(x)$ denote the pdfs of $P$ and $Q$, respectively.

\end{defi}


\begin{defi}[Total variation distance]
Let $P$ and $Q$ be two probability distributions over $\R^n$. The total variation distance between $P$ and $Q$ is:
$$\mathsf{d_{TV}}(P,Q)=\frac{1}{2}\int_{\R^n}|dP-dQ|.$$

\end{defi}

\begin{lem}[Pinkser's inequality]\label{lem:pinkser}
Let $P$ and $Q$ be two probability distributions. Then the following holds:
$$\mathsf{d_{TV}}(P,Q) \leq \sqrt{\frac{\kl(P||Q)}{2}}.$$

\end{lem}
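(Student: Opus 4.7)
The plan is to follow the classical two-step strategy: first, reduce the continuous inequality to a binary analog via a log-sum / data-processing argument, and second, verify the binary version by an elementary convexity computation. This is a standard textbook result, so the proof is short and relies mostly on pointwise inequalities.

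\textbf{Step 1 (reduction to binary):} Let $p$ and $q$ denote the densities of $P$ and $Q$, and set $A := \{x \in \R^n : p(x) \geq q(x)\}$, $a := P(A)$, $b := Q(A)$. First, observe that $\mathsf{d_{TV}}(P,Q) = P(A) - Q(A) = a - b$, since $\int (p-q)^+ \,dx = \int (p-q)^- \,dx$ (both integrals equal $\tfrac{1}{2}\int|p-q|\,dx$). Then I would split the KL integral along $A$ and $A^c$,
$$\kl(P\|Q) \;=\; \int_A p \log\frac{p}{q}\,dx + \int_{A^c} p \log\frac{p}{q}\,dx,$$
and apply the log-sum inequality (equivalently, Jensen's inequality for the convex function $(u,v)\mapsto u\log(u/v)$ on the positive quadrant) to each region. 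This yields
$$\kl(P\|Q) \;\geq\; a \log \frac{a}{b} + (1-a)\log\frac{1-a}{1-b} \;=:\; d(a\|b),$$
the binary KL divergence between the Bernoulli distributions induced by $\mathbf{1}_A$ under $P$ and under $Q$. Thus it suffices to prove $d(a\|b) \geq 2(a-b)^2$.

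\textbf{Step 2 (binary Pinsker):} Fix $b \in (0,1)$ and define $g(a) := d(a\|b) - 2(a-b)^2$ for $a \in (0,1)$. A direct computation gives $g(b) = 0$ and $g'(a) = \log\frac{a}{1-a} - \log\frac{b}{1-b} - 4(a-b)$, so $g'(b) = 0$. Differentiating once more,
$$g''(a) \;=\; \frac{1}{a(1-a)} - 4 \;\geq\; 0,$$
because $a(1-a) \leq \tfrac{1}{4}$ on $(0,1)$. Hence $g$ is convex with a critical point at $a=b$, so $g$ attains its minimum value $0$ at $a = b$, and $g(a) \geq 0$ everywhere. Combining Steps 1 and 2 gives $\kl(P\|Q) \geq 2(a-b)^2 = 2\,\mathsf{d_{TV}}(P,Q)^2$, which rearranges to the claim.

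\textbf{Main obstacle:} There is no real obstacle here; the only mildly delicate points are the justification of the log-sum inequality in its integral (rather than finite-sum) form, which reduces to Jensen applied to a jointly convex function, and the boundary cases where $a$ or $b$ equals $0$ or $1$. The boundary cases are handled trivially: if $b \in \{0,1\}$ while $a$ differs from $b$ then $\kl(P\|Q) = \infty$ and the inequality is vacuous, and otherwise both sides vanish or reduce to a one-dimensional convexity check identical to Step 2.
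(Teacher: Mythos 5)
The paper states Pinsker's inequality as a preliminary fact without proof, so there is no in-paper argument to compare against. Your proof is correct and follows the canonical textbook route: reduce to the Bernoulli case via the partition $\{A, A^c\}$ and the log-sum/data-processing inequality, then verify binary Pinsker $d(a\|b)\geq 2(a-b)^2$ by the second-derivative convexity computation; the handling of the boundary cases is also right.
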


\begin{obs}\label{obs:klhellingerbounds}
Let $P$ and $Q$ be two zero-mean Gaussian distributions such that $P \sim N(0, \Sigma_1)$ and $Q \sim N(0,\Sigma_2)$. Then the $KL$-divergence between $P$ and $Q$ can be expressed as follows~\citep[Page 13]{duchi2007derivations}:
    \begin{equation*}
     \kl(P || Q) = \frac{1}{2} \left(\tr(\Sigma_2^{-1} \Sigma_1) - \mathsf{dim}(\Sigma_1) + \ln\left(\frac{\det(\Sigma_2)}{\det(\Sigma_1)}\right)\right)   
    \end{equation*} 
\end{obs}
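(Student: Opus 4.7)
The plan is to start from the definition $\kl(P\|Q)=\mathbb{E}_{X\sim P}[\log P(X)-\log Q(X)]$ and directly evaluate the expectation using the explicit Gaussian density. Since both $P$ and $Q$ are zero-mean, the log-densities take the simple form
\[
\log P(x)=-\tfrac{n}{2}\log(2\pi)-\tfrac{1}{2}\log\det(\Sigma_1)-\tfrac{1}{2}x^{\top}\Sigma_1^{-1}x,
\]
and analogously for $Q$ with $\Sigma_2$ in place of $\Sigma_1$, where $n=\mathsf{dim}(\Sigma_1)$. Subtracting, the $-\tfrac{n}{2}\log(2\pi)$ terms cancel and we are left with a constant piece $\tfrac{1}{2}\log(\det(\Sigma_2)/\det(\Sigma_1))$ together with the difference of the two quadratic forms $-\tfrac{1}{2}x^{\top}\Sigma_1^{-1}x+\tfrac{1}{2}x^{\top}\Sigma_2^{-1}x$.

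Next, I would take the expectation of these quadratic forms under $X\sim P=N(0,\Sigma_1)$ via the standard trace trick: for any symmetric matrix $A$,
\[
\mathbb{E}_{X\sim N(0,\Sigma_1)}[X^{\top}AX]=\mathbb{E}[\tr(AXX^{\top})]=\tr(A\,\mathbb{E}[XX^{\top}])=\tr(A\Sigma_1).
\]
Applied with $A=\Sigma_1^{-1}$ this gives $\tr(I_n)=n=\mathsf{dim}(\Sigma_1)$, and applied with $A=\Sigma_2^{-1}$ it gives $\tr(\Sigma_2^{-1}\Sigma_1)$. Collecting terms yields
\[
\kl(P\|Q)=\tfrac{1}{2}\bigl(\tr(\Sigma_2^{-1}\Sigma_1)-\mathsf{dim}(\Sigma_1)+\log(\det(\Sigma_2)/\det(\Sigma_1))\bigr),
\]
as claimed.

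There is essentially no obstacle here, since the result is a classical textbook identity: the only non-trivial ingredient is the quadratic-form expectation formula $\mathbb{E}[X^{\top}AX]=\tr(A\Sigma_1)$, which follows by linearity of trace and expectation. One minor care to take is that the observation is stated in the zero-mean case, so no mean-shift cross terms appear; in the arbitrary-mean case one would additionally need the Mahalanobis-distance term $\tfrac{1}{2}(\mu_2-\mu_1)^{\top}\Sigma_2^{-1}(\mu_1-\mu_2)$, but that is not needed here. Because this identity is standard and cited to \citep[Page 13]{duchi2007derivations}, I would present only the short self-contained derivation above rather than reprove it in detail.
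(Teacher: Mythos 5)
Your derivation is correct and is exactly the standard computation (expand the log-densities, cancel the $-\tfrac{n}{2}\log(2\pi)$ terms, apply $\mathbb{E}_{X\sim N(0,\Sigma_1)}[X^{\top}AX]=\tr(A\Sigma_1)$) that the paper's citation to Duchi's notes defers to; the paper itself offers no proof, so your argument simply fills in what the citation stands for. One minor slip in your tangential aside about the arbitrary-mean case: the cross term should be $\tfrac{1}{2}(\mu_1-\mu_2)^{\top}\Sigma_2^{-1}(\mu_1-\mu_2)$, whereas you wrote $\tfrac{1}{2}(\mu_2-\mu_1)^{\top}\Sigma_2^{-1}(\mu_1-\mu_2)$, which flips the sign; this does not affect the zero-mean claim actually being proved.
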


\begin{defi}[Differential Entropy (in nats)]
Let $X=(X_1,\dots,X_n)\sim P=N(\mu,\Sigma)$ be an $n$-variate Gaussian. 
Then its differential entropy is defined as
$H(X):=\E_{X\sim P} [{-\ln P(X)}]$.
\end{defi}

\begin{defi}[Gaussian mutual information (in nats)]
Let $X=(X_1,\dots,X_n)\sim P=N(\mu,\Sigma)$ be an $n$-variate Gaussian. 

Let $X_S:=\braces{X_i: i\in S}$ denote the joint random variable restricted to $S\subseteq [n]$. Then for any $S,T\subseteq [n]$, the mutual information between $X_S$ and $X_T$ is defines as:
\[
I(X_S;X_T)=H(X_S)+H(X_T)-H(X_{S\cup T}).
\]

For any $R\subseteq [n]$, the conditional mutual information between $X_S$ and $X_T$ conditioned on $X_R=x_R$ for a fixed $x_R\in \mathbb{R}^{|R|}$, denoted $I(X_S;X_T\mid X_R=x_R)$, is defined as the mutual information between $X_S$ and $X_T$ in the conditional distribution of $X$ given $X_R=x_R$. 

Finally, the the conditional mutual information between $X_S$ and $X_T$ conditioned on $X_R$, denoted $I(X_S;X_T\mid X_R)$, is defined as the expectation of $I(X_S;X_T\mid X_R=x_R)$ with respect to $X_R$ as $X_R$ varies according to the marginal distribution of $P$ on $X_R$:
\[
I(X_S;X_T\mid X_R)=\E_{X_R\sim P} [I(X_S;X_T\mid X_R=x_R)]
\]
\end{defi}

\begin{fact}[Gaussian Mutual Information formula]\label{fact:gaussianMI}
Let $X=(X_1,\dots,X_n)\sim N(\mu,\Sigma)$ be an $n$-variate Gaussian. Let $X_S:=\braces{X_i: i\in S}$ denote the joint random variable restricted to $S\subseteq [n]$. Let $M_{S}$ be the submatrix of $\Sigma$ corresponding to the random variables $X_S$. Then for any $S,T\subseteq [n]$,
\[
I(X_S;X_T)=\frac{1}{2}\log \paren{\frac{\det\paren{M_S}\det\paren{M_T}}{\det\paren{M_{S\cup T}}}
},
\]
\[
I(X_S;X_T\mid X_R)=I(X_S;X_{R\cup T})-I(S_X;X_R).
\]

In particular, note that the (conditional) mutual information is invariant to a scaling of $\Sigma$ by some constant factor.
\end{fact}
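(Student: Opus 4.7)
The plan is to derive both identities from the standard formula for the differential entropy of a multivariate Gaussian, together with the chain rule of mutual information, which in turn is an immediate consequence of the definition $I(X_S;X_T)=H(X_S)+H(X_T)-H(X_{S\cup T})$. The scaling invariance will then follow from dimension counting inside the determinantal ratio.

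First I would compute $H(X_S)$ for an arbitrary $S\subseteq[n]$. Since $X_S\sim N(\mu_S, M_S)$ is itself a $|S|$-dimensional Gaussian, expanding $-\ln P(x_S)$ against the density and integrating gives
$$H(X_S)=\tfrac{1}{2}\ln\paren{(2\pi e)^{|S|}\det(M_S)},$$
where I use $\E[(X_S-\mu_S)^\top M_S^{-1}(X_S-\mu_S)]=\tr(M_S^{-1}M_S)=|S|$. Substituting the corresponding expressions for $H(X_S)$, $H(X_T)$, $H(X_{S\cup T})$ into the definition of $I(X_S;X_T)$ and using $|S|+|T|=|S\cup T|$ for disjoint $S,T$, the $(2\pi e)$ factors cancel and one is left with
$$I(X_S;X_T)=\tfrac{1}{2}\ln\paren{\frac{\det(M_S)\det(M_T)}{\det(M_{S\cup T})}},$$
which is the first identity.

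For the conditional identity, I would invoke the chain rule $I(X_S;X_{R\cup T})=I(X_S;X_R)+I(X_S;X_T\mid X_R)$, which holds for any joint distribution. Rearranging gives the stated equality. The chain rule itself is proved directly from the definition by writing $I(X_S;X_T\mid X_R)=\E_{X_R}[H(X_S\mid X_R)+H(X_T\mid X_R)-H(X_{S\cup T}\mid X_R)]$, expanding each conditional entropy as a difference of joint entropies, and observing that all cross-terms cancel; no Gaussian-specific structure is needed at this step.

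Finally, for the scaling invariance, under $\Sigma\mapsto c\Sigma$ the submatrix $M_U$ scales as $cM_U$, hence $\det(M_U)$ is multiplied by $c^{|U|}$. The net exponent of $c$ in the determinantal ratio is $|S|+|T|-|S\cup T|=0$ when $S,T$ are disjoint, so the ratio and hence $I(X_S;X_T)$ are preserved; the conditional case follows from the first part via the chain rule. I do not anticipate a substantive obstacle, since every step is classical; the only care required is the dimension bookkeeping in case $S$ and $T$ overlap, where one should either restrict to disjoint $S,T$ (the natural reading of $I(X_S;X_T)$) or carry a $c^{|S\cap T|}$ correction through the calculation.
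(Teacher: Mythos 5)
The paper states this as a \textit{Fact} without supplying a proof, and your derivation—computing $H(X_S)=\tfrac{1}{2}\ln\bigl((2\pi e)^{|S|}\det M_S\bigr)$ from the Gaussian density, substituting into the definitional decomposition $I(X_S;X_T)=H(X_S)+H(X_T)-H(X_{S\cup T})$, and obtaining the conditional identity from the standard chain rule—is the canonical and correct way to establish it. Your observation that the $(2\pi e)$ exponents, and likewise the scaling factor $c^{|S|+|T|-|S\cup T|}$, only cancel when $S\cap T=\emptyset$ is a legitimate point the statement leaves implicit; the paper always applies the fact to disjoint index sets, so no correction term is needed in practice, but it is good that you flagged it.
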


We will use the following chain rule of mutual information.

\begin{obs}\label{obs:michainrule}
Let $X, Y$, $Z$ be $3$ random variables. Then
$I(X;Y)-I(Y;Z) = I(X;Y \mid Z) - I(X;Z \mid Y)$ holds.
\end{obs}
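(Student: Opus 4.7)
The plan is to derive the observation as a direct consequence of the chain rule for mutual information applied to the joint random variable $(Y,Z)$. Concretely, I would expand $I(X;(Y,Z))$ in two different ways by conditioning on $Y$ versus $Z$:
\begin{align*}
I(X;(Y,Z)) &= I(X;Z) + I(X;Y\mid Z),\\
I(X;(Y,Z)) &= I(X;Y) + I(X;Z\mid Y).
\end{align*}
Equating the two right-hand sides and rearranging gives $I(X;Y) - I(X;Z) = I(X;Y\mid Z) - I(X;Z\mid Y)$, which is the intended form of the identity. I note in passing that the $I(Y;Z)$ printed on the left-hand side of the observation appears to be a typographical slip for $I(X;Z)$, since the literal statement fails already when $X=Z$ (the left-hand side becomes $0$ while the right-hand side becomes $-H(X\mid Y)$).

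To justify the two expansions of $I(X;(Y,Z))$ used above, I would fall back on the entropy definitions $I(A;B) = H(A) + H(B) - H(A,B)$ and, using the expectation definition of conditional mutual information from \Cref{sec:prelim}, $I(A;B\mid C) = H(A,C) + H(B,C) - H(C) - H(A,B,C)$. Substituting these into $I(X;Z) + I(X;Y\mid Z)$ produces a telescoping cancellation of the $H(Z)$ and $H(X,Z)$ terms and leaves $H(X) + H(Y,Z) - H(X,Y,Z) = I(X;(Y,Z))$. The second expansion follows by symmetric bookkeeping with $Y$ and $Z$ interchanged. In the Gaussian setting of the paper, all relevant differential entropies are finite whenever the corresponding covariance submatrices are nonsingular, so these manipulations are well-defined.

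The only real subtlety — and therefore the main potential obstacle — is definitional: one must interpret conditional mutual information via the paper's definition as the expectation over the conditioning variable before the entropy identity for $I(A;B\mid C)$ can be invoked. Once this is understood, the observation is an immediate two-line consequence of the chain rule, which is presumably why it is stated as an \emph{observation} rather than a lemma.
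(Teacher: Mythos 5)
The paper states this observation without proof, so there is no in-paper argument to compare against; your chain-rule derivation (expanding $I(X;(Y,Z))$ in two ways and subtracting) is the standard proof and is correct, as is your entropy-bookkeeping justification of those two expansions.

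You are also right that the observation as printed contains a typo, and your $X=Z$ counterexample cleanly establishes this. However, your proposed fix (replacing $I(Y;Z)$ with $I(X;Z)$ on the left) is not quite the one the author intended. Tracing the observation to its sole use, in the proof of \Cref{theo:treelearningrealizableintro} in \Cref{sec:realizable}, one sees it is applied in the form
\[
I(X;Y)-I(Y;Z) \;=\; I(X;Y\mid Z)-I(Y;Z\mid X),
\]
so the slip is in the last term of the right-hand side, which should read $I(Y;Z\mid X)$ rather than $I(X;Z\mid Y)$, with the left side untouched. This version follows by applying the chain rule to $I(Y;(X,Z))$ instead of $I(X;(Y,Z))$. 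The two repaired identities are of course equivalent under swapping the roles of $X$ and $Y$, so your derivation proves the same mathematical fact, but a reader trying to match the observation to the cancellations in the structure-learning proof would want the $I(Y;Z\mid X)$ form.
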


\begin{lem}[Fano's inequality: \cite{yu1997cam}]\label{lem:fanoinequality}
Let $\mathcal{D}$ be a class of distributions, $\abs{\mathcal{D}}\ge 3$. Given iid samples from a random distribution $D$ from $\mathcal{D}$, $\Omega\left(\frac{\log \abs{\mathcal{D}}}{\max\limits_{D_i,D_j\in \mathcal{D}} \kl\paren{D_i||D_j}}\right)$ samples are necessary to correctly guess $D$ with probability $9/10$.
\end{lem}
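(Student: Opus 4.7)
The plan is to derive this bound via the standard information-theoretic recipe: reduce to a Bayesian identification task, apply Fano's inequality to lower bound the mutual information, and then upper bound that same mutual information by the max pairwise KL scaled by $m$ via KL tensorization.

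First, I would introduce a uniform prior on $\mathcal{D}$: let $J$ be drawn uniformly from $\{1,\dots,\abs{\mathcal{D}}\}$ and $X^m=(X_1,\dots,X_m)$ be $m$ iid samples from $D_J$. Any algorithm recovering $D_J$ correctly with probability $\geq 9/10$ yields an estimator $\widehat{J}(X^m)$ with error probability $P_e := \Pr[\widehat{J}\neq J] \leq 1/10$. The classical (basic) Fano inequality, applied to the Markov chain $J \to X^m \to \widehat{J}$, gives
\[
H(J\mid X^m) \leq h_2(P_e) + P_e\,\log(\abs{\mathcal{D}}-1),
\]
where $h_2$ is the binary entropy. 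Since $J$ is uniform, $H(J\mid X^m) = \log\abs{\mathcal{D}} - I(J;X^m)$. Rearranging and plugging in $P_e \leq 1/10$ yields $I(J;X^m) \geq (9/10)\log\abs{\mathcal{D}} - h_2(1/10) = \Omega(\log\abs{\mathcal{D}})$ as soon as $\abs{\mathcal{D}}\geq 3$.

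Next, I would upper bound the mutual information by pairwise KLs. Writing $P_{X^m\mid J=j} = D_j^{\otimes m}$ and $P_{X^m} = \tfrac{1}{\abs{\mathcal{D}}}\sum_j D_j^{\otimes m}$, convexity of $\kl(P\,\|\,\cdot)$ in its second argument gives
\[
I(J;X^m) = \frac{1}{\abs{\mathcal{D}}}\sum_i \kl(D_i^{\otimes m}\,\|\,P_{X^m}) \leq \frac{1}{\abs{\mathcal{D}}^2}\sum_{i,j}\kl(D_i^{\otimes m}\,\|\,D_j^{\otimes m}) \leq \max_{i,j}\kl(D_i^{\otimes m}\,\|\,D_j^{\otimes m}).
\]
The product structure of iid samples gives $\kl(D_i^{\otimes m}\,\|\,D_j^{\otimes m}) = m\cdot\kl(D_i\,\|\,D_j)$, so combining with the Fano lower bound yields $m\cdot\max_{i,j}\kl(D_i\,\|\,D_j) = \Omega(\log\abs{\mathcal{D}})$, which rearranges to exactly the stated sample complexity lower bound.

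The only step needing any real care is the convexity bound $\kl(D_i^{\otimes m}\,\|\,P_{X^m}) \leq \tfrac{1}{\abs{\mathcal{D}}}\sum_j \kl(D_i^{\otimes m}\,\|\,D_j^{\otimes m})$, which follows from joint convexity of KL divergence (equivalently, convexity of $\kl(P\,\|\,\cdot)$ in the second argument). The rest is a textbook combination of Fano's inequality and KL tensorization, with the hypothesis $\abs{\mathcal{D}}\geq 3$ ensuring that the additive constant $h_2(1/10)$ does not absorb the $\log\abs{\mathcal{D}}$ term in the Fano lower bound.
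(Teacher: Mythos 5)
The paper does not prove this lemma; it is stated and cited as a known result from Yu's ``Assouad, Fano, Le Cam'' notes, so there is no internal proof to compare against. Your derivation is the standard and correct one: reduce to Bayesian identification with a uniform prior, apply basic Fano along the chain $J \to X^m \to \widehat{J}$ to obtain $I(J;X^m) \geq (9/10)\log\abs{\mathcal{D}} - h_2(1/10) = \Omega(\log\abs{\mathcal{D}})$ (with $\abs{\mathcal{D}}\geq 3$ absorbing the additive constant), then upper bound $I(J;X^m)$ by $\max_{i,j}\kl\paren{D_i^{\otimes m}\,\|\,D_j^{\otimes m}} = m\max_{i,j}\kl\paren{D_i\|D_j}$ via convexity of $\kl(P\|\cdot)$ and the chain rule for product measures. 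Every step is sound, including the one you flagged as needing care. This is essentially a self-contained reproof of the cited black-box result rather than a re-derivation of an argument the paper actually presents.
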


\begin{lem}[Le Cam's two point method: \cite{yu1997cam}]\label{lem:lecam}
    Let $P$ and $Q$ be two distributions. Given a random unknown distribution $R\in \{P,Q\}$, any algorithm that takes a single sample from $R$ will be incorrect with at least $\paren{\frac{1}{2}-\frac{\mathsf{d_{TV}}(P,Q)}{2}}$ probability in guessing $R$.
\end{lem}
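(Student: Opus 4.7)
The plan is to reduce the claim to a standard binary hypothesis testing calculation under a uniform prior on $\{P,Q\}$. I would view the unknown distribution $R$ as being drawn with equal probability from $\{P,Q\}$, so that any algorithm $\cA$ that outputs a guess $\widehat{R}\in \{P,Q\}$ from a single sample is equivalent to a (possibly randomized) test. The average error probability to be lower bounded is
\[
\Pr[\widehat{R}\neq R] \;=\; \tfrac{1}{2}\,\Pr_{x\sim P}[\widehat{R}(x)=Q] \,+\, \tfrac{1}{2}\,\Pr_{x\sim Q}[\widehat{R}(x)=P].
\]

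For a deterministic algorithm, the test is specified by a measurable subset $S$ of the sample space on which $\widehat{R}=P$ is output, with $\widehat{R}=Q$ output on $S^c$. Then the error probability equals $\tfrac{1}{2}P(S^c)+\tfrac{1}{2}Q(S) = \tfrac{1}{2}-\tfrac{1}{2}\paren{P(S)-Q(S)}$. I would next invoke the variational characterization of total variation distance, namely $\mathsf{d_{TV}}(P,Q) = \sup_{E} \paren{P(E)-Q(E)}$ where the supremum is over measurable events $E$. This yields $P(S)-Q(S) \le \mathsf{d_{TV}}(P,Q)$, and hence $\Pr[\widehat{R}\neq R] \ge \tfrac{1}{2} - \tfrac{\mathsf{d_{TV}}(P,Q)}{2}$ as claimed.

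To handle a randomized algorithm, I would condition on its internal random coins: for each realization of the coins the algorithm reduces to a deterministic test satisfying the above inequality, and taking expectation over the coins preserves the bound by linearity. This completes the argument. The only subtle point is this derandomization step, so I do not anticipate a serious obstacle---the statement is classical and the proof sketched above is essentially the standard one; the real work in the paper is the application of this lemma to construct suitable two-point hard instances for the mutual information testing lower bounds, rather than the lemma itself.
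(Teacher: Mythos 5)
Your proof is correct and is the standard argument for Le Cam's two-point method; the paper itself does not prove this lemma but cites it from \cite{yu1997cam}, so there is no competing in-paper derivation to compare against. Your decomposition into the deterministic case via the variational characterization of total variation distance, followed by averaging over the algorithm's coins, is exactly the classical route.
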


The following folklore result 
states that we can always express a multi-variate Gaussian as a linear system of equations with Gaussian noise.

\begin{obs}[\cite{koller2009probabilistic}]\label{obs:zeromeangaussianstructure}
Let $(X, Y, Z)$ be jointly distributed as a zero-mean Gaussian. Then there exists a set of parameters $a,b,c,\alpha, \beta, \gamma \in \R$ such that $X,Y,Z $ follow the following linear equations: (i) $Z \sim N(0,a^2)$, (ii) $V \sim N(0,b^2)$, (iii) $U \sim N(0,c^2)$, (iv) $X \longleftarrow \alpha Z + V$, (v) $Y \longleftarrow \beta X + \gamma Z + U$.

\end{obs}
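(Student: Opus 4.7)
The plan is to construct the decomposition sequentially by iterated linear least-squares projection, which is essentially a Cholesky-type factorization of the joint covariance matrix along the ordering $(Z,X,Y)$. First I would simply set $a^2 := \Var(Z)$, so that $Z \sim N(0,a^2)$ directly (zero-mean is inherited from the assumption on $(X,Y,Z)$). Next I would project $X$ onto $Z$: set $\alpha := \Cov(X,Z)/\Var(Z)$ in the non-degenerate case (taking $\alpha := 0$ when $\Var(Z)=0$, in which case $Z\equiv 0$ a.s.) and define $V := X - \alpha Z$. By construction $\Cov(V,Z)=0$, and since $(V,Z)$ is a linear transformation of the jointly Gaussian pair $(X,Z)$, it is itself jointly Gaussian; zero covariance therefore upgrades to independence. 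Letting $b^2 := \Var(V)$ gives $V\sim N(0,b^2)$ independent of $Z$, and equation (iv) $X = \alpha Z + V$ holds by the definition of $V$.

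Then I would repeat the construction one level higher, projecting $Y$ onto the pair $(X,Z)$. Pick $(\beta,\gamma)$ by solving the normal equations for the least-squares regression of $Y$ on $(X,Z)$, i.e.\ choose them so that the residual $U := Y - \beta X - \gamma Z$ satisfies $\Cov(U,X) = 0$ and $\Cov(U,Z) = 0$. Writing $\Sigma_{XZ}$ for the $2\times 2$ covariance matrix of $(X,Z)$, this is the system
\[
\Sigma_{XZ}\,(\beta,\gamma)^\top \;=\; (\Cov(Y,X),\,\Cov(Y,Z))^\top,
\]
which in the non-degenerate case admits a unique solution. Since $(U,X,Z)$ is a linear transformation of the jointly Gaussian triple $(X,Y,Z)$, it is jointly Gaussian, and being uncorrelated with both coordinates of $(X,Z)$ implies $U$ is independent of $(X,Z)$. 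Setting $c^2 := \Var(U)$ then gives $U \sim N(0,c^2)$ independent of $(X,Z)$, so equation (v) $Y = \beta X + \gamma Z + U$ follows by construction. In particular $U$ is independent of $Z$ and of $V = X - \alpha Z$, so all three noise variables $Z, V, U$ are mutually independent.

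The only real obstacle is the bookkeeping for degenerate cases, namely when $\Var(Z)=0$ or $\Sigma_{XZ}$ is singular. If $Z\equiv 0$ a.s.\ we may take $\alpha = \gamma = 0$ and carry out the second regression against $X$ alone; if additionally $X$ lies in the linear span of $Z$ almost surely (so $\Sigma_{XZ}$ is rank-deficient), then the normal equations are consistent but underdetermined, and any solution obtained via the Moore-Penrose pseudoinverse of $\Sigma_{XZ}$ yields a residual $U$ that is still uncorrelated with $X$ and $Z$. Apart from this routine case analysis, the proof reduces entirely to two standard ingredients: iterated orthogonal projection in $L^2$, and the fact that uncorrelated jointly Gaussian random variables are independent.
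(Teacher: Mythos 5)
The paper does not supply its own proof of this observation; it is stated as a folklore fact and cited to Koller and Friedman's textbook, so there is no internal argument to compare yours against. Your proof is correct and is the standard one: a Cholesky-type factorization of the covariance along the ordering $(Z,X,Y)$, i.e.\ iterated $L^2$-projection to produce residuals $V$ and $U$ with vanishing cross-covariances, followed by the fact that for jointly Gaussian vectors uncorrelatedness is equivalent to independence, which is exactly what upgrades the orthogonality of the residuals to the mutual independence of $Z$, $V$, $U$ required for the linear model to make distributional sense.

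One small piece of bookkeeping is slightly off in your degenerate-case discussion: you phrase ``$X$ lies in the linear span of $Z$'' as a sub-case reached only ``additionally'' after $Z\equiv 0$, but if $Z\equiv 0$ a.s.\ then $\Sigma_{XZ}$ is already singular no matter what $X$ is. The two degeneracies --- $\Var(Z)=0$, and $X$ being a deterministic linear function of $Z$ with $\Var(Z)>0$ --- are independent conditions, and each should be handled on its own (both are routine, e.g.\ via the Moore--Penrose pseudoinverse as you indicate). This does not affect the substance of the argument.
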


For the brevity of the arguments, we will be using the above representation throughout this work.

\begin{obs}\label{obs:mibasic}
    Let $(X,Y,Z)$ be a $3$-variate Gaussian random variable with means $\paren{
    \mu_X,\mu_Y,\mu_Z
    }
    $ such that the following linear system of equations holds for some $a,b,c,\alpha,\beta, \gamma$:
    (i) $\paren{Z-\mu_Z}\sim N(0,a^2)$ 
    (ii) $V\sim N(0,b^2)$ 
    (iii) $U\sim N(0,c^2)$
    (iv) $\paren{X-\mu_X}\longleftarrow \alpha \paren{Z-\mu_Z} + V$
    (v) $\paren{Y-\mu_Y} \longleftarrow \beta \paren{X-\mu_X}+\gamma\paren{
    Z-\mu_Z
    } + U$. Fix any $z\in \mathbb{R}$. Then the following equations hold:
    \begin{align*}
        &I(X,Y\mid Z)=I(X;Y\mid Z=z)=\frac{1}{2}
    \log \paren{1+\beta^2b^2c^{-2}},\\
    &I(X;Y)=\frac{1}{2}\log \frac{\paren{\alpha^2a^2+b^2}\paren{
    \paren{\alpha\beta+\gamma}^2+\beta^2b^2+c^2
    }}{a^2b^2\gamma^2+c^2\paren{\alpha^2a^2+b^2}}.
    \end{align*}
\end{obs}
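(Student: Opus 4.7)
My plan is to carry out a direct computation using \Cref{fact:gaussianMI}, which expresses Gaussian mutual information in terms of covariance determinants. Since mutual information and conditional mutual information are invariant under translation, I first note that we may shift coordinates and assume $\mu_X = \mu_Y = \mu_Z = 0$; the linear system then reduces to $Z \sim N(0,a^2)$, $X = \alpha Z + V$ with $V \sim N(0,b^2)$, and $Y = \beta X + \gamma Z + U$ with $U \sim N(0,c^2)$, where $Z,V,U$ are mutually independent. From here all quantities become covariance computations in a zero-mean three-variate Gaussian.

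For the conditional statement, the plan is to observe that fixing $Z = z$ kills the $\alpha Z$ and $\gamma Z$ contributions (they become constants and do not affect variances or covariances). Hence under the conditional law we have $X \mid Z=z \sim N(\alpha z, b^2)$ and $Y \mid (X, Z=z) \sim N(\beta X + \gamma z, c^2)$, so the conditional covariance matrix of $(X,Y)$ given $Z=z$ is
\[
\Sigma_{X,Y\mid Z=z} = \begin{pmatrix} b^2 & \beta b^2 \\ \beta b^2 & \beta^2 b^2 + c^2 \end{pmatrix},
\]
which does not depend on $z$. Applying \Cref{fact:gaussianMI} to this $2\times 2$ matrix gives
\[
I(X;Y \mid Z=z) = \tfrac{1}{2}\log\frac{b^2(\beta^2 b^2 + c^2)}{b^2 c^2} = \tfrac{1}{2}\log\!\paren{1 + \beta^2 b^2 c^{-2}},
\]
and since this is independent of $z$, averaging over $Z$ yields the same value for $I(X;Y\mid Z)$.

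For the unconditional statement, I will compute the $2\times 2$ covariance matrix of $(X,Y)$ directly from the structural equations. Using independence of $Z,V,U$, routine computation gives $\Var(X) = \alpha^2 a^2 + b^2$, $\Cov(X,Z) = \alpha a^2$, $\Var(Y) = (\alpha\beta+\gamma)^2 a^2 + \beta^2 b^2 + c^2$, and $\Cov(X,Y) = \beta \Var(X) + \gamma \Cov(X,Z) = \alpha a^2(\alpha\beta+\gamma) + \beta b^2$. Plugging into $I(X;Y) = \tfrac{1}{2}\log\frac{\Var(X)\Var(Y)}{\Var(X)\Var(Y) - \Cov(X,Y)^2}$ yields the claimed expression, provided the denominator simplifies as stated.

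The only nontrivial step is that algebraic simplification. I expand $\Var(X)\Var(Y) - \Cov(X,Y)^2$ and collect terms: the cross term $2\alpha\beta a^2 b^2(\alpha\beta + \gamma)$ from $\Cov(X,Y)^2$ should cancel the $\alpha^2\beta^2 a^2 b^2$ piece and convert the $a^2 b^2(\alpha\beta+\gamma)^2$ contribution from $\Var(X)\Var(Y)$ into $a^2 b^2 \gamma^2$, leaving $c^2(\alpha^2 a^2 + b^2) + a^2 b^2 \gamma^2$. This is the main (and only) potential obstacle: careful bookkeeping through the expansion. Once this identity is verified, the formula follows immediately. As a sanity check, the sign of the numerator--minus--denominator is $\Cov(X,Y)^2 + (\text{positive extra})$ in the right direction, and both formulas recover the product-distribution limit ($\alpha=\beta=\gamma=0 \Rightarrow I=0$) and the chain rule consistency (one can verify $I(X;Y) + I(X;Z\mid Y) = I(X;Z) + I(X;Y\mid Z)$ using \Cref{obs:michainrule} as an additional correctness check).
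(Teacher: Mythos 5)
Your approach is the natural direct computation, and it is essentially correct. The conditional part is fine: conditioning on $Z=z$ reduces the system to $X = \alpha z + V$, $Y = \beta X + \gamma z + U$ with $V,U$ independent, giving conditional covariance matrix $\begin{pmatrix} b^2 & \beta b^2 \\ \beta b^2 & \beta^2 b^2 + c^2 \end{pmatrix}$ with determinant $b^2 c^2$, and the formula follows and is independent of $z$. For the unconditional part, writing $Y = (\alpha\beta+\gamma)Z + \beta V + U$ gives $\Var(Y)=(\alpha\beta+\gamma)^2a^2+\beta^2b^2+c^2$ and $\Cov(X,Y)=\alpha a^2(\alpha\beta+\gamma)+\beta b^2$ as you state, and the determinant collapse you outline does go through: the coefficient of $a^2b^2$ in $\Var(X)\Var(Y)-\Cov(X,Y)^2$ is $\alpha^2\beta^2 + (\alpha\beta+\gamma)^2 - 2\alpha\beta(\alpha\beta+\gamma) = \bigl(\alpha\beta-(\alpha\beta+\gamma)\bigr)^2 = \gamma^2$, leaving $a^2b^2\gamma^2 + c^2(\alpha^2a^2+b^2)$ exactly.

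One thing you should not gloss over, though: your correct $\Var(Y)=(\alpha\beta+\gamma)^2a^2+\beta^2b^2+c^2$ does \emph{not} literally match the second factor in the paper's stated numerator, which reads $(\alpha\beta+\gamma)^2+\beta^2b^2+c^2$ with no $a^2$. You write that plugging in "yields the claimed expression," but it only does so when $a^2=1$; in general the paper's displayed formula is missing an $a^2$ (the denominator, by contrast, carries it correctly). Your derivation is the right one, and you should explicitly say so and flag the discrepancy rather than assert agreement. Finally, a minor point of rigor: for the equality $I(X;Y\mid Z)=I(X;Y\mid Z=z)$ you should note that the conditional covariance matrix is $z$-independent, hence $I(X;Y\mid Z=z)$ is constant in $z$, so its expectation over $Z$ equals that constant — you state this in one clause, which is fine, but it is the step doing the work.
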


Note that the mutual information is invariant to mean translation. In particular, $I(X;Y\mid Z=z)$ does not depend on $z$, although the latter is different from $I(X;Y)$. Also, $I(X;Y\mid Z)=0$ whenever $\beta=0$ (equivalently the edge $(X,Y)$ is missing).

\begin{obs}[Folklore, see \citep{kelner2020learning}]\label{obs:empcovmatrixbound}
Let $P$ be an unknown zero-mean Gaussian distribution defined over $\R^n$ and let $\eps, \delta \in (0,1)$ be two parameters. Given $m$ i.i.d. random vectors $X^{(1)}, \ldots, X^{(m)}$ from $P$, let $\widehat{\Sigma}$ be the empirical covariance matrix. Then if {$m=\Oh\paren{\paren{n+\log \delta^{-1}}\eps^{-2}}$} with probability at least $1-\delta$, the following holds:
$$(1-\eps)\Sigma \preceq \widehat{\Sigma} \preceq (1+\eps)\Sigma$$
\end{obs}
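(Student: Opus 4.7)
The plan is to reduce to the isotropic case by whitening, then apply an $\varepsilon$-net argument together with a Bernstein bound on chi-squared-type sums to control the operator norm of the deviation of the empirical covariance from the identity.

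First, I would set $Y^{(i)} := \Sigma^{-1/2} X^{(i)}$ so that $Y^{(1)},\dots,Y^{(m)}$ are i.i.d.\ samples from $N(0,I_n)$. Writing $\widehat{S} := \frac{1}{m}\sum_{i=1}^m Y^{(i)}(Y^{(i)})^\top = \Sigma^{-1/2}\widehat{\Sigma}\,\Sigma^{-1/2}$, the conclusion $(1-\eps)\Sigma \preceq \widehat{\Sigma}\preceq (1+\eps)\Sigma$ is equivalent to $\|\widehat{S}-I_n\|_{\mathrm{op}} \le \eps$. So the task becomes a standard operator-norm estimate for a sample covariance of an isotropic Gaussian.

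Next, I would use an $\varepsilon$-net argument: fix a $\tfrac{1}{4}$-net $\mathcal{N} \subseteq S^{n-1}$ of the unit sphere with $|\mathcal{N}| \le 9^n$ (a standard volumetric bound). A routine lemma gives $\|\widehat{S}-I_n\|_{\mathrm{op}} \le 2 \max_{v\in\mathcal{N}} |v^\top(\widehat{S}-I_n) v|$. For each fixed unit $v$, I would write $v^\top(\widehat{S}-I_n) v = \frac{1}{m}\sum_{i=1}^m (g_i^2 - 1)$, where $g_i := v^\top Y^{(i)} \sim N(0,1)$ are i.i.d. By \Cref{fact:sqgauss} the random variables $g_i^2 - 1$ are centered and sub-exponential with norm $\Oh(1)$, and their variance is $2$. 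Applying Bernstein's inequality (\Cref{lem:bernsteain}) to the truncated variables (or the standard sub-exponential version), I would obtain
\[
\Pr\!\left[\,|v^\top(\widehat{S}-I_n)v| > \tfrac{\eps}{2}\right] \le 2\exp\!\left(-c\,\min(\eps^2, \eps)\, m\right)
\]
for an absolute constant $c>0$. Taking a union bound over $\mathcal{N}$ yields a total failure probability at most $2\cdot 9^n \exp(-c\eps^2 m)$ (for $\eps\le 1$), which is $\le \delta$ as soon as $m \ge C(n + \log \delta^{-1})\eps^{-2}$ for a suitable constant $C$. Combining with the net bound completes the proof.

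The routine obstacles will be (i) tracking constants through the Bernstein step, which requires the sub-exponential norm bound of \Cref{fact:sqgauss} together with the variance $\Var(g^2 - 1) = 2$, and (ii) justifying the net-to-sphere passage $\|M\|_{\mathrm{op}} \le 2\max_{v\in\mathcal{N}}|v^\top M v|$ for symmetric $M$, which is standard but needs the identity $\|M\|_{\mathrm{op}} = \sup_{v\in S^{n-1}} |v^\top M v|$ plus a telescoping argument against the $\tfrac{1}{4}$-net. Neither is deep; the crux is simply assembling Bernstein plus a net, which is why the result is folklore.
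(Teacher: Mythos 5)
The paper cites this observation as folklore (from Kelner et al.) and supplies no in-paper proof, so there is nothing internal to compare against; your whitening, $1/4$-net, and sub-exponential Bernstein argument is the standard proof of this operator-norm concentration bound and is correct. The one caveat --- which you already flag --- is that the paper's statement of Bernstein's inequality (\Cref{lem:bernsteain}) assumes bounded summands, so you would indeed want either a truncation step or the sub-exponential form of Bernstein; the latter is what the paper itself implicitly uses elsewhere when controlling $\chi^2$-type sums via \Cref{fact:sqgauss}.
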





Now let us proceed to define tree-structured distributions. Let $T$ be an undirected tree on the vertex set $[n]$. Consider a vertex $i \in [n]$ as the root of $T$. Then the rooted orientation of $T$ orients the edges from the root $i$. Given a node $i$, $pa(i)$ denotes the parent node of $i$, or null if $i$ is the root node, and $nd(i)$ denotes the set of vertices not reachable from $i$.

\begin{defi}[Tree-structured distribution]
Let $T$ be a tree on the vertex set $[n]$, and consider a rooted-orientation of $T$, and order the nodes of $T$ in the topological order. A distribution $P$ over $X=(X_1, \ldots,X_n) \in \R^n$ is said to be a \emph{$T$-structured distribution} if every variable $X_i$ is conditionally independent of $\{X_j : j \in nd(i)\}$ given $X_{pa(i)}$. Moreover, the density function of $P$ has the following factorization:
$$P[X=x]:= P[X_1=x_1] \cdot \prod P[X_i=x_i \mid X_{pa(i)}=x_{pa(i)}]$$
A distribution $P$ is said to be \emph{tree-structured} if it is $T$-structured for some tree $T$.
\end{defi}

\begin{lem}[\cite{verma2022equivalence}]\label{lem:treeci}
Let us consider a tree $T$ with the vertex set $[n]$. Moreover, consider a $T$-structured distribution $P$ on $(X_1, \ldots,X_n)$. For any three nodes $i,j,k \in [n]$, if the path from $i$ to $k$ passes through $j$ in $T$, then  $X_i$ and $X_k$ are conditionally independent given $X_j$.
\end{lem}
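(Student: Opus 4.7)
\medskip

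\noindent\textbf{Proof proposal.} The plan is to re-root the tree at $j$ and use the factorization of a $T$-structured distribution to show that the pieces of the tree attached to $j$ become mutually conditionally independent once we condition on $X_j$. Because $j$ lies on the unique path from $i$ to $k$, removing $j$ disconnects $i$ from $k$, so they will lie in different pieces of the decomposition, which will yield the claim.

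\medskip

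\noindent More concretely, let me first argue that a $T$-structured distribution factorizes consistently under any choice of root. The defining factorization $P(x)=P(x_1)\prod_{i}P(x_i\mid x_{pa(i)})$ is with respect to some particular rooted orientation, but by repeated application of Bayes' rule along the unique path between the old root and any chosen vertex $v$, the joint density can be rewritten as $P(x_v)\prod_{i\neq v} P(x_i\mid x_{pa_v(i)})$, where $pa_v$ denotes the parent under the rooting at $v$. (This is the standard observation that on a tree, all rooted orientations give the same Bayesian-network model.) Applying this with $v=j$, I obtain
\[
P(x)=P(x_j)\prod_{i\neq j}P(x_i\mid x_{pa_j(i)}).
\]

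\medskip

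\noindent Let $T_1,\dots,T_d$ be the connected components of $T$ obtained by deleting $j$, and for each $s$ let $r_s$ be the unique neighbour of $j$ inside $T_s$. Under the rooting at $j$, every vertex in $T_s\setminus\{r_s\}$ has its $pa_j$ inside $T_s$, while $pa_j(r_s)=j$. Hence the product above splits as $P(x)=P(x_j)\prod_{s=1}^{d} g_s(x_{T_s},x_j)$, where $g_s(x_{T_s},x_j)=P(x_{r_s}\mid x_j)\prod_{v\in T_s\setminus\{r_s\}}P(x_v\mid x_{pa_j(v)})$ is itself a valid conditional density, so that $g_s(x_{T_s},x_j)=P(x_{T_s}\mid x_j)$ (this follows because applying the same chain-rule argument to the marginal on $T_s\cup\{j\}$ yields the same product, and $g_s$ integrates to $1$ in $x_{T_s}$ for each fixed $x_j$). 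Dividing by $P(x_j)$ gives
\[
P(x_{T_1},\dots,x_{T_d}\mid x_j)=\prod_{s=1}^{d}P(x_{T_s}\mid x_j),
\]
so the blocks $X_{T_1},\dots,X_{T_d}$ are mutually conditionally independent given $X_j$. Since the path from $i$ to $k$ passes through $j$, the vertices $i$ and $k$ lie in distinct components $T_{s_1}\neq T_{s_2}$, and therefore $X_i$ and $X_k$ are conditionally independent given $X_j$, as required.

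\medskip

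\noindent The main obstacle is the re-rooting step: one has to check carefully that each $g_s$ really equals $P(x_{T_s}\mid x_j)$ rather than just some proportional function. I would handle this by marginalizing $P(x)$ over $x_{T_r}$ for all $r\neq s$ using the fact that each $g_r(\cdot,x_j)$ integrates to $1$ in $x_{T_r}$, which then yields $P(x_{T_s},x_j)=P(x_j)\,g_s(x_{T_s},x_j)$ and hence $g_s(x_{T_s},x_j)=P(x_{T_s}\mid x_j)$; the rest of the argument is then a one-line consequence.
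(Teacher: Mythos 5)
The paper does not prove this lemma; it simply cites it from the Verma--Pearl reference, so there is no in-paper argument to compare against. That said, your proof is correct and is the standard argument for the global Markov property of tree-structured distributions: re-root at $j$ (using the fact that all rooted orientations of a tree give the same factorization), observe that the factorization then splits into a product over the components of $T \setminus \{j\}$ of terms $g_s(x_{T_s}, x_j)$, marginalize to identify each $g_s$ with $P(x_{T_s}\mid x_j)$, and conclude block-wise conditional independence given $X_j$. You also correctly flag and handle the one subtle point, namely verifying $g_s = P(x_{T_s}\mid x_j)$ by integrating out the other blocks. The re-rooting step you invoke can alternatively be seen in one line from the root-free form $P(x) = \prod_{(u,v)\in E} P(x_u,x_v)\big/\prod_{w} P(x_w)^{\deg(w)-1}$, but your sequential Bayes-rule flip along the $r$-to-$j$ path is equally valid.
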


\begin{lem}[\cite{bhattacharyya2023near}]\label{lem:spanningtreelemma}
Let $T_1$ and $T_2$ be two spanning trees on the vertex set $[n]$. Consider the symmetric difference of $T_1$ and $T_2$: $F=\{f_1, \ldots, f_{\ell}\}$ and $G=\{g_1, \ldots,g_{\ell}\}$. Then $F$ and $G$ can be paired up, say $\langle f_i, g_i \rangle$ such that $T_1 \cup \{f_i\} \setminus \{g_i\}$ is a spanning tree for all $i \in [\ell]$.   
\end{lem}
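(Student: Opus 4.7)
The plan is to recognize the claim as the strong basis exchange property for the graphic matroid on the complete graph $K_n$ and derive it from Hall's theorem. I would form a bipartite graph $H$ with vertex parts $F$ and $G$, joining $f\in F$ to $g\in G$ iff $T_1\cup\{f\}\setminus\{g\}$ is a spanning tree. Since $|F|=|G|=\ell$ (both $T_1$ and $T_2$ have $n-1$ edges), a perfect matching in $H$ gives the desired pairs $\langle f_i,g_i\rangle$. The key combinatorial fact I would invoke is that adding $f\notin T_1$ to the tree $T_1$ creates a unique cycle $C_f\subseteq T_1\cup\{f\}$, and $T_1\cup\{f\}\setminus\{g\}$ is a spanning tree iff $g\in C_f$; equivalently, $(f,g)$ is an edge of $H$ iff $g\in C_f\cap G$.

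To verify Hall's condition I would argue by contradiction. Suppose some $S\subseteq F$ has $|T|<|S|$ where $T:=N(S)=\bigcup_{s\in S}(C_s\cap G)$. Consider the edge set $H':=(T_1\setminus T)\cup S$ on $n$ vertices, which has $(n-1)-|T|+|S|>n-1$ edges and therefore must contain a cycle $C$. I claim $C\subseteq T_2$, which contradicts $T_2$ being a tree. To see this, use that the fundamental cycles $\{C_s:s\in S\}$ form a basis of the cycle space of $T_1\cup S$; since $C$ is a cycle in this graph, we can write $C=\triangle_{s\in C\cap S}\, C_s$ (symmetric difference of edge sets). Each $C_s$ with $s\in S$ satisfies $C_s\cap G\subseteq T$ by definition of $T$, so $C\cap G\subseteq T$. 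On the other hand, $C\subseteq (T_1\setminus T)\cup S$, and $S\cap G=\emptyset$ (since $S\subseteq F\subseteq T_2$ while $G=T_1\setminus T_2$), hence $C\cap G\subseteq G\setminus T$. The two inclusions force $C\cap G=\emptyset$, so every edge of $C$ lies in $S\cup(T_1\cap T_2)\subseteq T_2$, yielding the desired contradiction.

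The main obstacle is precisely the identification $C=\triangle_{s\in C\cap S}C_s$, which amounts to the standard fact that a spanning tree's fundamental cycles generate the cycle space of a graph, together with careful bookkeeping tracking which edges sit in $F$, $G$, or $T_1\cap T_2$. Once Hall's condition is established, any perfect matching of $H$ reads off as the required pairing $\langle f_i,g_i\rangle$ for which $T_1\cup\{f_i\}\setminus\{g_i\}$ is a spanning tree, completing the proof.
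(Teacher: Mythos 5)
The paper cites this lemma from Bhattacharyya et al.~(2023) without reproducing a proof, so there is no in-paper argument to compare against. Your proposal is correct: the reduction to a perfect matching in the bipartite graph on $F$ and $G$, the identification $N(f)=C_f\cap G$ via fundamental cycles, the edge count $|(T_1\setminus T)\cup S|>n-1$ forcing a cycle $C$, the cycle-space decomposition $C=\triangle_{s\in C\cap S}C_s$ (using that the fundamental cycles of $T_1$ with respect to the non-tree edges $S$ form a basis of the cycle space of $T_1\cup S$), and the resulting inclusions $C\cap G\subseteq T$ and $C\cap G\subseteq G\setminus T$ forcing $C\subseteq (T_1\cap T_2)\cup S\subseteq T_2$, are all sound. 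This is the standard Hall's-theorem proof of the strong basis-exchange property specialized to the graphic matroid, which is exactly how the cited result is normally established.
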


\begin{defi}[$\eps$-approximate tree]
Consider a distribution $P$ defined over $\R^n$. A tree $T$ on the vertex set $[n]$ is said to be \emph{$\eps$-approximate} if there exists a $T$-structured distribution $Q_T$ such that the following holds:
$$\kl(P||Q_T) \leq \eps + \min_{\mbox{tree} \ T'} \min_{T'-\mbox{structured dist.} \ Q'} \kl(P||Q')$$
\end{defi}

\begin{theo}[\cite{chow1968approximating}]\label{theo:chowliualgo}
Let $P$ be a distribution over $\R^n$, and $T$ be a tree over the vertex set $[n]$. Let $P_T$ be the best distribution of $P$ for the tree $T$. Then we have:
$\kl(P||P_T) = J_P -\mathsf{wt}_P(T),$
where $J_P=\sum_v H(P_v) - H(P)$, a quantity independent of $T$, and $\mathsf{wt}_P(T)= \sum_{(X,Y) \in T}I(X;Y)$.
\end{theo}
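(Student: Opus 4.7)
The plan is to first identify the optimal $T$-structured approximation $P_T$ in closed form, and then compute $\kl(P\|P_T)$ directly by expanding the logarithm along the tree factorization. This is essentially the classical Chow-Liu argument, adapted to the continuous (Gaussian) setting, but nothing in the algebra uses discreteness, so it carries over verbatim.

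\textbf{Step 1: Characterize $P_T$.} First, I would argue that the best $T$-structured distribution for $P$ is the one whose univariate and edge-wise marginals coincide with those of $P$. Fixing a root $r$ of $T$ and orienting edges away from it, a $T$-structured density factors as $Q_T(x) = Q_r(x_r) \prod_{i \neq r} Q_{i\mid pa(i)}(x_i\mid x_{pa(i)})$. Writing $\kl(P\|Q_T) = -H(P) - \E_P[\log Q_T]$ and expanding $\E_P[\log Q_T]$ as a sum of one-dimensional and two-dimensional expectations (each involving only one factor $Q_{i\mid pa(i)}$ or $Q_r$), I can apply Gibbs' inequality coordinatewise to conclude that each $Q_{i\mid pa(i)}$ and $Q_r$ should equal the corresponding conditional/marginal of $P$. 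Thus $P_T$ is the unique $T$-structured distribution matching all edge marginals of $P$.

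\textbf{Step 2: Rewrite the factorization symmetrically.} Using $P_{i\mid pa(i)}(x_i\mid x_{pa(i)}) = P_{i,pa(i)}(x_i,x_{pa(i)})/P_{pa(i)}(x_{pa(i)})$ and telescoping over the rooted tree, the density of $P_T$ can be rewritten in a root-independent form:
\begin{equation*}
P_T(x) \;=\; \prod_{v \in [n]} P_v(x_v) \prod_{(u,v) \in T} \frac{P_{u,v}(x_u,x_v)}{P_u(x_u)\,P_v(x_v)}.
\end{equation*}

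\textbf{Step 3: Compute $\kl(P\|P_T)$.} Taking $\log$ and then $\E_P[\cdot]$, and using the fact that expectations under $P$ of functions of $X_v$ (respectively $(X_u,X_v)$) reduce to expectations under the marginal $P_v$ (respectively $P_{u,v}$), I obtain
\begin{equation*}
\E_P[\log P_T(X)] \;=\; -\sum_{v \in [n]} H(P_v) \;+\; \sum_{(u,v) \in T} I(X_u;X_v).
\end{equation*}
Subtracting this from $-H(P)$ gives
\begin{equation*}
\kl(P\|P_T) \;=\; \Bigl(\sum_v H(P_v) - H(P)\Bigr) - \sum_{(u,v)\in T} I(X_u;X_v) \;=\; J_P - \mathsf{wt}_P(T),
\end{equation*}
as desired.

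\textbf{Main obstacle.} The one nontrivial step is Step 1, justifying that the edge-marginal-matching distribution really minimizes KL among all $T$-structured densities. In the Gaussian setting one must be a little careful that all the conditional densities $P_{i\mid pa(i)}$ are well-defined (which follows from $\Sigma$ being positive definite) and that the integrals in the Gibbs-inequality argument converge; modulo these standard regularity remarks, however, the argument is entirely algebraic and proceeds exactly as in the discrete case.
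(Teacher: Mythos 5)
The paper cites this result to Chow and Liu (1968) without providing its own proof, so there is no paper proof to compare against. Your argument is the standard Chow--Liu derivation correctly adapted to densities: Step~1 (each $Q_{i\mid pa(i)}$ should match the corresponding conditional of $P$, by applying Gibbs' inequality term-by-term in the rooted factorization), Step~2 (root-independent symmetric rewriting over vertices and edges), and Step~3 (taking $\E_P[\log P_T]$ and recognizing the entropies and pairwise mutual informations) are all correct, and the regularity caveats you flag (existence of densities, positive-definite $\Sigma$, finite differential entropies) are precisely the ones that make the discrete argument carry over verbatim to the Gaussian setting. This is the intended proof.
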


\begin{algorithm}[ht]
\caption{Chow-Liu Algorithm}
\label{alg:chowliu}


\For{$1\leq i < j \leq n$}{

$\widehat{I}(X;Y) \gets$ estimated mutual information between the nodes $X,Y$, estimated using~\Cref{alg:condmitester}. \

}

$G \gets$ complete weighted graph on the vertex set $[n]$, where the weight of every edge $(X,Y)$ is $\widehat{I}(X;Y)$. 

$T \gets$ a maximum spanning tree of $G$. \

\Return T. \
                           
\end{algorithm}

\section{Overview of our results}\label{sec:overview}
In this section, we present overviews of our results. We start with our results for estimating and testing the mutual and conditional mutual information, and then proceed to the structure learning results for both realizable and non-realizable settings. For the brevity of the calculations, we will be working with the zero-mean Gaussian random variables, and finally extend them to arbitrary-mean setting using reductions, which require only twice the number of samples as that of the zero-mean setting.  We present a general-purpose algorithm (\Cref{alg:condmitester}) for (conditional) mutual information (MI) estimator and tester for the zero-mean setting, called with a suitable number of samples ($m$). The formal proofs are presented in \Cref{sec:mitesterproof}.

\paragraph{Description of \Cref{alg:condmitester}}
Given $m$ samples depending upon the problem, \Cref{alg:condmitester} first computes the correlation coefficients (either $\wh{\rho}_{X,Z}$ or $\wh{\rho}_{\widetilde{X}, \widetilde{Y}}$), and then estimates $\wh{I}(X;Z)$ and $\wh{I}(X;Y \mid Z)$. For MI estimation problem, it returns $\wh{I}(X;Z)$, and for the testing of (conditional) MI problems, it suitably checks the values and decides accordingly.

\subsection{Overview of mutual information testers}

\paragraph{Additive estimation of mutual information}
We start with our result of additive estimation of mutual information. Refer to the algorithm presented in \Cref{alg:condmitester} and the notations therein, called with $m=\Oh(1/\eps^2 + \log 1/\delta)$. Given two zero-mean Gaussian random variables $X$ and $Y$, we show that if we take $\Oh(1/\eps^2 + \log 1/\delta)$ samples, the empirical mutual information $\widehat{I}(X;Y)$ is $\eps$-close to the true mutual information $I(X;Y)$ with probability at least $1-\delta$. We first estimate the correlation coefficient between $X$ and $Y$,
\[\widehat{\rho}_{X,Y}:=\frac{\paren{\bar{X}\cdot\bar{Y}}}{\sqrt{\bar{X}\cdot\bar{X}}\sqrt{\bar{Y}\cdot\bar{Y}}}\]
and then we estimate $\widehat{I}(X;Y)=-1/2 \log (1- \widehat{\rho}_{X,Y}^2)$.  We also show that the dependence on $\eps$ for additive estimation is tight.
We now present the hard instances for our lower bound. Let us assume $ U_0, V_0, U_1, V_1 \sim N(0,1)$ be four iid random bits. The null hypothesis $H_0$ is defined as: (i) $X_0 \gets U_0$, (ii) $Y_0 \gets (1/2+ \eps) \cdot X_0 + V_0$. The alternate hypothesis $H_1$ is defined as: (i) $X_1 \gets U_1 $, (ii) $Y_1 \gets (1/2- \eps) \cdot X_1 + V_1$.

\paragraph{Mutual information tester}
Now we present our result for the mutual information tester.
Given a zero-mean Gaussian joint distribution $(X,Z)$ and $\eps \in (0,1)$,
our goal here is to distinguish between if $I(X;Z)=0$ or $I(X;Z) \geq \eps$. Following \Cref{obs:zeromeangaussianstructure}, this is equivalent to consider the structure where $Z \sim N(0,a^2), V \sim N(0,b^2)$ and $X \gets \alpha Z+ V$. Suppose we have taken $m$ pairs of samples $(X_1,Z_1), \ldots, (X_m,Z_m)$ where $m=\Oh(1/\eps \log 1/\delta)$. We use \emph{Ordinary Least Square (OLS)} algorithm to estimate 
$\widehat{a}$ and $\widehat{\alpha}$, the estimates of $a$ and $\alpha$, respectively. Then, we estimate $\Var[\widehat{Z}]=\frac{1}{m}\paren{\bar{Z}\cdot\bar{Z}}$, $\Var[\widehat{X}]=\frac{1}{m}\paren{\bar{X}\cdot\bar{X}}$, as well as $\Cov[\widehat{X},\widehat{Z}]=\frac{1}{m}\paren{\bar{X}\cdot\bar{Z}}$. Once we have $\Var[\widehat{X}], \Var[\widehat{Z}]$, $\Cov[\widehat{X},\widehat{Z}]$, we estimate the correlation coefficient $\widehat{\rho}_{XZ}= \Cov[\widehat{X},\widehat{Z}]/\sqrt{\Var[\widehat{X}] \Var[\widehat{Z}]}$. We show that $\wh{\rho}_{X,Z}=\rho_{X,Z}\pm\Theta(\sqrt{\eps})$ with high probability using $\Oh(\eps^{-1} \log \delta^{-1})$ samples from Bernstein's inequality.
Since $I(X;Z)= -1/2 \log(1-\rho_{X,Z}^2)$ and $\wh{I}(X;Z):= -1/2 \log(1-\wh{\rho}_{X,Z}^2)$, to distinguish if $I(X;Z)=0$ or $I(X;Z) \geq \eps$, it is sufficient to test if $\widehat{I}({X;Z}) \geq \frac{1}{8}{\eps}$ or not. The algorithm is presented in~\Cref{alg:condmitester}, called with $m=\Oh(1/\eps \log 1/\delta)$.

\SetKwComment{Comment}{/* }{ */}


    \begin{algorithm}
\caption{Empirical-CMI$(m)$}\label{alg:condmitester}
Get $m$ samples $\bar{X}=(X_1,\dots,X_m), \bar{Y}=(Y_1,\dots,Y_m), \bar{Z}=(Z_1,\dots,Z_m)$.


\Comment{Unconditional MI tester}

$\wh{\rho}_{X,Z}\gets\frac{\paren{\bar{X}\cdot\bar{Z}}}{\sqrt{
    \paren{\bar{X}\cdot\bar{X}}\paren{\bar{Z}\cdot\bar{Z}}}}$

$\wh{I}(X;Z) \gets -\frac{1}{2}\log\paren{1 - {\wh{\rho}_{X,Z}}^2}$


\If{$\widehat{I}(X,Z) \geq \eps/8$}{
Output  $I(X;Z) \geq \eps$. \
}

\Else{

Output $I(X;Z)=0$. \ 

}

\Comment{Conditional MI tester }

$\wh{\alpha} \gets \frac{\paren{\bar{X}\cdot\bar{Z}}}{\paren{\bar{Z}\cdot\bar{Z}}}$, $\wh\beta \gets \frac{\paren{\bar{X}\cdot\bar{Y}}\paren{\bar{Z}\cdot\bar{Z}} - \paren{\bar{X}\cdot\bar{Z}}\paren{\bar{Y}\cdot\bar{Z}}}{\paren{\bar{X}\cdot\bar{X}}\paren{\bar{Z}\cdot\bar{Z}} - \paren{\bar{X}\cdot\bar{Z}}^2}$

$\wh\gamma \gets\frac{\paren{\bar{Y}\cdot\bar{Z}}\paren{\bar{X}\cdot\bar{X}} - \paren{\bar{X}\cdot\bar{Z}}\paren{\bar{X}\cdot\bar{Y}}}{\paren{\bar{X}\cdot\bar{X}}\paren{\bar{Z}\cdot\bar{Z}} - \paren{\bar{X}\cdot\bar{Z}}^2}$

$\widetilde{X}\gets \bar{X}-\widehat{\alpha} \bar{Z}$, $\widetilde{Y}\gets  \bar{Y}-(\widehat{\alpha} \widehat{\beta} + \widehat{\gamma}) \bar{Z}$

$\wh{\rho}_{\wt{X},\wt{Y}} \gets 
        \frac{
        \paren{\wt{X}\cdot\wt{Y}}
        }{
        \sqrt{
        \paren{\wt{X}\cdot\wt{X}}
        \paren{\wt{Y}\cdot\wt{Y}}
        }
        }$


$\wh{I}(X;Y \mid Z) \gets -\frac{1}{2}\log\paren{1 - \wh{\rho}^2_{\wt{X},\wt{Y}}}$

\If{$\wh{I}(X;Y \mid Z) \geq \eps/8$}{
Output  $I(X;Y|Z) \geq \eps$. \

} 



\Else{

Output $I(X;Y|Z)=0$. \

}





                           
\end{algorithm}

\paragraph{Conditional mutual information tester}
Now we discuss our result of the conditional mutual information testing of three zero-mean Gaussian random variables $X,Y,Z$. More formally, given  $\eps \in (0,1)$, we want to distinguish between $I(X;Y \mid Z)=0$ from $I(X;Y \mid Z) \geq \eps$. One crucial ingredient of our analysis is the fact that $I(X;Y | Z=z)= I(X;Y | Z=0)$.
Thus, from \Cref{obs:zeromeangaussianstructure}, we consider the following linear system, where $Z \sim N(0,a^2), V \sim N(0,b^2), U \sim N(0,c^2)$, $X \gets \alpha Z + V$ and $Y \gets \beta X + \gamma Z + U$. Thus, to distinguish $I(X;Y \mid Z)=0$ from $I(X;Y \mid Z) \geq \eps$, it is sufficient to estimate the correlation coefficient between 
$X- \alpha Z$ and $Y- (\alpha \beta + \gamma) Z$ and threshold it appropriately as in the case of the unconditional tester discussed earlier. However, as we do not know the parameters $\alpha, \beta$ and $\gamma$, we again apply regression techniques to first estimate $\widehat{\alpha}, \widehat{\beta}$ and $\widehat{\gamma}$, and then estimate the correlation coefficient between $X- \widehat{\alpha} Z$ and $Y - (\widehat{\alpha} \widehat{\beta}+ \widehat{\gamma})Z$. We also prove that the total error in this procedure is not too much again using Bernstein's concentration inequality. This requires $\Oh(1/\eps \log 1/\delta)$ samples in total and is presented in~\Cref{alg:condmitester}, called with $m=\Oh(1/\eps \log 1/\delta)$.

\paragraph{Chain rule of empirical conditional mutual information}
Now we discuss another crucial ingredient of our technical results: the chain rule of empirical conditional mutual information. Given three random variables $X,Y$, and $Z$, it is a very well-known and useful fact that
$$I(X;Y) + I(X;Z \mid Y) = I(X;Z) + I(X;Y \mid Z).$$ 
However, as we will be estimating the (conditional) mutual information $\widehat{I}(X;Y), \widehat{I}(X;Z)$, as well as $\widehat{I}(X;Y \mid Z)$ and $\widehat{I}(X;Z \mid Y)$ using regression, it is not immediately clear if the chain rule continues to hold for the estimated mutual and conditional mutual information between $X,Y$ and $Z$. We show that the chain rule remains true in this scenario, that is:
$$\wh{I}(X;Z)+\wh{I}(X;Y \mid Z) =\wh{I}(X;Y)+\wh{I}(X;Z \mid Y)$$
Specifically, we show that our MI and CMI estimators can be seen as that of a linear Gaussian model whose coefficients are such that its covariance matrix coincides with the empirical covariance matrix of the samples. Therefore, the empirical chain rule immediately follows from the chain rule of the aforesaid linear model. We will crucially use this result for structure learning in the realizable setting (\Cref{theo:treelearningrealizableintro}).

\paragraph{Lower bound of mutual information tester}
Now we prove that our result in \Cref{lem:miubintro} is tight, in the sense that given two zero-mean Gaussian random variables $X$ and $Y$, in order to distinguish between the cases if $I(X;Y)=0$ or $I(X;Y) \geq \eps$, $\Omega(1/\eps)$ samples are necessary. The proof follows from Fano's inequality, where we design a null hypothesis $H_0$ and an alternate hypothesis $H_1$, and we show that distinguishing between $H_0$ and $H_1$ requires $\Omega(1/\eps)$ samples. For the hard instances of lower bound, assume $ U_0, V_0, U_1, V_1 \sim N(0,1)$ be four iid random bits. The null hypothesis $H_0$ is defined as: (i) $X_0 \gets U_0$, (ii) $Y_0 \gets V_0$. The alternate hypothesis $H_1$ is defined as: (i) $X_1 \gets U_1$, (ii) $Y_1 \gets \sqrt{\eps} \cdot X_1 + V_1$.
Note that in contrast with the additive estimation of mutual information, this is quadratically easier. Interestingly, this lower bound construction also translates to a similar lower bound for the conditional mutual information tester and shows that the dependence on $\eps$ in the sample complexity of our conditional mutual information tester is also tight.

\subsection{Overview of structure learning result}

Now we will discuss our structure learning results. Let us start with the non-realizable setting.

\vspace{-8pt}

\paragraph{Non-realizable structure learning}
Here the distribution $P$ from which we are obtaining samples may not be a tree-structured distribution and our goal is to output an $\eps$-approximate tree $T$ of $P$. 
We use a black-box reduction from the work of \cite{bhattacharyya2023near}. They showed that if there exists an algorithm $\cA$ for estimating the mutual information between two random variables using $m(\eps,\delta)$ samples, then there exists an algorithm for learning the $\eps$-approximate tree $T$ that takes $m(\eps/n, \delta/n^2)$ samples from $P$. We also follow the same route. We will use our additive estimator of the mutual information between any two zero-mean Gaussian random variables from \Cref{cl:miestboundintro}, and obtain an algorithm for learning the $\eps$-approximate tree of $P$. This requires $\Oh(n^2/\eps^2 + \log n/\delta)$ samples from $P$.
We would like to point out that the authors in \cite{bhattacharyya2023near} studied the problem for the discrete distribution setting.
As a result, the additive mutual information tester that we use in our setting is different from that of \cite{bhattacharyya2023near}.

\vspace{-8pt}

\paragraph{Realizable structure learning}
Next, we consider the realizable setting, when the unknown distribution $P$ is promised to be a tree-structured distribution. Similar to non-realizable case, we follow the same route of 
\cite{bhattacharyya2023near}. There it was shown that if there exists an algorithm $\cA$ that takes $m(\eps,\delta)$ samples to distinguish between the cases if $I(X;Y \mid Z)=0$ from $I(X;Y \mid Z)\geq \eps$, then there exists an algorithm that takes $m(\eps/n, \delta/n^3)$ samples from $P$ and outputs an $\eps$-approximate tree $T$ of $P$. We will use our result on conditional mutual information tester from \Cref{lem:cmiubintro} that requires $\Oh(1/\eps \log 1/\delta)$ samples. This gives us an algorithm that takes $\Oh(n/\eps \log n/\delta)$ samples from $P$ in total. It is interesting to note that the sample complexity of structure learning in the realizable case is quadratically better as compared to the non-realizable case. Moreover, in \cite{bhattacharyya2023near}, the authors crucially used the chain rule of mutual information for their correctness analysis. However, as we will be estimating the conditional mutual information using regression, we crucially use the chain rule of empirical mutual information for correctness (\Cref{lem:cmichainruleempintro}).

\vspace{-11.35pt}

\paragraph{Lower bounds of structure learning results}

We finally show that our structure learning results for both non-realizable and realizable settings are tight with respect to $n$ and $\eps$, ignoring poly-logarithmic factors. In both these lower bound proofs, we construct a pair of hard instances, and we finally apply Fano's inequality along with coding theoretic arguments to show the desired lower bounds. For both these lower bounds, we will first prove the lower bound for $n=3$, and then generalize the lower bound for $n=3 \ell$ for some positive integer $\ell$. 

We prove that $\Omega(n^2/\eps^2)$ samples are necessary for outputting the $\eps$-approximate tree in the non-realizable case. For the hard instances of our lower bound in this setting for $n=3$ nodes, consider the following: let us assume $B_1, B_2, U_1, U_2, V_1, V_2, W_1,  W_2 \sim N(0,1)$ be eight iid random bits. The first tree $R_1$ is defined as follows:
\begin{align*}
 X_1 & \gets \paren{1+ \frac{\eps}{n}} \cdot B_1 + U_1 \\
 Y_1 & \gets \paren{1+ \frac{2\eps}{n}} \cdot B_1 + V_1 \\
 Z_1  & \gets \paren{1+ \frac{3 \eps}{n}} \cdot B_1 + W_1
\end{align*}

The second tree $R_2$ is defined as: 
\begin{align*}
X_2 & \gets \paren{1+ \frac{\eps}{n}} \cdot B_2 + U_2 \\
Y_2 & \gets \paren{1+ \frac{3 \eps}{n}} \cdot B_2 + V_2 \\
Z_2 & \gets \paren{1+ \frac{2\eps}{n}} \cdot B_2 + W_2
\end{align*}


In contrast to the non-realizable case, we show that $\Omega(n/\eps)$ samples are necessary for realizable setting. For $n=3$ nodes, consider the hard instances: let $B_1, B_2, U_1,  U_2, V_1, V_2, W_1, W_2 \sim N(0,1)$ be eight iid bits and $\gamma=1/2$. The first tree $R_1$ is defined as follows:
\begin{align*}
    Y_1  & \gets U_1\\
    Z_1  & \gets \paren{1-\gamma}Y_1 + W_1 \\
    X_1 & \gets \sqrt{\frac{\eps}{n}} Z_1 + V_1
\end{align*}


The second tree $R_2$ is defined as follows:
\begin{align*}
Y_2 & \gets U_2 \\
    Z_2 & \gets \paren{1-\gamma}Y_2 + W_2\\
    X_2 & \gets \sqrt{\frac{\eps}{n}} Y_2 + V_2\\    
\end{align*}
\paragraph{Extending the results to non-zero-mean Gaussian setting}
The testing and structure-learning results discussed above can be extended to the case when the Gaussian has non-zero mean. We consider the difference between two independent samples to reduce the problem to the zero-mean case. The detailed analysis can be found in \Cref{sec:nonzeromean} and \Cref{sec:structurelearnnonzeromean}, respectively.

\section{Proofs of mutual information testers}\label{sec:mitesterproof}
In this section, we will present all the proofs of our results on the additive estimator of mutual information and (conditional) mutual information testers.

\subsection{Transfer theorem for the non-zero mean case}
\label{sec:nonzeromean}

We start by presenting a transfer theorem that gives mutual information estimation/testing results for the non-zero-mean setting assuming such results for the zero-mean setting.

\begin{lem}\label{lem:diffsample}
    Let $(X_1,Y_1,Z_1)$ and $(X_2,Y_2,Z_2)$ be two independent and identically distributed 3-variate Gaussians. Let $(X',Y',Z')=(X_1-X_2,Y_1-Y_2,Z_1-Z_2)$. Let $M$ and $M'$ be the covariance matrices for $(X_1,Y_1,Z_1)$ and $(X',Y',Z')$ respectively.
    Then, the following holds:
    \begin{enumerate}
        \item $M'=2\cdot M$,
        \item $I(X_1;Z_1)=I(X';Z')$,
        \item $I(X_1;Y_1Z_1)=I(X';Y'Z')$,
        \item $I(X_1;Y_1 \mid Z_1)=I(X';Z' \mid Z')$.
    \end{enumerate}
\end{lem}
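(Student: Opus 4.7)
The plan is to reduce everything to the scale-invariance property of Gaussian (conditional) mutual information already noted in Fact~\ref{fact:gaussianMI}, together with one covariance computation.

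First, I would establish claim (1) by a direct second-moment calculation. Since $(X_1,Y_1,Z_1)$ and $(X_2,Y_2,Z_2)$ are independent and identically distributed, for any two coordinates $A,B \in \{X,Y,Z\}$,
\[
\Cov(A_1-A_2,\,B_1-B_2)=\Cov(A_1,B_1)+\Cov(A_2,B_2)=2\Cov(A_1,B_1),
\]
where the cross terms vanish by independence. Applied entrywise this gives $M'=2M$. Moreover, because $(X',Y',Z')$ is an affine combination of two independent Gaussian vectors, it is itself a 3-variate Gaussian, so Fact~\ref{fact:gaussianMI} applies to it.

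Next, I would use the Gaussian mutual information formula to prove parts (2) and (3) simultaneously. Let $S,T \subseteq \{X,Y,Z\}$ be disjoint with $|S|=s$, $|T|=t$, so $|S\cup T|=s+t$. Then $M'_S = 2M_S$, $M'_T = 2M_T$, $M'_{S\cup T}=2M_{S\cup T}$, giving
\[
\det(M'_S)\det(M'_T)=2^{s+t}\det(M_S)\det(M_T), \qquad \det(M'_{S\cup T})=2^{s+t}\det(M_{S\cup T}).
\]
The factors $2^{s+t}$ cancel inside the logarithm in Fact~\ref{fact:gaussianMI}, yielding $I(X'_S;X'_T)=I(X_S;X_T)$. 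Specializing to $(S,T)=(\{X\},\{Z\})$ gives (2), and to $(S,T)=(\{X\},\{Y,Z\})$ gives (3).

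Finally, for part (4) (where the statement contains an apparent typo and should read $I(X';Y'\mid Z')$), I would invoke the identity $I(X;Y\mid Z)=I(X;YZ)-I(X;Z)$ recorded in Fact~\ref{fact:gaussianMI} and subtract (2) from (3). There is no real obstacle in the argument; the only nontrivial step is the covariance computation in (1), after which parts (2)--(4) are immediate consequences of the scale-invariance of Gaussian mutual information under multiplication of the covariance matrix by a positive scalar.
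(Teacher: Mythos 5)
Your proof is correct and follows essentially the same route as the paper: establish $M'=2M$ by the covariance computation, then invoke the scale-invariance of Gaussian (conditional) mutual information, which is already recorded in the final remark of Fact~\ref{fact:gaussianMI}. You spell out the determinant cancellation explicitly and correctly note the typo in item (4) (it should read $I(X';Y'\mid Z')$), but the underlying argument is identical to the paper's.
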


\begin{proof}
Note that $\Var(X')=2\Var(X_1)$, $\Cov(X',Y')=2\Cov(X_1,Y_1)$ and so on.

 Therefore, the covariance matrix for $(X',Y',Z')$ is clearly $2\cdot M$. This gives us the desired equalities from Fact~\ref{fact:gaussianMI}.
\end{proof}

Now we present our transfer results from zero to non-zero mean.

\begin{lem}\label{lem:transferMI}
    Let $(X,Y,Z)$ be jointly distributed as a Gaussian. Suppose, there is a tester that takes $m$ samples and satisfies the testing guarantees of either \Cref{cl:miestboundintro},  \Cref{lem:miubintro} or \Cref{lem:cmiubintro}, whenever $(X,Y,Z)$ are all zero-mean. Then, there is another tester that takes $2m$ samples and satisfies the guarantees of the same result where $(X,Y,Z)$ are not necessarily zero-mean.
\end{lem}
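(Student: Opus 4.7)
The plan is to use \Cref{lem:diffsample} as a black-box reduction. Given sample access to the (possibly non-zero-mean) Gaussian $(X,Y,Z)$, I would draw $2m$ i.i.d.\ samples and pair them arbitrarily into $m$ disjoint pairs $\{(X_1^{(i)},Y_1^{(i)},Z_1^{(i)}),(X_2^{(i)},Y_2^{(i)},Z_2^{(i)})\}_{i=1}^{m}$. For each pair, form the coordinate-wise difference $(X'^{(i)},Y'^{(i)},Z'^{(i)}):=(X_1^{(i)}-X_2^{(i)},Y_1^{(i)}-Y_2^{(i)},Z_1^{(i)}-Z_2^{(i)})$. Since linear combinations of independent Gaussians are Gaussian, each $(X'^{(i)},Y'^{(i)},Z'^{(i)})$ is a zero-mean Gaussian vector, and because the pairs are disjoint the $m$ difference samples are mutually independent and identically distributed. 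Thus they constitute a valid $m$-sample input to the zero-mean tester.

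Next, feed these $m$ zero-mean samples to whichever of the testers from \Cref{cl:miestboundintro}, \Cref{lem:miubintro}, or \Cref{lem:cmiubintro} is being transferred, and simply return its output as the answer for $(X,Y,Z)$. Correctness follows from the mutual-information identities in \Cref{lem:diffsample}: $I(X;Z)=I(X';Z')$ and $I(X;Y\mid Z)=I(X';Y'\mid Z')$. Therefore, for the additive-estimation statement \Cref{cl:miestboundintro}, the estimate $\wh{I}(X';Z')$ returned by the zero-mean algorithm satisfies $|\wh{I}(X';Z')-I(X';Z')|\le \eps$ with probability $1-\delta$, which is exactly the guarantee $|\wh{I}(X;Z)-I(X;Z)|\le\eps$ we need. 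For the testing statements \Cref{lem:miubintro} and \Cref{lem:cmiubintro}, the hypotheses $I(X;Z)=0$ vs.\ $I(X;Z)\ge\eps$ (resp.\ $I(X;Y\mid Z)=0$ vs.\ $I(X;Y\mid Z)\ge\eps$) translate exactly to the same hypotheses on the primed variables, so the zero-mean tester's correctness carries over verbatim, with identical constants in the thresholds $\eps/8$ and $\eps/20$.

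The sample complexity is $2m$ by construction, and the probability of error is unchanged because we make exactly one call to the zero-mean tester on a valid i.i.d.\ input of size $m$. The only thing to double-check is that the transformation does not destroy the \emph{covariance structure} in a way that matters: by \Cref{lem:diffsample} the covariance of $(X',Y',Z')$ is $2M$, a constant rescaling of the original; since all the mutual-information quantities (and their empirical analogues in \Cref{alg:condmitester}, being ratios that are invariant under scaling of the covariance, cf.\ \Cref{fact:gaussianMI}) are unaffected, this rescaling is harmless. I do not expect any significant obstacle; the argument is essentially a one-line reduction once \Cref{lem:diffsample} is in hand, and the mildest subtlety is the factor-of-two loss in sample complexity, which the statement already accommodates.
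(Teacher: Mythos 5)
Your proof is correct and follows exactly the same approach as the paper: pair up the $2m$ samples, take coordinate-wise differences to obtain $m$ i.i.d.\ zero-mean Gaussian samples, and invoke \Cref{lem:diffsample} to transfer the mutual-information guarantees. The paper's own proof is a terse one-liner saying the same thing; your write-up just spells out the bookkeeping (disjoint pairing for independence, scale-invariance of the MI quantities) more carefully.
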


\begin{proof}
    We take the difference of two samples from a non-zero mean Gaussian. This transformation gives us a zero-mean sample while preserving the original mutual informations from \Cref{lem:diffsample}.
\end{proof}

\subsection{Additive estimation of mutual information} \label{sec:addtivemiest}

In this subsection, we give an algorithm for additive estimation of mutual information. Our main result in this section is the following.

\additiveUB*

Before proving the above lemma, we need some definitions and results discussed below.






Let $\widehat{\Sigma}_{X,Y}$ be the empirical covariance between the variables $X$ and $Y$ and $\widehat{M}_{X, Y}$ be a $2 \times 2$ matrix considering columns and rows of $X$ and $Y$ variables then,
$$
\widehat{M}_{X,Y} = \begin{bmatrix}
\widehat\Sigma_{X,X} & \widehat\Sigma_{X,Y} \\
\widehat\Sigma_{Y,X} & \widehat\Sigma_{Y,Y} \\
\end{bmatrix}
$$

Let $\widehat{\rho}_{X,Y}$ be the empirical correlation coefficient between the variables $X$ to $Y$. From \Cref{alg:condmitester}: 

\begin{equation}\label{eqn:rhoest}
\widehat{\rho}_{X,Y} = \frac{\widehat{\Sigma}_{X,Y}}{\sqrt{\widehat{\Sigma}_{X,X}\widehat{\Sigma}_{Y,Y}}}    
\end{equation}
$$ \widehat{I}(X;Y) = -\frac{1}{2}\log(1-\widehat{\rho}_{X,Y}^2)$$



Using the expression of $\widehat{\rho}_{X,Y}$ from \Cref{eqn:rhoest}, we have
\begin{equation}\label{eqn:estmutualinfxy}
\widehat{I}(X;Y) = -\frac{1}{2}\log{\left(\frac{\det\paren{{\widehat{M}_{X,Y}}}}{\widehat{\Sigma}_{X,X}\widehat{\Sigma}_{Y,Y}}\right)}    
\end{equation}

Now we have the following corollary.

\begin{coro}\label{coro:detdiffbound}
Let $(X,Y)$ be a zero-mean two-variate Gaussian with the covariance matrix $M_{X,Y}$. Let $\Sigma_{X,X}$ and $\Sigma_{Y,Y}$ denote the variances of $X$ and $Y$ respectively. With probability at least $1-3 \delta$, if $m>\Oh(1/\eps^2+ \log 1/\delta)$, the following holds:
$$-2\eps\leq \frac{1}{2}\log{\paren{\frac{\det\paren{{M_{X,Y}}}}{\Sigma_{X,X}\Sigma_{Y,Y}}}}- \frac{1}{2}\log{\paren{\frac{\det\paren{{\widehat{M}_{X,Y}}}}{\widehat{\Sigma}_{X,X}\widehat{\Sigma}_{Y,Y}}}} \leq 2\eps$$   
\end{coro}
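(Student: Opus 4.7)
The plan is to show that the ratio $\det(\wh{M}_{X,Y})/(\wh{\Sigma}_{X,X}\wh{\Sigma}_{Y,Y})$ is \emph{multiplicatively} close to $\det(M_{X,Y})/(\Sigma_{X,X}\Sigma_{Y,Y})$; the desired additive bound on the halved logarithms then follows from $|\log((1+\eps')/(1-\eps'))|\le 3\eps'$ for small $\eps'$. Since both ratios are invariant under positive rescaling of $X$ and $Y$, I would first reduce to the case $\Sigma_{X,X}=\Sigma_{Y,Y}=1$ and $\Sigma_{X,Y}=\rho$, so that $\det(M_{X,Y})=1-\rho^2$.

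The workhorse is \Cref{obs:empcovmatrixbound} applied with $n=2$ at scale $\eps'=c\eps$ for a sufficiently small absolute constant $c$: with probability $1-\delta$ using $m=\Oh(\eps^{-2}+\log\delta^{-1})$ samples, this yields the PSD sandwich $(1-\eps')M_{X,Y}\preceq \wh{M}_{X,Y}\preceq (1+\eps')M_{X,Y}$. This single event gives three consequences at once: reading off the $(1,1)$ and $(2,2)$ entries gives $\wh{\Sigma}_{X,X},\wh{\Sigma}_{Y,Y}\in[1-\eps',1+\eps']$, while PSD-monotonicity of the determinant (the preliminary observation on determinants of PSD matrices) combined with $\det(cA)=c^2\det(A)$ for $2\times 2$ matrices yields $\det(\wh{M}_{X,Y})\in[(1-\eps')^2,(1+\eps')^2]\det(M_{X,Y})$. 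Combining the three bounds gives
$$\frac{\det(\wh{M}_{X,Y})}{\wh{\Sigma}_{X,X}\wh{\Sigma}_{Y,Y}}\in\left[\left(\frac{1-\eps'}{1+\eps'}\right)^2,\left(\frac{1+\eps'}{1-\eps'}\right)^2\right]\cdot\frac{\det(M_{X,Y})}{\Sigma_{X,X}\Sigma_{Y,Y}},$$
and applying $(1/2)\log$ with $\eps'$ chosen a small constant multiple of $\eps$ yields the desired $\pm 2\eps$ bound. The $1-3\delta$ failure probability in the statement can be accommodated by instead using three separate Bernstein applications (one for each coordinate of $\wh{M}_{X,Y}$) with a union bound, but a single application of \Cref{obs:empcovmatrixbound} already achieves probability $1-\delta$.

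The main obstacle is the multiplicative control of $\det(\wh{M}_{X,Y})$: a naive entrywise Bernstein on $\wh{\Sigma}_{X,Y}$ gives only an absolute error $|\wh{\Sigma}_{X,Y}-\rho|\le \eps$, so in the regime where $\rho$ is near $1$ the cancellation in $\wh{\Sigma}_{X,X}\wh{\Sigma}_{Y,Y}-\wh{\Sigma}_{X,Y}^2$ could be dominated by noise and fail to estimate $\det(M_{X,Y})=1-\rho^2$ within a constant factor. The PSD sandwich avoids this precisely because it adapts the error to the eigenvalue scale of $M_{X,Y}$; an alternative route that also yields the $\pm 2\eps$ log bound is to run Bernstein in the rotated coordinates $(X, Y-\rho X)$, whose variances $1$ and $1-\rho^2$ automatically carry the correct multiplicative scales, and then transport the bound back via the determinant-preserving change of variables.
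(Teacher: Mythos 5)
Your proposal is correct and takes essentially the same route as the paper: both proofs apply \Cref{obs:empcovmatrixbound} to obtain the PSD sandwich $(1-\eps')M_{X,Y}\preceq \wh M_{X,Y}\preceq (1+\eps')M_{X,Y}$, and then read off multiplicative control of the two diagonal entries (by testing the quadratic form on the coordinate vectors) and of the determinant (via the PSD determinant-monotonicity observation together with $\det(cA)=c^2\det A$ for $2\times 2$ matrices), before taking logarithms. Two minor remarks worth keeping: you correctly observe that a \emph{single} application of \Cref{obs:empcovmatrixbound} already yields all three consequences simultaneously with probability $1-\delta$, so the paper's $1-3\delta$ is a looser accounting than necessary; and your choice of working at scale $\eps'=c\eps$ for a small absolute constant $c$ is slightly more careful than the paper, which applies the sandwich at scale $\eps$ directly even though $\log\frac{1+\eps}{1-\eps}$ exceeds $2\eps$. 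Your final paragraph about why a naive entrywise Bernstein bound on $\wh\Sigma_{X,Y}$ alone would fail when $\rho\to 1$, and how the operator-norm sandwich sidesteps this, is a useful piece of intuition that the paper does not spell out.
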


\begin{proof}
First note that $\widehat{M}_{X,Y}$ is a positive semi-definite matrix. Following \Cref{obs:empcovmatrixbound}, we know that the following holds with probability at least $1-\delta$:
$$(1-\eps)M_{X,Y} \preceq \widehat{M}_{X,Y} \preceq (1+\eps)M_{X,Y}$$

Since $\widehat{M}_{X,Y}$ is a PSD matrix, 
we know that with probability at least $1-3 \delta$, the following three equations hold:
\begin{equation*}
(1-\eps)^2\det\paren{{M_{X,Y}}}\leq \det\paren{{\widehat{M}_{X,Y}}} \leq (1+\eps)^2\det{\paren{M_{X,Y}}}    
\end{equation*}

\begin{equation*}
(1-\eps)\Sigma_{X,X}\leq \widehat{\Sigma}_{X,X} \leq (1+\eps)\Sigma_{X,X}    
\end{equation*}

\begin{equation*}
(1-\eps)\Sigma_{Y,Y}\leq \widehat{\Sigma}_{Y,Y} \leq (1+\eps)\Sigma_{Y,Y}    
\end{equation*}

Using the expression of $\widehat{I}(X;Y)$ in \Cref{eqn:estmutualinfxy}, along with the above equations, we can say that, with probability $1-3 \delta$, the following holds:
$$-2\eps\leq \frac{1}{2}\log{\paren{\frac{\det\paren{{M_{X,Y}}}}{\Sigma_{X,X}\Sigma_{Y,Y}}}}- \frac{1}{2}\log{\paren{\frac{\det\paren{{\widehat{M}_{X,Y}}}}{\widehat{\Sigma}_{X,X}\widehat{\Sigma}_{Y,Y}}}} \leq 2\eps$$

This completes the proof of the corollary.
\end{proof}

Now we will prove that our additive mutual information estimator from \Cref{cl:miestboundintro} is tight, in the sense that, $\Omega(1/\eps^2)$ samples are necessary to estimate $I(X;Y)$ up-to an additive error of $\eps$. The formal statement of our result is as follows:

\begin{lem}[Additive MI estimation lower bound]\label{theo:miestlb}
Let $X, Y$ be two zero-mean Gaussian random variables. Given a parameter $\eps \in (0,1)$, in order to estimate $I(X;Y)$ up-to an additive error of $\eps$, with probability at least $9/10$, $\Omega(1/\eps^2)$ samples are necessary.    
\end{lem}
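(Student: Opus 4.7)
The plan is to apply Le Cam's two-point method (\Cref{lem:lecam}) to the pair of hypotheses sketched in the overview. Fix a sufficiently large absolute constant $c>0$, and let $U_0,V_0,U_1,V_1\sim N(0,1)$ be independent. Under $H_0$, set $X_0=U_0$ and $Y_0=(1/2+c\eps)X_0+V_0$; under $H_1$, set $X_1=U_1$ and $Y_1=(1/2-c\eps)X_1+V_1$. Let $P_0,P_1$ be the resulting zero-mean bivariate Gaussian joint distributions, with covariance matrices $\Sigma_0,\Sigma_1$. A direct computation shows $\det(\Sigma_0)=\det(\Sigma_1)=1$.

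First I would quantify the mutual information gap. By \Cref{fact:gaussianMI} and the unit-determinant observation, $I(X_i;Y_i)=\tfrac{1}{2}\log\bigl((1/2\pm c\eps)^2+1\bigr)$, so a short Taylor expansion gives $\bigl|I(X_0;Y_0)-I(X_1;Y_1)\bigr|=\Theta(c\eps)$, which exceeds $3\eps$ once $c$ is chosen large enough. Consequently any estimator $\widehat{I}$ satisfying $|\widehat{I}-I(X;Y)|\le\eps$ with probability $\ge 9/10$ can be turned into a tester distinguishing $H_0$ from $H_1$ with the same success probability, simply by thresholding $\widehat{I}$ at the midpoint of the two true mutual information values. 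Hence a lower bound for the testing problem transfers to the estimation problem.

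Second I would upper bound $\kl(P_0\|P_1)$. Using the zero-mean Gaussian KL formula (\Cref{obs:klhellingerbounds}) together with $\det(\Sigma_0)=\det(\Sigma_1)=1$,
\begin{equation*}
\kl(P_0\|P_1)=\tfrac{1}{2}\bigl(\tr(\Sigma_1^{-1}\Sigma_0)-2\bigr).
\end{equation*}
Writing $a=1/2+c\eps$, $b=1/2-c\eps$, a direct $2\times 2$ matrix computation yields $\tr(\Sigma_1^{-1}\Sigma_0)=(a-b)^2+2=4c^2\eps^2+2$, so $\kl(P_0\|P_1)=2c^2\eps^2=O(\eps^2)$. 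By tensorization of KL divergence, $\kl(P_0^{\otimes m}\|P_1^{\otimes m})=O(m\eps^2)$, and Pinsker's inequality (\Cref{lem:pinkser}) gives $\mathsf{d_{TV}}(P_0^{\otimes m},P_1^{\otimes m})=O(\eps\sqrt{m})$.

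Finally I would invoke Le Cam's two-point method (\Cref{lem:lecam}): success probability $\ge 9/10$ for distinguishing $P_0^{\otimes m}$ from $P_1^{\otimes m}$ forces $\mathsf{d_{TV}}(P_0^{\otimes m},P_1^{\otimes m})\ge 4/5$, and combining with the above TV bound yields $m=\Omega(1/\eps^2)$, as desired. The only genuinely delicate step is picking the constant $c$ so that the MI gap exceeds $2\eps$ while $\kl(P_0\|P_1)$ remains $O(\eps^2)$; the rest is routine $2\times 2$ linear algebra and a Taylor expansion of $\log(1+x)$, and no parts of the argument rely on the estimator being the empirical one from \Cref{alg:condmitester}, so the bound applies to every algorithm.
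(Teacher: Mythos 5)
Your proposal is correct and follows essentially the same route as the paper: the same pair of hard instances from the overview (with a scaled coupling coefficient $1/2 \pm c\eps$ instead of $1/2\pm\eps$), verification that $\det\Sigma_0 = \det\Sigma_1 = 1$, the Gaussian $\kl$ formula from \Cref{obs:klhellingerbounds} yielding $\kl(P_0\|P_1) = \Theta(\eps^2)$, tensorization plus Pinsker's inequality, and Le Cam's two-point method. The paper computes the MI gap by a direct Taylor expansion and concludes it is $\Theta(\eps)$; your version factors out a large constant $c$ so that the gap visibly exceeds $3\eps$, which makes the reduction from estimation to testing slightly cleaner to state (the paper's construction with $c=1$ actually already yields a gap of roughly $\frac{16}{5}\eps > 2\eps$, so both work), but the underlying argument is the same.
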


Before directly proceeding to the proof of the above lemma, we will first prove that a similar lower bound also holds for distinguishing between two linear models. Finally, we will prove our main result by the hardness of distinguishing these two models.

\begin{proof}

Let us assume $ U_0, V_0, U_1, V_1 \sim N(0,1)$ be four iid random bits. The null hypothesis $H_0$ is defined as follows:
\begin{align*}
 X_0 & \gets U_0 \\
 Y_0 & \gets \paren{\frac{1}{2}+ \eps} \cdot X_0 + V_0
\end{align*}

The alternate hypothesis $H_1$ is defined as follows:
\begin{align*}
 X_1 & \gets U_1 \\
 Y_1 & \gets \paren{\frac{1}{2}- \eps} \cdot X_1 + V_1
\end{align*}

The covariance matrix of $H_0$ is the following matrix:
$$\Sigma_0 = 
\begin{bmatrix}
1 & \frac{1}{2} + \eps  \\
\frac{1}{2} + \eps & \frac{5}{4} + \eps + \eps^2 
\end{bmatrix}
$$

Similarly, the covariance matrix of $H_1$ is the following:
$$\Sigma_1 = 
\begin{bmatrix}
1 & \frac{1}{2} - \eps  \\
\frac{1}{2} - \eps & \frac{5}{4} - \eps + \eps^2 
\end{bmatrix}
$$

Thus, $\det(\Sigma_0)=\det(\Sigma_1)=1$.
From \Cref{obs:klhellingerbounds}, we know that the $\kl$ between $H_0$ and $H_1$ can be expressed as follows:    
\begin{equation}\label{eqn:klmiestlb}
\kl(H_0 || H_1) = \frac{1}{2} \left(\tr(\Sigma_1^{-1} \Sigma_0) - 2 + \ln\paren{\frac{\det(\Sigma_1)}{\det(\Sigma_0)}}\right)    
\end{equation}

Putting the values of $\Sigma_0$ and $\Sigma_1$ in \Cref{eqn:klmiestlb}, we can say that $\kl(H_0||H_1) = 2 \eps^2$. In a similar manner, we also have $\kl(H_1||H_0) =  2 \eps^2$ as well. So, we have the following:
\begin{eqnarray*}
I(X_0; Y_0) &=& \frac{1}{2} \log\left(\frac{\sigma_{X_0}^2\sigma_{Y_0}^2}{\det (\Sigma_0)}\right)\\ &=& \frac{1}{2} \log \left(\frac{\frac{5}{4}+ \eps + \eps^2 }{\frac{1}{4}+ \eps + \eps^2}\right) \\
\end{eqnarray*}

Similarly, we also have
\begin{eqnarray*}
I(X_1; Y_1) &=& \frac{1}{2} \log\left(\frac{\sigma_{X_1}^2\sigma_{Y_1}^2}{\det (\Sigma_1)}\right)\\ &=& \frac{1}{2} \log \left(\frac{\frac{5}{4}- \eps + \eps^2 }{\frac{1}{4}- \eps + \eps^2}\right)
\end{eqnarray*}

So, 
\begin{eqnarray*}
I(X_1;Y_1) - I(X_0;Y_0) &=& \frac{1}{2}\log \left(
\frac{1-\frac{4}{5}\eps+\frac{4}{5}\eps^2}{1+\frac{4}{5}\eps+\frac{4}{5}\eps^2}
\right) \\ && \hspace{20 pt} + \frac{1}{2}\log
\left(
\frac{1+4\eps+4\eps^2}{1-4\eps+4\eps^2}
\right) \\ &=& \Theta(\eps).
\end{eqnarray*}

Consider any algorithm $\mathcal{A}$ that tries to distinguish $H_0$ from $H_1$ using $m$ samples. Let $D_0$ and $D_1$ be the distributions of $m$ samples from $H_0$ and $H_1$, respectively. Then, from \Cref{lem:pinkser}, we can say that $\mathsf{d_{TV}}(D_0,D_1)\le \sqrt{0.5\kl(D_0||D_1)}\le \sqrt{m\eps^2}$. Therefore, if $m=o(\eps^{-2})$, $\mathsf{d_{TV}}(D_0,D_1)=o(1)$ and hence $\mathcal{A}$ must err with at least $\frac{1}{2}-o(1)$ probability following \Cref{lem:lecam}.


Now we will prove our main lower bound result by contradiction. Consider an algorithm $\cA$ that can additively estimate $I(X;Y)$ with an error of $\Theta(\eps)$ using $o(1/\eps^2)$ samples.
From our construction above, since the gap between $I(X_0;Y_0)$ and $I(X_1;Y_1)$ is $\Theta(\eps)$, we could also use $\cA$ to distinguish between $H_0$ and $H_1$. However, this would contradict our lower bound from above. This implies that $\Omega(1/\eps^2)$ samples are necessary to additively estimate $I(X;Y)$ upto an error of $\eps$ with probability at least $9/10$. This completes the proof of our lower bound result.
\end{proof}

\begin{proof}[Proof of \Cref{cl:miestboundintro}]
We first assume the zero-mean case. From the \Cref{lem:transferMI} it will give us a result for the non-zero case with only twice the number of samples.

From \Cref{coro:detdiffbound}, we can say that with probability at least $1-\delta$, $\Oh(1/\eps^2 + \log 1/\delta)$ samples are sufficient to obtain $\widehat{I}(X;Y)$ such that $\size{\widehat{I}(X;Y)-I(X;Y)} \leq \eps$.

The tightness of $\eps$ on the sample complexity follows from \Cref{theo:miestlb}. 
\end{proof}

\subsection{Mutual Information testing upper bound}\label{sec:miub}

In this section, we design a tester for mutual information between two Gaussian random variables. In contrast to the previous section, it turns out that testing is quadratically easier compared to the estimation of mutual information. 
Our main result in this subsection is the following guarantee of the mutual information tester using $m=\Oh\paren{\eps^{-1}\log \delta^{-1}}$ samples.

\mitestub*

We show that $\Oh(\eps^{-1}\log \delta^{-1})$ samples suffice for~\Cref{alg:condmitester} to achieve the testing guarantee given in the above lemma.
We prove this by first showing that there exists an absolute constant $c$ for any $0<\lambda<1$  such that, whenever~\Cref{alg:condmitester} is run with $m\ge 8c^{-1}\lambda^{-2}\log \paren{6\delta^{-1}}$ samples, the estimated correlation will be close to the true correlation:
\begin{align}
    \rho_{X,Z}\paren{\frac{1}{\eta_X \eta_Z}} - \frac{\lambda}{\eta_X \eta_Z} \leq \wh{\rho}_{X,Z} \leq \rho_{X,Z}\paren{\frac{1}{\eta_X \eta_Z}} + \frac{\lambda}{\eta_X \eta_Z},\label{eqn:estcorr}
\end{align}
with probability at least $(1-\delta)$ for some $\eta_X,\eta_Z \in \sqrt{1\pm\lambda}$.

\begin{proof}
{Similar to before, we will prove this above result under the assumption that the Gaussian random variables are zero-mean. Then using  \Cref{lem:transferMI}, we will get a result for arbitrary mean Gaussian random variables as well with only twice the number of samples.}

    Without loss of generality, from \Cref{obs:mibasic}, let $(X,Z)$ follow the following equations for some $a,b,\alpha$.
    \begin{align*}
        Z &\sim N(0,a^2)\\
        V &\sim N(0, b^2)\\
        X &\longleftarrow \alpha Z + V
    \end{align*}

    We take $m$ samples from $(X,Z)$ :
    \[ 
    \begin{bmatrix} \bar{X} & \bar{Z} \end{bmatrix} = \begin{bmatrix}
    X_1 & Z_1 \\
    \vdots & \vdots \\
    X_m & Z_m
    \end{bmatrix}
    \]
    Our estimator will be $\wh{I}(X;Z) = \frac{1}{2}\log\frac{\paren{\bar{X}\cdot\bar{X}}\paren{\bar{Z}\cdot\bar{Z}}}{\paren{\bar{X}\cdot\bar{X}}\paren{\bar{Z}\cdot\bar{Z}} - \paren{\bar{X}\cdot\bar{Z}}^2}$.
    We will show that our estimated correlation coefficient $\wh{\rho}_{X,Z}:=\frac{\paren{\bar{X}\cdot\bar{Z}}}{\sqrt{
    \paren{\bar{X}\cdot\bar{X}}\paren{\bar{Z}\cdot\bar{Z}}}}$ is a good enough approximation for $\rho_{X,Z}:=\frac{\alpha a}{\sqrt{\alpha^2a^2 + b^2}}$.

    Note that from \Cref{fact:sqgauss}, $\frac{Z\cdot Z}{a^2},\frac{X\cdot X}{\alpha^2a^2 + b^2},\frac{\paren{X\cdot Z}}{a\sqrt{\alpha^2a^2 + b^2}}$ are all {sub-exponential random variables} each having sub-exponential norm at most $\frac{8}{3}$. Therefore from Bernstein's inequality (\Cref{lem:bernsteain}), we have the following:
    \begin{align*}
    \max &\left\{
    \mathrm{Pr}\left(\abs{\frac{1}{a^2} \paren{\bar{Z}\cdot\bar{Z}}-m} \geq t \right),
    \mathrm{Pr}\left(\abs{\frac{1}{\alpha^2a^2 + b^2}\paren{\bar{X}\cdot\bar{X}} -m}\geq t \right),    \mathrm{Pr}\left(\frac{\abs{\paren{\bar{X}\cdot\bar{Z}}-m\alpha a^2}}{a\sqrt{\alpha^2a^2 + b^2}} \geq t \right)
    \right\}\\
    &\leq 2\exp\left(-c \min\left(\frac{9t^2}{64m}, \frac{3t}{8}\right)\right)
    \end{align*}
    We set $t = m\lambda$ for some $0<\lambda<1$ to get the following:
    \begin{align*}
    \max &\left\{
    \mathrm{Pr}\left(\abs{\frac{1}{a^2} \paren{\bar{Z}\cdot\bar{Z}} - m} \geq m\lambda \right),
    \mathrm{Pr}\left(\abs{\frac{1}{\alpha^2a^2 + b^2}\paren{\bar{X}\cdot\bar{X}}-m} \geq m\lambda \right),    \mathrm{Pr} \left(\frac{\abs{\paren{\bar{X}\cdot\bar{Z}}-m\alpha a^2}}{a\sqrt{\alpha^2a^2 + b^2}} \geq m\lambda \right)
    \right\}\\
    &\leq 2\exp\left(-c \min\left(\frac{9m\lambda^2}{64}, \frac{3m\lambda}{8}\right)\right).
    \end{align*}
    Therefore, $\frac{1}{m}\paren{\bar{Z}\cdot\bar{Z}} \in a^2(1\pm\lambda)$, $\frac{1}{m}\paren{\bar{X}\cdot\bar{X}} \in (\alpha^2a^2+b^2)(1\pm\lambda)$ and  $\frac{1}{m}\paren{\bar{X}\cdot\bar{Z}} \in  \alpha a^2 \pm a\sqrt{\alpha^2a^2 + b^2}\lambda$ together with probability at least $(1-\delta)$ whenever $m \geq \max\braces{\frac{64 \log \frac{6}{\delta}}{9c\lambda^2}, \frac{8\log\frac{3}{\delta}}{3c\lambda}}$.
    Henceforth, we condition on this event. 
    
    Let $\eta_Z^2:=\frac{1}{m}\frac{\paren{\bar{Z}\cdot \bar{Z}}}{a^2}$ and $\eta_X^2 :=\frac{1}{m}\frac{\paren{\bar{Z}\cdot \bar{Z}}}{\paren{\alpha^2 a^2+b^2}}$.
    By previous discussion, $\eta_Z^2\in \paren{1\pm \lambda}$ and $\eta_X^2\in \paren{1\pm \lambda}$.    Hence, 
    \[
    \frac{\alpha a^2 - a\sqrt{\alpha^2a^2 + b^2}\lambda}{\sqrt{(\alpha^2a^2+b^2)} \eta_X \eta_Z a}
    \le \frac{\paren{\bar{X} \cdot \bar{Z}}}{\sqrt{\paren{\bar{X} \cdot \bar{X}}\paren{\bar{Z} \cdot \bar{Z}}}}
    \le \frac{\alpha a^2 + a\sqrt{\alpha^2a^2 + b^2}\lambda}{\sqrt{(\alpha^2a^2+b^2)}\eta_X \eta_Z a}.
    \]

    Or,
    \[
    \rho_{X,Z}\paren{\frac{1}{\eta_X \eta_Z}} - \frac{\lambda}{\eta_X \eta_Z} \leq \wh{\rho}_{X,Z} \leq \rho_{X,Z}\paren{\frac{1}{\eta_X \eta_Z}} + \frac{\lambda}{\eta_X \eta_Z} 
    \]

    We define $\wh{I}(X;Z):=\frac{1}{2}\log \paren{\frac{1}{1-\wh{\rho}_{X,Z}^2}}$ as our estimated mutual information. 
    Henceforth, we will fix $\lambda=\sqrt{\frac{\eps}{50}}$ to prove the lemma and assume $\eps<c$ for some suitable constant $c$. 
    
    Firstly, assume $I(X;Z)=0$. Then, $\rho_{X,Z}=0$ and hence  \[
    -\frac{\lambda}{1+\lambda} \leq {\wh{\rho}_{X,Z}} \leq \frac{\lambda}{1-\lambda}.
    \]
    Therefore, $\abs{\wh{\rho}_{X,Z}}\le \frac{\sqrt{\frac{\eps}{50}}}{1-\sqrt{\frac{\eps}{50}}} < 0.7$ and hence $\wh{I}(X;Z)< \wh{\rho}_{X,Z}^2 \le  \frac{\frac{\eps}{25}}{\paren{1-\sqrt{\frac{\eps}{50}}}^2} < \frac{\eps}{20}$.

    On the other hand, if $I(X;Z)\ge \eps$, then 
    $\rho_{X,Z}^2=1-e^{-2I(X;Z)} \ge 1-e^{-2\eps}>25\lambda^2$. Suppose, $\rho_{X,Z}>5\lambda$, then $\wh{\rho}_{X,Z}\ge {\frac{\rho_{X,Z}-\lambda}{\eta_X \eta_Z}} \ge {\frac{\rho_{X,Z}-\lambda}{\paren{1+\lambda}}}$. Otherwise, if $\rho_{X,Z}<-5\lambda$, then $\wh{\rho}_{X,Z}\le {\frac{\rho_{X,Z}+\lambda}{\eta_X \eta_Z}} \le {\frac{\rho_{X,Z}+\lambda}{\paren{1+\lambda}}}$. Combining the two cases, we can say that $\abs{\wh{\rho}_{X,Z}}\ge \frac{4}{5(1+\lambda)}\abs{\rho_{X,Z}}\ge \frac{1}{3} \sqrt{\eps}$. Therefore, $\wh{I}(X;Z)\ge \frac{1}{2}\log \frac{1}{1-\eps/3}\ge \frac{\eps}{6}\ge \frac{\eps}{8}.$ The proof of the time complexity of $\Oh(m)$ is direct and skipped.
\end{proof}

\subsection{Conditional mutual information testing upper bound}\label{sec:cmiub}
Next, we present our analysis of the conditional mutual information tester in \Cref{alg:condmitester}. Firstly, from \Cref{obs:zeromeangaussianstructure}, note that the joint distribution $(X,Y,Z)$ can be seen as following the linear system of equations given below. 
   \begin{align*} 
        Z & \sim N(0,a^2) \\
        V & \sim N(0,b^2) \\
        U & \sim N(0,c^2) \\  
        X & \longleftarrow \alpha Z + V\\
        Y & \longleftarrow \beta X + \gamma Z + U \\
    \end{align*}
We have $I(X;Y\mid Z)=I(X;Y\mid Z=0)$ from \Cref{obs:mibasic}. Note that conditioned on $Z=0$, the above system reduces to the following simpler system:
   \begin{align*} 
        V & \sim N(0,b^2)\\
        U & \sim N(0,c^2) \\  
        X & \longleftarrow V\\
        Y & \longleftarrow \beta V + U \\
    \end{align*}
The latter system has the same behavior as that of $(X-\alpha Z)$ and $(Y-(\alpha\beta+\gamma)Z)$. In particular, we get:\[
I(X;Y\mid Z) = I\left((X-\alpha Z,Y-(\alpha\beta+\gamma)Z\right).
\]

Therefore, if somebody gives us $\alpha,\beta,\gamma$, we can test $I(X; Y\mid Z)$ very easily by running the tester from \Cref{sec:miub} on the two transformed random variables: $(X-\alpha Z)$ and $(Y-(\alpha\beta+\gamma)Z)$ i.e. by performing ordinary least squares on the dataset $(\bar{X}-\alpha \bar{Z})$ and $(\bar{Y}-(\alpha\beta+\gamma)\bar{Z})$. However, we do not have the true values of $\alpha,\beta,\gamma$. Therefore, we take a two-step approach. First we estimate $\alpha,\beta,\gamma$ using the ordinary least squares on the dataset $(\bar{X},\bar{Y},\bar{Z})$ to get $\wh{\alpha},\wh{\beta},\wh{\gamma}$. Then we perform another ordinary least squares on the transformed dataset $(\bar{X}-\wh{\alpha} \bar{Z})$ and $(\bar{Y}-(\wh{\alpha}\wh{\beta}+\wh{\gamma})\bar{Z})$ using the regression coefficients obtained in the first step. This is exactly what \Cref{alg:condmitester} performs.

We start by showing the following fact.

\begin{fact}\label{fact:alphabetagammahat}
    Let $\wh{\alpha},\wh{\beta},\wh{\gamma}$ be the estimated regression coefficients between the random variable pairs $(X,Z),(Y,X),(Y,Z)$, respectively in \Cref{alg:condmitester}. Then the following equality holds:
    \begin{align*}        \paren{\wh{\alpha}\wh{\beta}+\wh{\gamma}}=\frac{\paren{\bar{Y}\cdot\bar{Z}}}{\paren{\bar{Z}\cdot\bar{Z}}}.
    \end{align*}
\end{fact}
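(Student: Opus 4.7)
The claim is a purely algebraic identity between empirical inner products, so my plan is to simply plug in the definitions of $\wh{\alpha},\wh{\beta},\wh{\gamma}$ from~\Cref{alg:condmitester} and verify that everything collapses. To keep the bookkeeping clean I would introduce the shorthand $A=\paren{\bar{X}\cdot\bar{X}}$, $B=\paren{\bar{Z}\cdot\bar{Z}}$, $C=\paren{\bar{X}\cdot\bar{Z}}$, $D=\paren{\bar{X}\cdot\bar{Y}}$, $E=\paren{\bar{Y}\cdot\bar{Z}}$, under which the definitions become $\wh{\alpha}=C/B$, $\wh{\beta}=(DB-CE)/(AB-C^2)$, and $\wh{\gamma}=(EA-CD)/(AB-C^2)$. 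The goal $\wh{\alpha}\wh{\beta}+\wh{\gamma}=E/B$ is then a one-line manipulation.

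The key step is to put $\wh{\alpha}\wh{\beta}$ and $\wh{\gamma}$ over the common denominator $B(AB-C^2)$ and expand the numerator:
\begin{align*}
\wh{\alpha}\wh{\beta}+\wh{\gamma}
&=\frac{C(DB-CE)+B(EA-CD)}{B(AB-C^2)}\\
&=\frac{CDB-C^2E+ABE-BCD}{B(AB-C^2)}\\
&=\frac{E(AB-C^2)}{B(AB-C^2)}=\frac{E}{B}.
\end{align*}
Substituting back the shorthand gives $\wh{\alpha}\wh{\beta}+\wh{\gamma}=\paren{\bar{Y}\cdot\bar{Z}}/\paren{\bar{Z}\cdot\bar{Z}}$, which is the desired identity. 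Strictly speaking one should also note that the manipulation is valid provided $AB-C^2\neq 0$ and $B\neq 0$, which hold almost surely as soon as $m\ge 2$ (and in particular under the $m>3$ regime used elsewhere in the paper), so that $\wh{\beta},\wh{\gamma}$ are well-defined in the first place.

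There is no real obstacle here: the statement is exactly the empirical analogue of the population identity that motivates the algorithm. Indeed, recall from~\Cref{obs:zeromeangaussianstructure} that $Y=\beta X+\gamma Z+U=(\alpha\beta+\gamma)Z+\beta V+U$, so the regression coefficient of $Y$ on $Z$ alone is $\alpha\beta+\gamma$; the lemma says that this relation passes verbatim to the OLS estimators computed on any sample, with equality rather than approximation. This is the essential feature that will let the empirical chain-rule (\Cref{lem:cmichainruleempintro}) go through, since it lets us replace the two-step regression coefficient of $Y$ on $Z$ (through $X$) with the direct one-step regression coefficient whenever convenient.
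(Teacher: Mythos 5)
Your proof is correct and is essentially the same direct algebraic verification the paper uses: plug in the closed-form OLS expressions for $\wh{\alpha},\wh{\beta},\wh{\gamma}$, put everything over the common denominator $\paren{\bar{Z}\cdot\bar{Z}}\paren{\paren{\bar{X}\cdot\bar{X}}\paren{\bar{Z}\cdot\bar{Z}}-\paren{\bar{X}\cdot\bar{Z}}^2}$, and observe that the $\paren{\bar{X}\cdot\bar{Y}}$ terms cancel. The paper arrives at the same cancellation after first writing the two-regressor estimate $(\wh{\gamma},\wh{\beta})$ via a $2\times 2$ matrix inverse, so your version, which works directly from the scalar formulas in \Cref{alg:condmitester} and also records the (almost-sure) non-degeneracy conditions, is, if anything, a cleaner write-up of the same argument.
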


\begin{proof}
Note that \Cref{alg:condmitester} uses simple linear regression to estimate the aforesaid correlation coefficients. Therefore,
\begin{align*}
        \wh{\alpha} &= \frac{\paren{\bar{Z} \cdot \bar{X}}}{\paren{\bar{Z} \cdot \bar{Z}}}\\ 
        \begin{bmatrix}
            \wh{\gamma} \\
            \wh{\beta}
        \end{bmatrix}
        &= {\begin{bmatrix}
        \bar{Z} \cdot \bar{Z} & \bar{X} \cdot \bar{Z} \\\bar{X} \cdot \bar{Z} & \bar{X} \cdot \bar{X}
        \end{bmatrix}}^{-1}    
        \begin{bmatrix}
            \bar{Z} \cdot \bar{Y} \\
            \bar{X} \cdot \bar{Z}
        \end{bmatrix}\\
        &= {\begin{bmatrix}
        \frac{\paren{\bar{X} \cdot \bar{X}}\paren{\bar{Z} \cdot \bar{Y}} -   \paren{\bar{X} \cdot \bar{Z}}^2}{\paren{\bar{Z} \cdot \bar{Z}}\paren{\bar{X} \cdot \bar{X}} - \paren{\bar{X} \cdot \bar{Z}}^2}\\
        \frac{-\paren{\bar{X} \cdot \bar{Z}}\paren{\bar{Z} \cdot \bar{Y}} + \paren{\bar{X} \cdot \bar{Z}}\paren{\bar{Z} \cdot \bar{Z}}}{\paren{\bar{Z} \cdot \bar{Z}}\paren{\bar{X} \cdot \bar{X}} - \paren{\bar{X} \cdot \bar{Z}}^2}
        \end{bmatrix}}\\
    \end{align*}

Therefore,
        \begin{align*}
        \paren{\wh{\alpha}\wh{\beta}+\wh{\gamma}} &= \frac{\frac{\paren{\bar{Z}\cdot\bar{X}}}{\paren{\bar{Z}\cdot\bar{Z}}}\paren{-\paren{\bar{X}\cdot\bar{Z}}\paren{\bar{Z}\cdot\bar{Y}}+\paren{\bar{X}\cdot\bar{Z}}\paren{\bar{Z}\cdot\bar{Z}}}+\paren{\bar{X}\cdot\bar{X}}\paren{\bar{Z}\cdot\bar{Y}}-\paren{\bar{X}\cdot\bar{Z}}^2}{\paren{\paren{\bar{X}\cdot\bar{X}}\paren{\bar{Z}\cdot\bar{Z}}-\paren{\bar{X}\cdot\bar{Z}}^2}}\nonumber\\
        &= \frac{
        -\paren{\bar{X}\cdot\bar{Z}}^2\paren{\bar{Z}\cdot\bar{Y}}/\paren{\bar{Z}\cdot\bar{Z}}+\paren{\bar{X}\cdot\bar{X}}\paren{\bar{Z}\cdot\bar{Y}}
        }{\paren{\bar{X}\cdot\bar{X}}\paren{\bar{Z}\cdot\bar{Z}}-\paren{\bar{X}\cdot\bar{Z}}^2}\nonumber\\
        &= \frac{
        \paren{\bar{Z}\cdot\bar{Y}}}{\paren{\bar{Z}\cdot\bar{Z}}}\label{eqn:alphabetagammhat}
        \end{align*}\qedhere
\end{proof}

Now we are ready to prove our result on conditional mutual information tester.

\cmitestub*
        
        

\begin{proof}
As before, we will prove this above result under the assumption that the Gaussian random variables are zero-mean. Then using \Cref{lem:transferMI}, we will able to extend this result to arbitrary-mean Gaussian random variable at the cost of doubling the sample complexity.

    Without loss of generality, we assume $\Pr[X,Y,Z]$ factorizes as $\Pr[Z]\Pr[X \mid Z]\Pr[Y \mid X,Z]$. In that case, following \Cref{obs:zeromeangaussianstructure}, we can say that $(X,Y,Z)$ follows the following equations :
    \begin{align*} 
        Z & \sim N(0,a^2) \\
        V & \sim N(0,b^2) \\
        U & \sim N(0,c^2) \\  
        X & \longleftarrow \alpha Z + V\\
        Y & \longleftarrow \beta X + \gamma Z + U \\
    \end{align*}

    We take $m$ samples from $(X,Y,Z)$ such that \[
    \begin{bmatrix} \bar{Z} & \bar{V} & \bar{U} \end{bmatrix} = \begin{bmatrix}
    Z_1 & V_1 & U_1 \\
    \vdots & \vdots & \vdots \\
    Z_m & V_m & U_m
    \end{bmatrix}\]
    \[
    \begin{bmatrix} \bar{Z} & \bar{X} & \bar{Y} \end{bmatrix} = 
    \begin{bmatrix}
    Z_1 & X_1 & Y_1 \\
    \vdots & \vdots & \vdots \\
    Z_m & X_m & Y_m
    \end{bmatrix} = 
    \begin{bmatrix}
    1 & 0 & 0 \\
    \alpha & 1 & 0\\
    \beta & \gamma & 1
    \end{bmatrix}
    \begin{bmatrix} \bar{Z} & \bar{V} & \bar{U} \end{bmatrix}^\top
    \]
    Then our estimator will be as follows. 
    We first estimate the regression coefficients between $(X,Z),(Y,X)$ and $(Y,Z)$ as $\wh{\alpha},\wh{\beta}$ and $\wh{\gamma}$, respectively using the empirical estimator. Finally, we estimate the regression coefficient between $\paren{X \mid Z = 0, Y \mid Z=0}$ as that of the pair $\paren{X - \wh{\alpha}Z, Y - \paren{\wh{\alpha}\wh{\beta} + \wh{\gamma}}Z}$ which we again estimate using the empirical estimator.

    Since \Cref{alg:condmitester} uses linear regression, we have the following using \Cref{fact:alphabetagammahat}:
    \begin{align*}
        \wh{\alpha} &= \frac{\paren{\bar{X}\cdot\bar{Z}}}{\paren{\bar{Z}\cdot\bar{Z}}}\\ &= \alpha + \frac{\paren{\bar{V}\cdot\bar{Z}}}{\paren{\bar{Z}\cdot\bar{Z}}}\\
        \paren{\wh{\alpha}\wh{\beta} + \wh{\gamma}} &= \frac{\paren{\bar{Z}\cdot\bar{Y}}}{\paren{\bar{Z}\cdot\bar{Z}}} \\&= \paren{\alpha\beta + \gamma} + \frac{\paren{\paren{\beta\bar{V} + \bar{U}}\cdot\bar{Z}}}{\paren{\bar{Z}\cdot\bar{Z}}}.
    \end{align*}
    From Bernstein's inequality (\Cref{lem:bernsteain}), we can say that whenever $m \geq \max\paren{\frac{64 \log \frac{6}{\delta}}{9c\lambda^2}, \frac{8\log\frac{3}{\delta}}{3c\lambda}}$, the following three statements hold together with probability at least $(1-\delta)$:
    \begin{align*}
        \paren{\bar{Z}\cdot\bar{Z}} &\in ma^2\paren{1 \pm \lambda}\\
        \paren{\bar{V}\cdot\bar{Z}} &\in \pm mab{ \lambda}\\
        \paren{\paren{\beta \bar{V} + \bar{U}}\cdot\bar{Z}} &\in \pm ma\lambda\sqrt{\beta^2b^2 + c^2}
    \end{align*}

    Therefore,
    \begin{align*}
        X - \wh{\alpha}Z &= \paren{\alpha - \wh{\alpha}}Z + V 
        \\&= V - \frac{\paren{\bar{V}\cdot\bar{Z}}}{\paren{\bar{Z}\cdot\bar{Z}}}Z\\
        &=V+\eta_{\alpha} Z\\
    \end{align*}
    where we have:
    \begin{align*}
      -\frac{b}{a}\paren{\frac{\lambda}{1 - \lambda}}  \le \eta_{\alpha}\le \frac{b}{a}\paren{\frac{\lambda}{1 - \lambda}}.
    \end{align*}
    Similarly, 
    \begin{align*}
        Y - \paren{\wh{\alpha}\wh{\beta} + \wh{\gamma}}Z &= \paren{\paren{\alpha\beta + \gamma} - \paren{\wh{\alpha}\wh{\beta} + \wh{\gamma}}}Z + \paren{\beta V + U} \\ 
        &= \paren{\beta V + U} - \frac{\paren{\paren{\beta\bar{V} + \bar{U}}\cdot\bar{Z}}}{\paren{\bar{Z}\cdot\bar{Z}}}Z\\
        &= \paren{\beta V + U} +\eta_{\alpha\beta+\gamma} Z\\
    \end{align*}
    where,
    \begin{align*}
        -\frac{\sqrt{\beta^2b^2 + c^2}}{a}\paren{\frac{\lambda}{1  - \lambda}} \le \eta_{\alpha\beta+\gamma} &\le  \frac{\sqrt{\beta^2b^2 + c^2}}{a}\paren{\frac{\lambda}{1  - \lambda}}.
    \end{align*}

    Therefore, 
    \begin{equation*}
        \begin{aligned}
        \rho_{\paren{X-\wh{\alpha}Z},\paren{Y-\paren{\wh{\alpha}\wh{\beta}+\wh{\gamma}}Z}} & = 
        \frac{
        \Cov\paren{
        \paren{X-\wh{\alpha}Z},
        \paren{Y-\paren{\wh{\alpha}\wh{\beta}+\wh{\gamma}}}Z
        }
        }{
        \sqrt{
        \Var\paren{X-\wh{\alpha}Z}
        \Var\paren{Y-\paren{\wh{\alpha}\wh{\beta}+\wh{\gamma}}}Z
        }
        }\\
        &= \frac{\Cov\paren{{\paren{V+\eta_{\alpha} Z}, \paren{\beta V + U} +\eta_{\alpha\beta+\gamma} Z}}}{\sqrt{\Var{\paren{V+\eta_{\alpha} Z}}\Var\paren{\paren{\beta V + U} +\eta_{\alpha\beta+\gamma} Z}}}\\
        &= \frac{\Cov\paren{\paren{\beta V + U},V} +  \Cov\paren{V,\eta_{\alpha\beta + \gamma}Z} + \Cov\paren{\eta_{\alpha}Z, \beta V + U} + \Cov\paren{\eta_{\alpha}Z, \eta_{\alpha\beta + \gamma}Z}}{
        {
        {
        \sqrt{\Var(V)+\eta_{\alpha}^2\Var(Z)}\sqrt{\Var\paren{\beta V + U}+\eta_{\alpha\beta+\gamma}^2\Var(Z)}
        }}}\\
        &= \frac{\Cov\paren{\paren{\beta V + U},V} + \eta_{\alpha}\eta_{\alpha\beta + \gamma}\Var\paren{Z}}{
        {
        {
        \sqrt{\Var(V)}\sqrt{\Var(\beta V+U)}
        \sqrt{1+\eta_{\alpha}^2\frac{\Var(Z)}{\Var(V)}}\sqrt{1+\eta_{\alpha\beta+\gamma}^2\frac{\Var(Z)}{\Var\paren{\beta V + U}}}
        }}}\\
        &=\frac{1}{\sqrt{1+\eta_{\alpha}^2\frac{\Var(Z)}{\Var(V)}}}
        \frac{1}{\sqrt{1+\eta_{\alpha\beta+\gamma}^2\frac{\Var(Z)}{\Var\paren{\beta V + U}}}}
        \paren{
        \rho_{\paren{\beta V +U},V}+\frac{
        \eta_{\alpha}\eta_{\alpha\beta + \gamma}\Var\paren{Z}
        }{
        \sqrt{\Var(V)}\sqrt{\Var(\beta V+U)}
        }
        }
        \\
        &=\frac{1}{\sqrt{1+\eta_{\alpha}^2\frac{\Var(Z)}{\Var(V)}}}
        \frac{1}{\sqrt{1+\eta_{\alpha\beta+\gamma}^2\frac{\Var(Z)}{\Var\paren{\beta V + U}}}}
        \paren{
        \rho_{\paren{X\mid Z=0},\paren{Y\mid Z=0}}+\frac{
        \eta_{\alpha}\eta_{\alpha\beta + \gamma}\Var\paren{Z}
        }{
        \sqrt{\Var(V)}\sqrt{\Var(\beta V+U)}
        }
        }
    \end{aligned}
    \end{equation*}

    Note that \begin{align*}
    - \paren{\frac{\lambda}{1- \lambda}} \le \frac{\eta_{\alpha}\sqrt{\Var(Z)}}{\sqrt{\Var(V)}} \le \paren{\frac{\lambda}{1- \lambda}}\\
    - \paren{\frac{\lambda}{1- \lambda}} \le \frac{\eta_{\alpha\beta + \gamma}\sqrt{\Var(Z)}}{\sqrt{\Var(\beta V+U)}} \le \paren{\frac{\lambda}{1- \lambda}}.
    \end{align*}

    We subsequently consider (we may reuse previous samples) $m'\ge \Omega(\lambda^{-2}\log {\delta^{-1}})$ samples from $X' = X - \wh{\alpha}Z$ and $Y' = Y - \paren{\wh{\alpha}\wh{\beta} + \wh{\gamma}}Z$ to estimate $\wh{\rho}_{X',Y'}$ using the empirical estimator. From~\Cref{eqn:estcorr}, we can say that, with probability at least $(1-\delta)$, the following holds:
    \[
    \rho_{X',Y'}\paren{\frac{1}{\eta_{X'} \eta_{Y'}}} - \frac{\lambda'}{\eta_{X'} \eta_{Y'}} \leq \wh{\rho}_{X',Y'} \leq \rho_{X',Y'}\paren{\frac{1}{\eta_{X'} \eta_{Y'}}} + \frac{\lambda'}{\eta_{X'} \eta_{Y'}} 
    \]
    where $\eta_{X'},\eta_{Y'}\in [\sqrt{1-\lambda'},\sqrt{1+\lambda'}]$. \\
    
    We define $\wh{I}(X;Y \mid Z) := \frac{1}{2}\log\paren{\frac{1}{1-\wh{\rho}_{X',Y'}^2}}$ as our estimated conditional mutual information. Henceforth, we will fix $\lambda = \lambda' = \sqrt{\frac{\eps}{50}}$ and assume $\eps < c$ for some suitable constant $c$.

Now we consider the two cases below:

    {\em Part (i).} Firstly, assume the case $I(X;Y \mid Z) = 0$. Then, $\rho_{\paren{X \mid Z = 0},\paren{ Y \mid Z = 0}} = 0$ and hence $\abs{\rho_{X',Y'}} \leq \frac{\lambda^2}{\paren{1-\lambda}^2 } $. Therefore, 
    \begin{align*}
    -\frac{\lambda^2}{(1-\lambda)^2}\frac{1}{1-\lambda'}-\frac{\lambda'}{1-\lambda'} \le  \wh{\rho}_{X',Y'} \le  \frac{\lambda^2}{(1-\lambda)^2}\frac{1}{1-\lambda'} +\frac{\lambda'}{(1-\lambda')} \le 0.7
    \end{align*}
    and $\wh{I}(X;Y \mid Z) \le \wh{\rho}_{X',Y'}^2 \le \paren{
    \frac{\lambda^2}{(1-\lambda)^2}\frac{1}{1-\lambda'} +\frac{\lambda'}{(1-\lambda')}
    }^2 \le \frac{\eps}{20}$.

    {\em Part (ii).} Now, consider the case when $I(X;Y \mid Z) \ge \eps$. Then, $\rho_{\paren{X\mid Z=0}, \paren{Y\mid Z=0}}^2= 1-e^{-2I(X;Y \mid Z)} \ge 1-e^{-2\eps}>25\lambda^2$. Suppose, $\rho_{\paren{X\mid Z=0}, \paren{Y\mid Z=0}}> 5\lambda$, then 
    \[
    \rho_{X',Y'}\ge 
    \frac{1}{\paren{1+\paren{\frac{\lambda}{1-\lambda}}^2}}\paren{\rho_{\paren{X\mid Z=0}, \paren{Y\mid Z=0}} - \paren{\frac{\lambda}{1-\lambda}}^2}
    \] 
    and therefore
    \begin{align*}
        \wh{\rho}_{X',Y'} \ge  \frac{1}{\paren{1+\paren{\frac{\lambda}{1-\lambda}}^2}}\paren{\rho_{\paren{X\mid Z=0}, \paren{Y\mid Z=0}} - \paren{\frac{\lambda}{1-\lambda}}^2} \paren{\frac{1}{1+\lambda'}} - \paren{\frac{\lambda'}{1-\lambda'}}.
    \end{align*}
    Otherwise, $\rho_{\paren{X\mid Z=0}, \paren{Y\mid Z=0}}< -5\lambda$, then 
    \begin{align*}
        \rho_{X',Y'} < \frac{1}{\paren{1+\paren{\frac{\lambda}{1-\lambda}}^2}}\paren{
        \rho_{\paren{X\mid Z=0}, \paren{Y\mid Z=0}} + \paren{\frac{\lambda}{1-\lambda}}^2 
        }
    \end{align*}

    and therefore
    \begin{align*}
        \wh{\rho}_{X',Y'} <  \frac{1}{\paren{1+\paren{\frac{\lambda}{1-\lambda}}^2}}\paren{
        \rho_{\paren{X\mid Z=0}, \paren{Y\mid Z=0}} + \paren{\frac{\lambda}{1-\lambda}}^2 
        } \paren{\frac{1}{1+\lambda'}} + \paren{\frac{\lambda'}{1-\lambda'}}
    \end{align*}

    Combining the two cases and setting $\lambda=\lambda'=\sqrt{\frac{\eps}{50}}$, we have the following:
    \begin{align*}
    \abs{
    \wh{\rho}_{X',Y'}}&\ge \paren{
    \frac{1}{\paren{1+\paren{\frac{\lambda}{1-\lambda}}^2}}\paren{\abs{\rho_{\paren{X\mid Z=0}, \paren{Y\mid Z=0}}} - \paren{\frac{\lambda}{1-\lambda}}^2} \paren{\frac{1}{1+\lambda}} - \paren{\frac{\lambda}{1-\lambda}}
    }\\
    &\ge \paren{\frac{1}{1+\lambda}}\paren{\frac{1}{1+\paren{\frac{\lambda}{1-\lambda}}^2}}\abs{\rho_{\paren{X\mid Z=0}, \paren{Y\mid Z=0}}}-\paren{
    \frac{
    \paren{
    \frac{\lambda}{1-\lambda}
    }^2\paren{\frac{1}{1+\lambda}
    }}{{\paren{1+\paren{\frac{\lambda}{1-\lambda}}^2}
    }
    }
    +\paren{\frac{\lambda}{1-\lambda}}
    }\\
    &={\frac{\sqrt{\eps}}{4}},
    \end{align*}

    Therefore, $\wh{I}(X;Y\mid Z)\ge \frac{1}{2}\log \frac{1}{1-0.25\eps}\ge \eps/8$. 

    The total number of samples is $\max\braces{m,m'} = \Oh(1/\eps \log 1/\delta)$ and the total failure probability is $2\delta$. {We re-scale it to $\delta$, however, that would only change the sample complexity only by a factor of $2$. This completes the proof of the lemma.}  The proof of the time complexity of $\Oh(m)$ is direct and omitted.
\end{proof}

\subsection{Chain rule of empirical conditional mutual information}\label{sec:cmichainrule}

In this section, we prove that the chain rule of empirical mutual information holds in our setting.

\chainrulecmi*
    

\begin{proof}
    Let the matrix representation of $X$, $Y$, and $Z$ for $m$ samples be the following:
    \[
    \begin{bmatrix} \bar{X} & \bar{Y} & \bar{Z} \end{bmatrix} = \begin{bmatrix}
    X_1 & Y_1 & Z_1 \\
    \vdots & \vdots & \vdots \\
    X_m & Y_m & Z_m
    \end{bmatrix}
    \]

    Let $\wt{X} = \bar{X} - \wh{\alpha}\bar{Z}$
    and $\wt{Y} = \bar{Y} - \paren{\wh{\alpha}\wh{\beta} + \wh{\gamma}}\bar{Z}$,
    where, from \Cref{alg:condmitester} and \Cref{fact:alphabetagammahat}, we have:
    \begin{align*}
        \wh{\alpha} &= \frac{\paren{\bar{Z} \cdot \bar{X}}}{\paren{\bar{Z} \cdot \bar{Z}}} \\
        \paren{\wh{\alpha}\wh{\beta}+\wh{\gamma}}&=\frac{\paren{\bar{Z} \cdot \bar{Y}}}{\paren{\bar{Y} \cdot \bar{Y}}} 
    \end{align*}

    We now define the following matrices: 
    \begin{align*}
        \wh{M} &= \begin{bmatrix}
            \frac{1}{m}\paren{\bar{Z}\cdot\bar{Z}} & \frac{1}{m}\paren{\bar{Z}\cdot\bar{X}} &
            \frac{1}{m}\paren{\bar{Z}\cdot\bar{Y}} \\
            \frac{1}{m}\paren{\bar{X}\cdot\bar{Z}} & \frac{1}{m}\paren{\bar{X}\cdot\bar{X}} &
            \frac{1}{m}\paren{\bar{X}\cdot\bar{Y}} \\
            \frac{1}{m}\paren{\bar{Y}\cdot\bar{Z}} & \frac{1}{m}\paren{\bar{Y}\cdot\bar{X}} &
            \frac{1}{m}\paren{\bar{Y}\cdot\bar{Y}}
        \end{bmatrix}\\
        \wh{M}_{ZX} &= \begin{bmatrix}
             \frac{1}{m}\paren{\bar{Z}\cdot\bar{Z}} & \frac{1}{m}\paren{\bar{Z}\cdot\bar{X}}\\
              \frac{1}{m}\paren{\bar{X}\cdot\bar{Z}} & \frac{1}{m}\paren{\bar{X}\cdot\bar{X}}\\
        \end{bmatrix}\\
        \wt{M}_{XY} &= \begin{bmatrix}
            \frac{1}{m}\paren{\wt{X}\cdot\wt{X}} & \frac{1}{m}\paren{\wt{X}\cdot\wt{Y}}\\
            \frac{1}{m}\paren{\wt{X}\cdot\wt{Y}} &
            \frac{1}{m}\paren{\wt{Y}\cdot\wt{Y}}\\
        \end{bmatrix}
    \end{align*}
    
    Our algorithm gives the following estimates
    \begin{align*}
        \wh{I}(X;Z) = \frac{1}{2}\log\frac{\paren{\frac{1}{m}\paren{\bar{X}\cdot\bar{X}}}\paren{\frac{1}{m}\paren{\bar{Z}\cdot\bar{Z}}}}{\det\paren{{\wh{M}_{ZX}}}}\\
        \wh{I}(X;Y \mid Z) = \frac{1}{2}\log\frac{\paren{\frac{1}{m}\paren{\wt{X}\cdot\wt{X}}}{\paren{\frac{1}{m}\paren{\wt{Y}\cdot\wt{Y}}}}}
        {\det\paren{{\wt{M}_{XY}}}}
    \end{align*}

    We will show that \begin{align}
    \wh{I}(X;Z)+\wh{I}(X;Y \mid Z)=\wh{I}(X;YZ)\label{eqn:empchainruleproof}
    \end{align}
    where 
    \begin{align*}
        \wh{I}(X;YZ) = \frac{1}{2}\log\frac{\paren{\frac{1}{m}\paren{\bar{X}\cdot\bar{X}}}\det\paren{{\wh{M}_{ZX}}}}{\det\paren{{\wh{M}}}}.
    \end{align*}
    The other equality
    $
    \wh{I}(X;YZ)=\wh{I}(X;Z)+\wh{I}(X;Y\mid Z)
    $
    will follow analogously and is skipped.

    We will prove the displayed equation above by setting up a linear system whose covariance matrix coincides with $\wh{M}$. Let:
    \begin{align*} 
        Z' & \sim N(0,a'^2) \\
        V' & \sim N(0,b'^2) \\
        U' & \sim N(0,c'^2) \\  
        X' & \longleftarrow \alpha Z' + V'\\
        Y' & \longleftarrow \beta X' + \gamma Z' + U' 
    \end{align*}

It is easy to see that the following is the unique solution of the above linear system if its covariance matrix coincides with $\wh{M}$.
    \begin{align*}
        a'^2 &= \frac{1}{m}\paren{\bar{Z} \cdot \bar{Z}} \\
        \alpha' &= \frac{\paren{\bar{X}\cdot\bar{Z}}}{\paren{\bar{Z}\cdot\bar{Z}}} \\
        b'^2 &= \frac{1}{m}\paren{\frac{\paren{\bar{X}\cdot\bar{X}}\paren{\bar{Z}\cdot\bar{Z}} - \paren{\bar{X}\cdot\bar{Z}}^2}{\paren{\bar{Z}\cdot\bar{Z}}}}\\
        \beta' &= \frac{\paren{\bar{X}\cdot\bar{Y}}\paren{\bar{Z}\cdot\bar{Z}} - \paren{\bar{X}\cdot\bar{Z}}\paren{\bar{Y}\cdot\bar{Z}}}{\paren{\bar{X}\cdot\bar{X}}\paren{\bar{Z}\cdot\bar{Z}} - \paren{\bar{X}\cdot\bar{Z}}^2}\\
        \gamma' &= \frac{\paren{\bar{Y}\cdot\bar{Z}}\paren{\bar{X}\cdot\bar{X}} - \paren{\bar{X}\cdot\bar{Z}}\paren{\bar{X}\cdot\bar{Y}}}{\paren{\bar{X}\cdot\bar{X}}\paren{\bar{Z}\cdot\bar{Z}} - \paren{\bar{X}\cdot\bar{Z}}^2}\\
        \end{align*}
        \begin{align*}
        mc'^2 &= \frac{\paren{\bar{X}\cdot\bar{X}}\paren{\bar{Y}\cdot\bar{Y}}\paren{\bar{Z}\cdot\bar{Z}} - \paren{\bar{X}\cdot\bar{X}}\paren{\bar{Y}\cdot\bar{Z}}^2 - \paren{\bar{X}\cdot\bar{Z}}^2\paren{\bar{Y}\cdot\bar{Y}} - \paren{\bar{X}\cdot\bar{Y}}^2\paren{\bar{Z}\cdot\bar{Z}} + 2\paren{\bar{X}\cdot\bar{Y}}\paren{\bar{Y}\cdot\bar{Z}}\paren{\bar{X}\cdot\bar{Z}}}{\paren{\bar{X}\cdot\bar{X}}\paren{\bar{Z}\cdot\bar{Z}} - \paren{\bar{X}\cdot\bar{Z}}^2}\\
         &= \frac{\det\paren{
         \begin{bmatrix}
            \paren{\bar{X}\cdot\bar{X}} & \paren{\bar{X}\cdot\bar{Y}} &
            \paren{\bar{X}\cdot\bar{Z}} \\
            \paren{\bar{X}\cdot\bar{Y}} & \paren{\bar{Y}\cdot\bar{Y}} &
            \paren{\bar{Y}\cdot\bar{Z}} \\
            \paren{\bar{X}\cdot\bar{Z}} & \paren{\bar{Y}\cdot\bar{Z}} &
            \paren{\bar{Z}\cdot\bar{Z}}
        \end{bmatrix}}}{{\paren{\bar{X}\cdot\bar{X}}\paren{\bar{Z}\cdot\bar{Z}} - \paren{\bar{X}\cdot\bar{Z}}^2}}
    \end{align*}

    Note that $a'$, $b'$, $c'$ will always be well-defined whenever $\det\paren{\wh{M}_{ZX}}$ and $\bar{Z}\cdot\bar{Z}$ are both positive, which happens with probability 1 for multivariate Gaussians. Also, note that $I(X';Z') = \wh{I}(X;Z)$ and $I(X';Y'Z') = \wh{I}(X;YZ)$ by construction. It remains to show that $\wh{I}(X;Y \mid Z) = I(X';Y'\mid Z') = \frac{1}{2}\log\paren{\frac{\beta'^2 b'^2 + c'^2}{c'^2}}$; equivalently,
    $$\frac{\paren{\frac{1}{m}\paren{\wt{X}\cdot\wt{X}}}{\paren{\frac{1}{m}\paren{\wt{Y}\cdot\wt{Y}}}}}
        {\det\paren{{\wt{M}_{XY}}}} = \paren{\frac{\beta'^2 b'^2 + c'^2}{c'^2}}.$$

    We show this as follows. First of all,

        \begin{align*}
            \wt{X}&=\bar{X}-\paren{\frac{\bar{X}\cdot\bar{Z}}{\bar{Z}\cdot\bar{Z}}}\cdot\bar{Z}
        \end{align*}

        \begin{align*}
            \wt{Y}&=\bar{Y}-\paren{\frac{\bar{Y}\cdot\bar{Z}}{\bar{Z}\cdot\bar{Z}}}\cdot\bar{Z}
        \end{align*}
        
        \begin{align*}
            \wt{X}\cdot \wt{X} &= \bar{X}\cdot\bar{X}-\frac{\paren{\bar{X}\cdot\bar{Z}}^2}{\bar{Z}\cdot\bar{Z}}
        \end{align*}

        \begin{align*}
            \wt{Y}\cdot \wt{Y} = \bar{Y}\cdot\bar{Y}-\frac{\paren{\bar{Y}\cdot\bar{Z}}^2}{\bar{Z}\cdot\bar{Z}}  
        \end{align*}

        \begin{align*}
            \wt{X}\cdot\wt{Y} = \bar{X}\cdot\bar{Y}-\frac{\paren{\bar{X}\cdot\bar{Z}}\paren{\bar{Y}\cdot\bar{Z}}}{\paren{\bar{Z}\cdot\bar{Z}}}
        \end{align*}

        \begin{align*}        
        m^2\cdot\det\paren{\wt{M}_{{X}{Y}}} &= \paren{\bar{X}\cdot\bar{X}-\frac{\paren{\bar{X}\cdot\bar{Z}}^2}{(\bar{Z}\cdot\bar{Z})}}\paren{\bar{Y}\cdot\bar{Y}-\frac{(\bar{Y}\cdot\bar{Z})^2}{(\bar{Z}\cdot\bar{Z})}} - \paren{\bar{X}\cdot\bar{Y}-\frac{(\bar{X}\cdot\bar{Z})(\bar{Y}\cdot\bar{Z})}{(\bar{Z}\cdot\bar{Z})}}^2\\
        &= \paren{(\bar{X}\cdot\bar{X})(\bar{Y}\cdot\bar{Y})-(\bar{X}\cdot\bar{Y})^2}-\frac{(\bar{Y}\cdot\bar{Y})(\bar{X}\cdot\bar{Z})^2}{(\bar{Z}\cdot\bar{Z})}-\frac{(\bar{X}\cdot\bar{X})(\bar{Y}\cdot\bar{Z})}{(\bar{Z}\cdot\bar{Z})}+\frac{2(\bar{X}\cdot\bar{Y})(\bar{X}\cdot\bar{Z})(\bar{Y}\cdot\bar{Z})}{(\bar{Z}\cdot\bar{Z})}\\
        &= \frac{m}{\paren{\bar{Z}\cdot\bar{Z}}}\cdot c'^2\cdot\paren{(\bar{X}\cdot\bar{X})(\bar{Z}\cdot\bar{Z})-(\bar{X}\cdot\bar{Z})^2}            
        \end{align*}

        \begin{align*}
        \frac{\paren{\frac{1}{m}\paren{\wt{X}\cdot\wt{X}}}{\paren{\frac{1}{m}\paren{\wt{Y}\cdot\wt{Y}}}}}
        {\det\paren{{\wt{M}_{XY}}}}
        &=\frac{1}{m}\frac{
        \paren{\paren{\bar{X}\cdot\bar{X}}\paren{\bar{Z}\cdot\bar{Z}}-\paren{\bar{X}\cdot\bar{Z}}^2}
        \paren{\paren{\bar{Y}\cdot\bar{Y}}- \frac{\paren{\bar{Y}\cdot\bar{Z}}^2}{\paren{\bar{Z}\cdot\bar{Z}}}}
        }{
        c'^2\cdot\paren{(\bar{X}\cdot\bar{X})(\bar{Z}\cdot\bar{Z})-(\bar{X}\cdot\bar{Z})^2} 
        }\\
        &=\frac{1}{m}\frac{\paren{\bar{Y}\cdot\bar{Y}}\paren{\bar{Z}\cdot\bar{Z}}-\paren{\bar{Y}\cdot\bar{Z}}^2}{c'^2\paren{\bar{Z}\cdot\bar{Z}}}\\
        &=\frac{\beta'^2 b'^2 + c'^2}{c'^2}.
        \end{align*}

        Therefore, $\wh{I}(X;Y \mid Z) = I(X';Y' \mid Z')$. Now we will apply the chain rule for the above linear system to get 
        \[
        I(X';Y'Z') = I(X';Y') + I(X';Y' \mid Z').
        \]
        Replacing the corresponding $\wh{I}$ values gives us \Cref{eqn:empchainruleproof}.
\end{proof}

\subsection{Lower bound of mutual information tester}\label{sec:mitestlb}

In this subsection, we will prove that our mutual information tester from \Cref{sec:miub} is tight, in the sense that, $\Omega(1/\eps)$ samples are necessary to distinguish between the cases $I(X;Y)=0$ and $I(X;Y) \geq \eps$. The formal statement of our result is as follows:

\mitestlb*

Before directly proceeding to the proof of the above theorem, we will first prove that a similar lower bound also holds for distinguishing between the following two linear models. Finally, we will prove our main result by the hardness of distinguishing these two models.

\begin{proof}

Let us assume $ U_0, V_0, U_1, V_1 \sim N(0,1)$ be four iid random bits. The null hypothesis $H_0$ is defined as follows:
\begin{align*}
 X_0 & \gets U_0 \\
 Y_0 & \gets V_0
\end{align*}

The alternate hypothesis $H_1$ is defined as follows:
\begin{align*}
 X_1 & \gets U_1 \\
 Y_1 & \gets \sqrt{\eps} \cdot X_1 + V_1
\end{align*}

The covariance matrix of $H_0$ is the identity matrix:
$$\Sigma_0 = 
\begin{bmatrix}
1 & 0  \\
0 & 1 
\end{bmatrix}
$$

Similarly, the covariance matrix of $H_1$ is the following:
$$\Sigma_1 = 
\begin{bmatrix}
1 & \sqrt{\eps}  \\
\sqrt{\eps} & 1+ \eps 
\end{bmatrix}
$$

Thus, $\det(\Sigma_0)=\det(\Sigma_1)=1$.
From \Cref{obs:klhellingerbounds}, we know that the $\kl$ between $H_0$ and $H_1$ can be expressed as follows:    
\begin{equation}\label{eqn:klmilb}
\kl(H_0 || H_1) = \frac{1}{2} \left(\tr(\Sigma_1^{-1} \Sigma_0) - 2 + \ln\paren{\frac{\det(\Sigma_1)}{\det(\Sigma_0)}}\right)    
\end{equation}

Putting the values of $\Sigma_0$ and $\Sigma_1$ in \Cref{eqn:klmilb}, we can say that $\kl(H_0||H_1) = \eps/2$. In a similar manner, we can also show that $\kl(H_1||H_0) = \eps/2$. Since $I(X_1; Y_1) = \frac{1}{2} \log\left(\frac{\sigma_{X_1}^2\sigma_{Y_1}^2}{\det (\Sigma_1)}\right)$, we have the following:
\begin{eqnarray*}
I(X_1; Y_1) &=& \frac{1}{2} \log\left(\frac{\sigma_{X_1}^2\sigma_{Y_1}^2}{\det (\Sigma_1)}\right)\\ &=& \frac{1}{2} \log \left(\frac{1+ \eps}{1}\right) \\ &=& \Theta(\eps)  
\end{eqnarray*}
Moreover, note that $I(X_0;Y_0) = 0$.

Consider any algorithm $\mathcal{A}$ that tries to distinguish $H_0$ from $H_1$ using $m$ samples. Let $D_0$ and $D_1$ be the distributions of $m$ samples from $H_0$ and $H_1$, respectively. Then, from \Cref{lem:pinkser}, we can say that $\mathsf{d_{TV}}(D_0,D_1)\le \sqrt{0.5\kl(D_0||D_1)}\le \sqrt{m\eps^2}$. Therefore, if $m=o(\eps^{-1})$, $\mathsf{d_{TV}}(D_0,D_1)=o(1)$ and hence $\mathcal{A}$ must err with at least $\frac{1}{2}-o(1)$ probability following \Cref{lem:lecam}.

Now we will prove our main result by contradiction. Consider an algorithm $\cA$ that can distinguish between $I(X;Y)=0$ and $I(X;Y) \geq \eps$ using $o(1/\eps)$ samples.
From our construction above, since $I(X_0;Y_0) =0$ and $I(X_1;Y_1) = \Theta(\eps)$, we could also use $\cA$ to distinguish between $H_0$ and $H_1$. However, this would contradict our lower bound from above. This implies that $\Omega(1/\eps)$ samples are necessary to distinguish between $I(X;Y)=0$ and $I(X;Y) \geq \eps$ with probability at least $9/10$. This completes the proof of our theorem.    
\end{proof}

\section{Proofs of structure learning results}

In this section, we will present the proofs of our results on structure learning.

\subsection{Transfer theorem for the non-zero mean case}\label{sec:structurelearnnonzeromean}
In this section, give a transfer theorem that says that any approximate structure learning algorithm for the zero-mean case can be used to design an algorithm for the arbitrary mean case with only doubling the sample complexity.

\begin{theo}\label{theo:treelearningnonrealizablenonzeromean}
Let $P$ be an unknown $n$-variate arbitrary-mean Gaussian distribution. Suppose there exists a structure learning algorithm that takes $m$ samples and satisfies the guarantees of \Cref{theo:treelearningnonrealizableintro} whenever $P$ is an $n$-variate zero-mean Gaussian distribution. Then there exists another structure learning algorithm that takes $2m$ samples and satisfies the guarantees of \Cref{theo:treelearningnonrealizableintro} when $P$ is not necessarily zero-mean Gaussian distribution. 
\end{theo}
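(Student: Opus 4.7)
The plan is to reduce the arbitrary-mean case to the zero-mean case by the same sample-differencing trick used in \Cref{lem:diffsample}. Given $2m$ samples $X^{(1)},\dots,X^{(2m)}$ from the arbitrary-mean $n$-variate Gaussian $P=N(\mu,\Sigma)$, pair them up and define $Y^{(i)}:=X^{(2i-1)}-X^{(2i)}$ for $i\in [m]$. By independence of the original samples, $Y^{(1)},\dots,Y^{(m)}$ are i.i.d.\ from the zero-mean Gaussian $P':=N(0,\,2\Sigma)$. Feed these $m$ samples into the hypothesized zero-mean algorithm and return its output tree $T$.

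The correctness argument rests on showing that $T$ being $\eps$-approximate for $P'$ implies it is $\eps$-approximate for $P$. For this, recall from \Cref{theo:chowliualgo} that for any tree $T$ on $[n]$,
\[
\kl(P\|P_T)=J_P-\mathsf{wt}_P(T),\qquad \kl(P'\|P'_T)=J_{P'}-\mathsf{wt}_{P'}(T),
\]
where the $J$-terms are independent of $T$ and $\mathsf{wt}_P(T)=\sum_{(i,j)\in T}I_P(X_i;X_j)$. The $n$-variate generalization of \Cref{lem:diffsample} (which is immediate since the covariance of $Y$ is exactly $2\Sigma$, and pairwise mutual information depends only on the $2\times 2$ marginal covariance and is invariant under scaling by \Cref{fact:gaussianMI}) gives $I_P(X_i;X_j)=I_{P'}(Y_i;Y_j)$ for every $i\neq j\in[n]$. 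Hence $\mathsf{wt}_P(T)=\mathsf{wt}_{P'}(T)$ for every tree $T$, so
\[
\kl(P\|P_T)-\min_{T'}\kl(P\|P_{T'}) \;=\; \mathsf{wt}_P(T^\star)-\mathsf{wt}_P(T) \;=\; \mathsf{wt}_{P'}(T^\star)-\mathsf{wt}_{P'}(T) \;=\; \kl(P'\|P'_T)-\min_{T'}\kl(P'\|P'_{T'}),
\]
where $T^\star$ denotes the common optimal tree. Consequently, the guarantee $\kl(P'\|P'_T)\le \kl(P'\|P'^\star)+\eps$ transfers verbatim to $\kl(P\|P_T)\le \kl(P\|P^\star)+\eps$.

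Putting the pieces together: the reduction uses exactly $2m$ samples from $P$, runs the zero-mean algorithm on the $m$ difference samples, and returns its output tree. Failure probability and running time are unchanged up to constants. The main conceptual point (and essentially the only nontrivial step) is the invariance $\mathsf{wt}_P(T)=\mathsf{wt}_{P'}(T)$, which follows from \Cref{lem:diffsample} applied to each pair of coordinates; once this is in hand, the Chow--Liu identity makes the transfer of the $\eps$-approximation guarantee automatic. I expect no other obstacles.
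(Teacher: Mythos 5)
Your proof is correct and follows essentially the same argument as the paper: pair up and difference samples to obtain $m$ samples from the zero-mean Gaussian $N(0,2\Sigma)$, invoke invariance of pairwise mutual information under scaling (via \Cref{lem:diffsample} / \Cref{fact:gaussianMI}) to conclude $\mathsf{wt}_P(T)=\mathsf{wt}_{P'}(T)$ for every tree, and transfer the $\eps$-approximation guarantee through the Chow--Liu identity of \Cref{theo:chowliualgo}. No gaps.
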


\begin{proof}
Let $P$ be a $N(\mu, \Sigma)$ distribution for unknown mean $\mu$ and unknown covariance matrix $\Sigma$.

We take two independent samples $X$ and $Y$ from $P$. We define a new random variable $Q:= X-Y = W$. Therefore, $Q$ follows $N(0, 2\Sigma)$ distribution from \Cref{lem:diffsample}. Thus, it is clear that for every pair of nodes $i \neq j \in [n]$, we have the following:
\begin{equation}\label{eqn:mipq}
I_P(X_i;X_j)= I_Q(W_i;W_j).    
\end{equation}
where $I_P(X_i;X_j)$ denotes the mutual information between $X_i$ and $X_j$ with respect to $P$. Similarly, we define $I_Q(W_i;W_j)$.

Let us assume that $T_P$ and $T_Q$ be the best trees with respect to $P$ and $Q$, respectively. 
Moreover, $\widehat{T}$ be the tree obtained by Chow-Liu algorithm (\Cref{theo:chowliualgo}). Then from the guarantee of the Chow-Liu algorithm, we can say the following:
\begin{equation}\label{eqn:chowliuguarantee}
\kl(P||P_{T}) = J_P - \mathsf{wt}_P(T)    
\end{equation}

For any spanning tree $\widehat{T}$, we have the following claim.
\begin{cl}\label{cl:diffpq}
$\kl(P||P_{\widehat{T}}) - \kl(P||P_{T_P})=\kl(Q||Q_{\widehat{T}}) - \kl(Q||Q_{T_Q})$    
\end{cl}

\begin{proof}
From \Cref{eqn:chowliuguarantee}, we can say that
$$\kl(P||P_{\widehat{T}}) - \kl(P||P_{T_P}) = \mathsf{wt}_P(T_P) - \mathsf{wt}_P(\widehat{T})$$

Similarly, we also have:
$$\kl(Q||Q_{\widehat{T}}) - \kl(Q||Q_{T_Q}) =
\mathsf{wt}_Q(T_Q) - \mathsf{wt}_Q(\widehat{T})$$

From \Cref{eqn:mipq}, we can say that $\mathsf{wt}_P(\widehat{T}) = \mathsf{wt}_Q(\widehat{T})$. In a similar fashion, we can also say that $\mathsf{wt}_P(T_P) = \mathsf{wt}_Q(T_Q)$. Combining the above, we have the result.    
\end{proof}

Since $Q \sim N(0, 2 \Sigma)$, we can apply our structure learning algorithm from \Cref{theo:treelearningnonrealizableintro} to obtain a tree $\wh{T}$. Moreover, as \Cref{cl:diffpq} holds, if $\wh{T}$ is $\eps$-approximate for $Q$, $\wh{T}$ will also be an $\eps$-approximate tree for $P$ as well. 
Finally, note that in order to simulate a single sample from $Q$, it is sufficient to obtain $2$ samples from $P$. Thus, if the structure learning algorithm for zero-mean Gaussian distribution requires $m$ samples to output the $\eps$-approximate tree, then the new structure learning algorithm uses $2m$ samples in total. This completes the proof of the theorem.
\end{proof}

Now we study the structure learning problem in the realizable setting. It turns out that similar argument also holds in this scenario. We have the following result.

\begin{theo}\label{theo:treelearningrealizablenonzeromean}
Let $P$ be an unknown $n$-variate arbitrary-mean tree-structured Gaussian distribution. Suppose there exists a structure learning algorithm that takes $m$ samples and satisfies the guarantees of \Cref{theo:treelearningrealizableintro} whenever $P$ is an $n$-variate zero-mean tree-structured Gaussian distribution. Then there exists another structure learning algorithm that takes $2m$ samples and satisfies the guarantees of \Cref{theo:treelearningrealizableintro} when $P$ is not necessarily zero-mean Gaussian distribution.   
\end{theo}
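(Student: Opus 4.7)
The plan is to mimic the reduction used in the proof of \Cref{theo:treelearningnonrealizablenonzeromean} while additionally verifying that the reduction preserves the tree-structured property, which is the extra assumption imposed in the realizable setting. Concretely, given sample access to $P=N(\mu,\Sigma)$, I would draw $2m$ samples and pair them up to form $m$ independent differences $Q_i = X_i - Y_i$. By \Cref{lem:diffsample} each $Q_i$ is distributed as $N(0, 2\Sigma)$, so I obtain $m$ i.i.d. samples from a zero-mean Gaussian $Q$.

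The first key step is to observe that $Q$ is itself tree-structured on the same tree as $P$. Since the precision matrix of $Q$ is $(2\Sigma)^{-1}=\tfrac{1}{2}\Sigma^{-1}$, its sparsity pattern coincides with that of $\Sigma^{-1}$, so the conditional independence graph of $Q$ equals that of $P$. In particular, if $P$ is $T$-structured for some tree $T$ on $[n]$, then so is $Q$, and the zero-mean realizable algorithm of \Cref{theo:treelearningrealizableintro} may legitimately be invoked on the simulated samples of $Q$.

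Next, I would transfer the approximation guarantee back from $Q$ to $P$ exactly as in \Cref{cl:diffpq}. By \Cref{lem:diffsample} the pairwise mutual informations satisfy $I_P(X_i;X_j)=I_Q(W_i;W_j)$ for every $i\neq j$, so $\mathsf{wt}_P(T')=\mathsf{wt}_Q(T')$ for every spanning tree $T'$ on $[n]$. Combining this with \Cref{theo:chowliualgo} yields
\[
\kl(P\,\|\,P_{\widehat{T}}) - \kl(P\,\|\,P_{T_P}) \;=\; \mathsf{wt}_P(T_P)-\mathsf{wt}_P(\widehat{T}) \;=\; \mathsf{wt}_Q(T_Q)-\mathsf{wt}_Q(\widehat{T}) \;=\; \kl(Q\,\|\,Q_{\widehat{T}}) - \kl(Q\,\|\,Q_{T_Q}),
\]
so any $\eps$-approximate tree for $Q$ is automatically an $\eps$-approximate tree for $P$.

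The algorithm therefore proceeds by running the zero-mean realizable algorithm on the $m$ simulated samples of $Q$ and returning its output tree $\widehat{T}$. Since each simulated sample from $Q$ costs two samples from $P$, the total sample complexity is $2m$, matching the statement of the theorem. The only nontrivial point is the tree-preservation observation noted above; everything else reduces to reusing \Cref{lem:diffsample}, the Chow-Liu identity, and the argument of \Cref{cl:diffpq} verbatim.
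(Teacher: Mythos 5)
Your proposal is correct and follows the same reduction the paper uses: replace each pair of samples from $P$ by their difference, yielding samples from the zero-mean Gaussian $Q=N(0,2\Sigma)$, then invoke \Cref{lem:diffsample} and the identity of \Cref{cl:diffpq} to transfer the $\eps$-approximate-tree guarantee from $Q$ back to $P$. The paper's own proof of \Cref{theo:treelearningrealizablenonzeromean} is just a one-line remark that it proceeds as in \Cref{theo:treelearningnonrealizablenonzeromean}; your write-up fills in the one step that genuinely needs to be checked in the realizable case but is left implicit, namely that the precision matrix of $Q$ is $\tfrac12\Sigma^{-1}$ and so has the same sparsity pattern as that of $P$, hence $Q$ is $T$-structured exactly when $P$ is. That observation is precisely what licenses applying the zero-mean realizable algorithm to the simulated samples, so your proof is both correct and slightly more complete than the paper's.
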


\begin{proof}
The proof follows in a similar fashion of \Cref{theo:treelearningnonrealizablenonzeromean} and is skipped.    
\end{proof}

\subsection{Learning Tree Structures: non-realizable case}\label{sec:nonrealizable}

Our main goal in this subsection is to prove the following approximate structure learning result when the underlying multivariate Gaussian distribution need not be tree-structured but it is known to be zero-mean. 
Theorem~\ref{theo:treelearningnonrealizableintro} gives a guarantee for the Chow-Liu algorithm that simply returns the maximum spanning tree of the pairwise estimated mutual informations. For the mutual information estimation, we use the empirical estimator from Section~\ref{sec:addtivemiest}.

\treelearningnonrealizable*

The authors in \cite{bhattacharyya2023near} have recently looked at the analogous structure learning problem for the discrete case where the unknown distribution $P$ is over $\Sigma^n$ for some finite alphabet $\Sigma$. They have given a black-box reduction from the non-realizable structure learning problem to the problem of additive estimation of mutual information. Fortunately for us, the same reduction extends to the case when $P$ is a zero-mean multivariate Gaussian distribution. Therefore, once we have an efficient algorithm for the additive estimation of mutual information, that we show in Section~\ref{sec:addtivemiest}, it directly gives us an algorithm for the non-realizable structure learning problem.


\begin{theo}[Restatement of Lemma $1.1$ of \cite{bhattacharyya2023near}]\label{lem:nonrealizabletheodiscrete}
Let $P$ be an unknown distribution defined over $\R^n$~\footnote{We would like to point out that the result of \cite{bhattacharyya2023near} was studied for the discrete setting. However, the same proof also extends for the case when the distribution is defined over $\R^n$. }. Consider an algorithm $\cA$ that takes $m(\eps,\delta)$ samples, outputs an estimate $\wh{I}(X;Y)$ of the mutual information between any two random variables $X$ and $Y$ such that with probability at least $(1-\delta)$, the following holds:
\[
|\wh{I}(X;Y)- I(X;Y)| \leq \eps.
\]

Then there exists an algorithm, that given access to $\cA$, can construct an $\eps$-approximate tree by taking $\Oh(m(\eps/n, \delta/n^2))$ samples from $P$.
\end{theo}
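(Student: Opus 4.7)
The proof plan is the standard analysis of the Chow--Liu algorithm combined with a union bound, treating the given mutual information estimator $\mathcal{A}$ as a black box. The algorithm is the one already displayed in \Cref{alg:chowliu}: draw a single batch of samples, invoke $\mathcal{A}$ on each pair $(X_i,X_j)$ to obtain $\wh{I}(X_i;X_j)$, form the complete weighted graph with these estimates as edge weights, and return a maximum spanning tree $\wh{T}$.

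The key reduction step is to set the accuracy and confidence parameters of $\mathcal{A}$ to $\eps'=\Theta(\eps/n)$ and $\delta'=\delta/n^2$. Since the same batch of $m(\eps',\delta')$ samples can be reused for every pair (the event that all pairwise estimates are simultaneously accurate is a property of the shared sample, not of independent reruns), a union bound over the $\binom{n}{2}<n^2$ pairs gives that with probability at least $1-\delta$, $|\wh{I}(X_i;X_j)-I(X_i;X_j)|\le \eps'$ holds for every $i\ne j$ simultaneously. Condition on this event henceforth.

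Next I would invoke Chow--Liu's identity (\Cref{theo:chowliualgo}): for any spanning tree $T$, $\kl(P\Vert P_T)=J_P-\mathsf{wt}_P(T)$, where $J_P$ depends only on $P$. Letting $T^\ast$ be the optimal tree, the claim $\kl(P\Vert P_{\wh{T}})-\kl(P\Vert P_{T^\ast})\le \eps$ is equivalent to $\mathsf{wt}_P(T^\ast)-\mathsf{wt}_P(\wh{T})\le \eps$. Since every spanning tree has exactly $n-1$ edges and each edge weight is estimated within $\eps'$, the total weight of every tree is estimated within $(n-1)\eps'$. Using that $\wh{T}$ maximizes the estimated weight, a three-step chain
\[
\mathsf{wt}_P(\wh{T}) \;\ge\; \wh{\mathsf{wt}}(\wh{T}) - (n-1)\eps' \;\ge\; \wh{\mathsf{wt}}(T^\ast) - (n-1)\eps' \;\ge\; \mathsf{wt}_P(T^\ast) - 2(n-1)\eps'
\]
gives $\mathsf{wt}_P(T^\ast)-\mathsf{wt}_P(\wh{T})\le 2(n-1)\eps'<\eps$ once we choose $\eps'=\eps/(2n)$ (the constant is absorbed into the $\Oh(\cdot)$ in $m(\eps/n,\delta/n^2)$, since any reasonable mutual-information estimator has polynomial dependence on $1/\eps'$).

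There is no real obstacle here; the content is packaging. The one point that needs a brief justification is the sample-reuse step, so I would explicitly note that the correctness guarantee of $\mathcal{A}$ is a high-probability statement about its output on a random sample, and the union bound applies unchanged when the same sample is fed to $\binom{n}{2}$ invocations of $\mathcal{A}$, since we only need the intersection of $\binom{n}{2}$ bad events to have probability at most $\delta$. Everything else follows from the Chow--Liu identity and the greedy optimality of the maximum spanning tree.
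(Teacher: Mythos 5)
Your proof is correct, and it supplies exactly the argument that the paper itself leaves implicit: the paper merely cites this statement as a restatement of Lemma~1.1 of \cite{bhattacharyya2023near} (with a footnote that the discrete proof carries over to $\R^n$), without reproducing the argument. Your reduction is the standard one for that lemma --- estimate every pairwise mutual information to accuracy $\eps/(2n)$, union-bound over the $\binom{n}{2}$ pairs with failure probability $\delta/n^2$ each, then use the Chow--Liu identity $\kl(P\Vert P_T)=J_P-\mathsf{wt}_P(T)$ together with the three-step chain comparing true and estimated tree weights to conclude $\mathsf{wt}_P(T^\ast)-\mathsf{wt}_P(\wh{T})\le 2(n-1)\eps'<\eps$ --- and you correctly flag the one subtle point, namely that reusing a single batch of samples across all $\binom{n}{2}$ invocations of $\mathcal{A}$ is harmless because the union bound needs nothing beyond a per-pair failure probability of $\delta/n^2$, not independence of the bad events. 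The only cosmetic remark is that the target statement phrases the sample bound as $\Oh(m(\eps/n,\delta/n^2))$ rather than $m(\eps/(2n),\delta/n^2)$, which you correctly absorb into the $\Oh(\cdot)$ under the (stated) mild assumption that $m$ depends at most polynomially on $1/\eps$; this matches how the paper uses the lemma, since it instantiates $\mathcal{A}$ with the additive estimator of \Cref{cl:miestboundintro}, whose sample complexity is $\Oh(1/\eps^2+\log 1/\delta)$.
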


As discussed before, the above reduction combined with our additive estimator for Gaussian mutual information, directly gives us the guarantee for the structure learning problem for the non-realizable case.

\begin{proof}[Proof of \Cref{theo:treelearningnonrealizableintro}]
We first assume the zero-mean case. By \Cref{theo:treelearningnonrealizablenonzeromean}, we'll immediately get an algorithm for the arbitrary mean case with twice the number of samples.

Let us assume $P$ be the unknown zero-mean Gaussian distribution, and $T^*$ be the tree minimizing $\kl(P||P_{T^*})$. From \Cref{cl:miestboundintro}, we know that $\Oh(1/\eps^2 + \log 1/\delta)$ samples from $P$ are sufficient to estimate the mutual information between any two Gaussian random variables. Thus, we will apply \Cref{lem:nonrealizabletheodiscrete} to obtain an $\eps$-approximate tree. This requires $\Oh(\frac{n^2}{\eps^2} +\log \frac{n}{\delta})$ samples from $P$ in total. This completes the proof of the theorem.
\end{proof}

\subsection{Non-realizable structure learning lower bound}\label{sec:nonrealizablelb}

Now we show the dependence on $n$ and $\eps$ on the sample complexity of our structure learning result from \Cref{theo:treelearningnonrealizableintro} is tight.

\treelearningnonrealizablelb*

We will first prove the lower bound for $n=3$. Then we will generalize the lower bound for $n=3 \ell$ for some positive integer $\ell$.

Let us assume $B_1, B_2, U_1, V_1, W_1, U_2, V_2, W_2 \sim N(0,1)$ be eight iid random bits. The first tree $R_1$ is defined as follows:
\begin{align*}
 X_1 & \gets \paren{1+ \frac{\eps}{n}} \cdot B_1 + U_1 \\
 Y_1 & \gets \paren{1+ \frac{2\eps}{n}} \cdot B_1 + V_1 \\
 Z_1  & \gets \paren{1+ \frac{3 \eps}{n}} \cdot B_1 + W_1
\end{align*}


The second tree $R_2$ is defined as: 
\begin{align*}
X_2 & \gets \paren{1+ \frac{\eps}{n}} \cdot B_2 + U_2 \\
Y_2 & \gets \paren{1+ \frac{3 \eps}{n}} \cdot B_2 + V_2 \\
Z_2 & \gets \paren{1+ \frac{2\eps}{n}} \cdot B_2 + W_2
\end{align*}



Now let us consider the following observation.
\begin{obs}\label{obs:nonrealizableklbound}
\begin{itemize}
    \item[(i)] $\kl(R_1||R_2)=\Theta(\frac{\eps^2}{n^2})$. Similarly, we also have $\kl(R_2||R_1) = \Theta(\frac{\eps^2}{n^2})$.
    
    \item[(ii)] $I(X_1;Z_1) -I(X_1;Y_1) \geq \eps/3$.
\end{itemize}    
\end{obs}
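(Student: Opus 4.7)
Both parts reduce to direct computation using the rank-one-plus-identity structure of the covariance matrices. Define $\mathbf{a}_1 = (1+\eps/n,\,1+2\eps/n,\,1+3\eps/n)^\top$ and $\mathbf{a}_2 = (1+\eps/n,\,1+3\eps/n,\,1+2\eps/n)^\top$. From the defining linear equations, the covariance matrix of $R_i$ is $\Sigma_i = \mathbf{a}_i \mathbf{a}_i^\top + I$.

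For part (i), the key structural observation is that $\mathbf{a}_2$ is a coordinate permutation of $\mathbf{a}_1$, so $\|\mathbf{a}_1\|^2 = \|\mathbf{a}_2\|^2$. By the Matrix Determinant Lemma, $\det \Sigma_i = 1 + \|\mathbf{a}_i\|^2$, hence $\det\Sigma_1 = \det\Sigma_2$, which kills the log-determinant term in the Gaussian KL formula of Observation~\ref{obs:klhellingerbounds}. What remains is $\kl(R_1\|R_2) = \tfrac{1}{2}(\tr(\Sigma_2^{-1}\Sigma_1) - 3)$. Applying the Sherman--Morrison formula, $\Sigma_2^{-1} = I - \mathbf{a}_2\mathbf{a}_2^\top/(1+\|\mathbf{a}_2\|^2)$, and substituting reduces the trace to a rational function of the scalars $\|\mathbf{a}_1\|^2$, $\|\mathbf{a}_2\|^2$, and $\mathbf{a}_1^\top \mathbf{a}_2$. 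Writing $t = \eps/n$, this rational function vanishes at $t=0$ (where $\mathbf{a}_1=\mathbf{a}_2$), and its first-order term in $t$ also vanishes by the symmetry of the $Y\!\leftrightarrow\!Z$ swap under $t \mapsto -t$, leaving a positive leading term of order $t^2 = \eps^2/n^2$ with explicit upper and lower bounds. The reverse direction $\kl(R_2\|R_1)$ is obtained by the identical calculation with the roles swapped.

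For part (ii), I would apply Fact~\ref{fact:gaussianMI} with $a_X = 1+\eps/n$, $a_Y=1+2\eps/n$, $a_Z=1+3\eps/n$, to obtain
\[
I(X_1;Y_1) = \tfrac{1}{2}\log\frac{(a_X^2+1)(a_Y^2+1)}{a_X^2+a_Y^2+1}, \qquad I(X_1;Z_1) = \tfrac{1}{2}\log\frac{(a_X^2+1)(a_Z^2+1)}{a_X^2+a_Z^2+1}.
\]
Subtracting and cancelling the common factor $a_X^2+1$ gives
\[
I(X_1;Z_1) - I(X_1;Y_1) = \tfrac{1}{2}\log\frac{(a_Z^2+1)(a_X^2+a_Y^2+1)}{(a_Y^2+1)(a_X^2+a_Z^2+1)}.
\]
The plan is to use the monotonicity representation
$I(X_1;Z_1) - I(X_1;Y_1) = \int_{a_Y}^{a_Z} \partial_s\,I\!\left(X_1;\, sB_1 + W\right)\,ds$,
where the integrand is an explicit positive rational function of $s$ and $a_X$ obtained by differentiating the closed form. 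Evaluating this derivative on the interval $s \in [a_Y,a_Z]$ (of length $\eps/n$), and applying the stated lower bound on $\partial_s I$ coming from the fact that increasing the $B_1$-loading strictly increases the mutual information with $X_1$, delivers the claimed bound.

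\textbf{Main obstacle.} The main obstacle is the quantitative step in part (ii): extracting the stated constant factor requires tracking the coefficients in the rational integrand carefully, since the natural Taylor expansion in $\eps/n$ loses information. The plan is to avoid expansion altogether, instead using the monotonicity-plus-integration approach, which gives a cleaner handle on the constants by bounding the integrand pointwise on $[a_Y, a_Z]$ before integrating. Part (i) is comparatively mechanical once the $\det\Sigma_1 = \det\Sigma_2$ cancellation is noted, since the remaining Sherman--Morrison trace calculation is a single-variable polynomial expansion whose first-order symmetry cancellation is transparent.
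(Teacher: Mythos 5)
Your proposal is correct in substance but organizes both parts differently from the paper, and in a way that is arguably cleaner. For part (i) the paper simply writes out $\Sigma_1,\Sigma_2$, notes $\det(\Sigma_1)=\det(\Sigma_2)$, and grinds out $\tr(\Sigma_2^{-1}\Sigma_1)$ to get $\kl(R_1||R_2)=\frac{27t^4+24t^3+6t^2}{28t^2+24t+8}$ with $t=\eps/n$; your rank-one route via the matrix determinant lemma and Sherman--Morrison reproduces exactly this: with $s=\|\mathbf{a}_1\|^2=\|\mathbf{a}_2\|^2$ and $p=\mathbf{a}_1^\top\mathbf{a}_2$ one gets $\kl(R_1||R_2)=\frac{s^2-p^2}{2(1+s)}$, and since $s-p=\frac{1}{2}\|\mathbf{a}_1-\mathbf{a}_2\|^2=t^2$ exactly, the $\Theta(t^2)$ rate falls out with explicit constants. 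One caveat: your stated reason for the vanishing first-order term (``symmetry of the $Y\leftrightarrow Z$ swap under $t\mapsto -t$'') is not the right mechanism --- the KL is not an even function of $t$ (the denominator and $s+p$ have odd terms); the genuine cancellation is the identity $s-p=t^2$ (the Cauchy--Schwarz gap), so make that the argument rather than a symmetry appeal. For part (ii) the paper computes the three correlation coefficients and Taylor-expands each pairwise MI, obtaining $I(X_1;Y_1)=\frac{\eps}{2n}+\frac12\log\frac43\pm\Theta(\eps^2/n^2)$ and $I(X_1;Z_1)=\frac{4\eps}{6n}+\frac12\log\frac43\pm\Theta(\eps^2/n^2)$; your route instead writes the difference in closed form and lower-bounds it by integrating $f'(s)=\frac{s\,a_X^2}{(s^2+1)(a_X^2+s^2+1)}$ over the interval $[a_Y,a_Z]$ of length $\eps/n$, which avoids the expansions and gives an explicit constant once you bound $f'$ pointwise (near $s\approx a_X\approx 1$ it is about $1/6$, degrading only mildly for $\eps/n$ bounded by a small constant). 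Both approaches are valid and yield the same conclusion.

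One discrepancy you should be aware of, though it is not a flaw of your argument: the separation that is actually true (and that both your derivative bound and the paper's own expansion produce) is of order $\eps/n$ --- the paper concludes $I(X_1;Z_1)-I(X_1;Y_1)\ge\frac{\eps}{10n}$ and uses that constant downstream --- not the ``$\ge\eps/3$'' displayed in the observation, which appears to be a typo. So when you say your integrand bound ``delivers the claimed bound,'' be explicit that what you deliver is a bound of the form $c\,\eps/n$ with a concrete $c$ (any $c\ge 1/10$ suffices for the later lower-bound argument when $\eps/n$ is below a fixed constant); the literal $\eps/3$ is not attainable from this construction.
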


\begin{proof}

\begin{itemize}
    \item[(i)]

From \Cref{obs:klhellingerbounds}, we know that the $\kl$ between $R_1$ and $R_2$ can be expressed as follows:    
\begin{equation}\label{eqn:klnonrealizable}
\kl(R_1 || R_2) = \frac{1}{2} \left(\tr(\Sigma_2^{-1} \Sigma_1) -3 + \ln\paren{\frac{\det(\Sigma_2)}{\det(\Sigma_1)}}\right)    
\end{equation}

Now let us consider the covariance matrix of $R_1$ below:
$$
\Sigma_1 = 
\begin{bmatrix}
(1+ \frac{\eps}{n})^2+1 & (1 + \frac{\eps}{n})(1 +  \frac{2\eps}{n}) & (1 + \frac{\eps}{n})(1+ \frac{3 \eps}{n}) \\
(1 + \frac{\eps}{n})(1+ \frac{2 \eps}{n}) & (1 + \frac{2\eps}{n})^2 + 1 & (1 + \frac{2\eps}{n})(1+ \frac{3 \eps}{n}) \\
(1 + \frac{\eps}{n})(1+ \frac{3 \eps}{n}) & (1 + \frac{2\eps}{n})(1+ \frac{3 \eps}{n}) & (1+ \frac{3\eps}{n})^2 +1 
\end{bmatrix}
$$

Similarly, the covariance matrix of $R_2$ is:
$$
\Sigma_2 = 
\begin{bmatrix}
(1+ \eps/n)^2+1 & (1 + \eps/n)(1+ 3 \eps/n) & (1 + \eps/n)(1 + 2 \eps/n) \\
(1 + \eps/n)(1+ 3 \eps/n) & (1 + 3\eps/n)^2 + 1 & (1 + 2\eps/n)(1+3 \eps/n) \\
(1 + \eps/n)(1 + 2 \eps/n) & (1 + 2\eps/n)(1+ 3 \eps/n) & (1+ 2\eps/n)^2 +1
\end{bmatrix}
$$

Here $\det(\Sigma_1)= \det(\Sigma_2) = 14 (\eps/n)^2 + 12 \eps/n + 4$. 
Thus, following \Cref{eqn:klnonrealizable}, we have:
$$\kl(R_1||R_2) = \frac{27(\eps/n)^4 + 24(\eps/n)^3 + 6(\eps/n)^2}{28(\eps/n)^2 + 24\eps/n+8} \approx \Theta(\frac{\eps^2}{n^2})$$

Similarly, we can also show that $\kl(R_2||R_1) \approx \Theta(\frac{\eps^2}{n^2})$.

\item[(ii)]

Recall that 
\begin{equation}\label{eqn:nonrealizablemidiff}
I(X_1;Y_1) - I(X_1;Z_1) = \frac{1}{2} \log \left(\frac{1-\rho_{X_1Z_1}^2}{1-\rho_{X_1Y_1}^2}\right)   
\end{equation}

Now $\rho_{X_1Y_1}= \frac{\Sigma_{X_1,Y_1}}{\sqrt{\Sigma_{X_1,X_1}\Sigma_{Y_1,Y_1}}} = \frac{(1+ \eps/n)(1+ 2 \eps/n)}{\sqrt{((1+2\eps/n)^2+1)((1+\eps/n)^2+1)}}$.
So, we have:
\begin{eqnarray*}
I(X_1;Y_1) &=& -\frac{1}{2} \log (1-\rho_{X_1Y_1}^2) \\
&=& \frac{1}{2} \log \frac{2(2(\eps/n)^2 + 2 \eps/n + 1)((\eps/n)^2+ 2 \eps/n + 2)}{5 (\eps/n)^2 + 6 \eps/n + 3} \\
&=& \frac{1}{2} \log \frac{4 +12 \eps/n + 18 (\eps/n)^2 + \Theta((\eps/n)^3)}{3 + 6 \eps/n + 5 (\eps/n)^2} \\
&=& \frac{1}{2} \log \frac{1+ 3\eps/n + \frac{9}{2}(\eps/n)^2 + \Theta((\eps/n)^3)}{1 + 2 \eps/n + \frac{5}{3} (\eps/n)^2} + \frac{1}{2} \log \frac{4}{3} \\ 
&=& \frac{1}{2} \log \left(1 + \frac{\eps/n}{1+ 2 \eps/n + \frac{5}{3}\eps^2/n^2} + \Theta(\eps^2/n^2)\right) + \frac{1}{2} \log \frac{4}{3} \\
&=&\frac{1}{2} \log \left(1 + \eps/n \cdot (1+ 2 \eps/n + \frac{5}{3}\eps^2/n^2)^{-1} + \Theta(\eps^2/n^2)\right) + \frac{1}{2} \log \frac{4}{3} \\
&=& \frac{1}{2} \log \left(1+ \eps/n \cdot (1-2 \eps/n) + \Theta(\eps^2/n^2)\right) + \frac{1}{2} \log \frac{4}{3} \\
&=& \frac{\eps}{2n} + \frac{1}{2} \log \frac{4}{3} \pm \Theta\left(\frac{\eps^2}{n^2}\right)
\end{eqnarray*}

Similarly, $\rho_{Y_1Z_1}= \frac{\Sigma_{Y_1,Z_1}}{\sqrt{\Sigma_{Y_1,Y_1}\Sigma_{Z_1,Z_1}}} = \frac{(1+2 \eps/n)(1+3 \eps/n)}{\sqrt{((1+2\eps/n)^2+1)((1+3\eps/n)^2+1)}}$. So, we have:
\begin{eqnarray*}
I(Y_1;Z_1) &=& -\frac{1}{2} \log (1-\rho_{Y_1Z_1}^2) \\
&=& \frac{1}{2} \log \frac{2(2+ 6 \eps/n + 9 \eps^2/n^2)(1+ 2 \eps/n + 2 \eps^2/n^2)}{3+ 10 \eps/n + 13 \eps^2/n^2} \\
&=& \frac{1}{2} \log \frac{4 + 20 \eps/n + 50 \eps^2/n^2 + \Theta(\eps^3/n^3)}{3 + 10 \eps/n + 13 \eps^2/n^2} \\
&=& \frac{1}{2} \log \left(\frac{1+ 5 \eps/n + \frac{25}{2}\eps^2/n^2 + \Theta(\eps^3/n^3)}{1 + \frac{10}{3} \eps/n + \frac{13}{3} \eps^2/n^2}\right) + \frac{1}{2} \log \frac{4}{3} \\
&=& \frac{1}{2} \log \left(1 + \frac{\frac{5}{3}\eps/n}{1 + \frac{10}{3} \eps/n + \frac{13}{3} \eps^2/n^2} + \Theta(\eps^2/n^2)\right) + \frac{1}{2} \log \frac{4}{3} \\
&=& \frac{1}{2} \log \left(1+ \frac{5}{3}\eps/n \cdot (1 + \frac{10}{3} \eps/n + \frac{13}{3} \eps^2/n^2)^{-1} + \Theta(\eps^2/n^2)\right) + \frac{1}{2} \log \frac{4}{3} \\
&=& \frac{5\eps}{6n} + \frac{1}{2} \log \frac{4}{3} \pm \Theta\left(\frac{\eps^2}{n^2}\right)
\end{eqnarray*}

Finally, $\rho_{X_1Z_1}= \frac{\Sigma_{X_1,Z_1}}{\sqrt{\Sigma_{X_1,X_1}\Sigma_{Z_1,Z_1}}} = \frac{(1+\eps/n)(1+2\eps/n)}{\sqrt{((1+\eps/n)^2+1)((1+3\eps/n)^2+1)}}$. So, we have:
\begin{eqnarray*}
I(X_1;Z_1) &=& -\frac{1}{2} \log (1-\rho_{X_1Z_1}^2) \\
&=& -\frac{1}{2} \log \frac{3+ 8 \eps/n + 10 \eps^2/n^2}{(2+ 6 \eps/n + 9 \eps^2/n^2)(2+ 2 \eps/n + \eps^2/n^2)} \\
&=& \frac{1}{2} \log \frac{4 + 16 \eps/n + 32 \eps^2/n^2 + \Theta(\eps^3/n^3)}{3+ 8 \eps/n + 10 \eps^2/n^2} \\
&=& \frac{1}{2} \log \left(\frac{1+ 4 \eps/n + 8 \eps^2/n^2 + \Theta(\eps^3/n^3)}{1+ \frac{8}{3} \eps/n + \frac{10}{3}\eps^2/n^2}\right) + \frac{1}{2} \log \frac{4}{3} \\
&=& \frac{1}{2} \log \left(1+ \frac{\frac{4}{3}\eps/n}{1+ \frac{8}{3}\eps/n + \frac{10}{3}\eps^2/n^2} + \Theta(\eps^2/n^2)\right) + \frac{1}{2} \log \frac{4}{3} \\
&=&\frac{1}{2} \log \left(1+ \frac{4}{3}\eps/n \cdot (1+ \frac{8}{3}\eps/n + \frac{10}{3}\eps^2/n^2)^{-1} + \Theta(\eps^2/n^2)\right) + \frac{1}{2} \log \frac{4}{3} \\
&=& \frac{4 \eps}{6n} + \frac{1}{2} \log \frac{4}{3} \pm \Theta\left(\frac{\eps^2}{n^2}\right)
\end{eqnarray*}
\end{itemize}
\end{proof}

Thus, in the tree $R_1$, $I(Y_1;Z_1) > I(X_1;Z_1) > I(X_1;Y_1)$. Similarly, in the tree $R_2$, we have 
$I(Y_2;Z_2) > I(X_2;Y_2) > I(X_2;Z_2)$.
Moreover, $I(X_1;Z_1)- I(X_1;Y_1) \geq \frac{\eps}{10n}$.

The best structure tree $R_1$ will be of the form $Y-Z-X$ and similarly, the best structured tree $R_2$ will be of the form $X-Y-Z$. Thus, the edge $Y-Z$ will be present in both $R_1$ and $R_2$, and either $X-Z$ or $X-Y$ will be present, depending on the tree. Equivalently, we can represent each tree $R_1$ or $R_2$ as a Boolean bit, indicating whether a particular edge (without loss of generality, assume the edge $X-Z$) is present. So, if the tree is $R_1$ we will set the bit $1$, and if the tree is $R_2$, we set the bit to $0$.

Now we will construct a collection of distributions $\mathcal{D}$ over $\{0,1\}^n$ by choosing $n$ building blocks independent of each other, each is either $R_1$ or $R_2$. Note that, from the construction, $\size{\mathcal{D}}=2^n$.
Now consider a subset of $\dfar \subset \mathcal{D}$ such that if we take any two distributions $D_1, D_2 \in \dfar$ such that $\mathsf{Ham}(D_1,D_2) \geq n/5$.

\begin{obs}
\begin{itemize}
    \item[(i)] $\size{\mathcal{D}_{\mathsf{far}}}= 2^{\Theta(n)}$.

    \item[(ii)] Consider any two arbitrary distribution $D_1, D_2 \in \mathcal{D}_{\mathsf{far}}$. Then $\kl(D_1||D_2) \leq \eps^2/n$. Also, $\kl(D_2||D_1) \leq \eps^2/n$.
\end{itemize}
\end{obs}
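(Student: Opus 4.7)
The plan is to handle parts (i) and (ii) separately. Part (i) is a purely combinatorial/coding-theoretic statement: after identifying each $D\in\mathcal{D}$ with its $n$-bit indicator string (whose $i$-th coordinate records whether block $i$ is of type $R_1$ or $R_2$), I need to carve out an exponentially large subset of $\{0,1\}^n$ with pairwise Hamming distance at least $n/5$. Part (ii) will exploit the product structure: since each $D\in\mathcal{D}$ factorizes across the $n$ independent blocks, KL divergence tensorizes and everything reduces to the per-block KL bound already established in \Cref{obs:nonrealizableklbound}(i).

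For part (i), I will invoke a standard Gilbert--Varshamov / greedy existence argument. The Hamming ball of radius $n/5$ in $\{0,1\}^n$ has volume at most $2^{H_2(1/5)\cdot n}$, where $H_2$ denotes binary entropy in base $2$ and $H_2(1/5)<1$. By greedily picking an arbitrary codeword, deleting the ball of radius $n/5$ around it, and repeating until no candidates remain, one obtains a subset $\dfar\subseteq\mathcal{D}$ with $\size{\dfar}\ge 2^n/2^{H_2(1/5)n}=2^{(1-H_2(1/5))n}=2^{\Theta(n)}$, and by construction every pair of chosen codewords is at Hamming distance at least $n/5$.

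For part (ii), fix any $D_1,D_2\in\dfar$. By construction, each $D_j$ is a product distribution over the $n$ independent blocks, i.e.\ $D_j=D_j^{(1)}\otimes\cdots\otimes D_j^{(n)}$, where each $D_j^{(i)}\in\{R_1,R_2\}$ is a $3$-variate Gaussian on the $i$-th block. The chain rule for KL divergence between product distributions gives
\[
\kl(D_1||D_2)\;=\;\sum_{i=1}^{n}\kl(D_1^{(i)}||D_2^{(i)}).
\]
The summand vanishes whenever $D_1^{(i)}=D_2^{(i)}$, and for the remaining at most $n$ indices the summand equals either $\kl(R_1||R_2)$ or $\kl(R_2||R_1)$, both of which are $\Theta(\eps^2/n^2)$ by \Cref{obs:nonrealizableklbound}(i). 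Summing, we get $\kl(D_1||D_2)\le n\cdot\Theta(\eps^2/n^2)=\Theta(\eps^2/n)$, and by symmetry the identical bound applies to $\kl(D_2||D_1)$. There is no real obstacle here: the per-block KL computation, which is the only nontrivial ingredient, has already been performed in the preceding observation, so the remaining work is only a routine tensorization argument together with the coding-theoretic existence claim. Once (i) and (ii) are in place, Fano's inequality (\Cref{lem:fanoinequality}) immediately yields the claimed $\Omega(n^2/\eps^2)$ lower bound after rescaling $n\mapsto n/3$ to account for the three nodes per block.
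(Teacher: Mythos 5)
Your proof is correct and follows essentially the same route as the paper: part~(i) by the Gilbert--Varshamov/greedy packing argument in $\{0,1\}^n$, and part~(ii) by tensorizing the KL divergence over the $n$ independent blocks and plugging in the per-block bound $\kl(R_1\|R_2)=\Theta(\eps^2/n^2)$ from the preceding observation. The only minor difference is that you spell out the ball-volume calculation for~(i) rather than just citing the Gilbert--Varshamov bound, and you correctly note (more carefully than the paper's own prose) that at most $n$ blocks can differ; both points are fine and do not change the argument.
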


\begin{proof}

\begin{itemize}
    \item[(i)] This follows from coding theory: \emph{Gilbert-Varshamov Bound}, see \citep[Section 4.2]{guruswami2012essential} for reference.

    \item[(ii)] Consider any two arbitrary distributions $D_1, D_2 \in \mathcal{D}_{\mathsf{far}}$. Note that $D_1$ and $D_2$ differ in $n/2$ bits, that is, $D_1$ and $D_2$ are different in $n/2$ building blocks. Without loss of generality, let us assume that the last $n/2$ bits are different. Since the building blocks of the distributions in $\mathcal{D}_{\mathsf{far}}$ are chosen to be either $R_1$ or $R_2$, independently with equal probability, we can write the following:
    \begin{eqnarray*}
     \kl(D_1||D_2) &=& \sum_{i=1}^n \kl(R_1||R_2) \\
     &\leq& n \cdot \frac{\eps^2}{n^2}~~~~~~~~~~~~~~~~~~~~~~~~~~~[\mbox{From \Cref{obs:nonrealizableklbound}}]\\
     &=& \frac{\eps^2}{n}
    \end{eqnarray*}
    
\end{itemize}
    
\end{proof}

Similarly, we can also say that $\kl(D_2||D_1) \leq \eps^2/n$.
Now we are ready to apply Fano's inequality (\Cref{lem:fanoinequality}) to obtain the desired testing lower bound. By \Cref{lem:fanoinequality}, we can say that to test correctly with probability at least $9/10$, we need at least
$$\Omega\left(\frac{\log \size{\mathcal{D}_{\mathsf{far}}}}{\max\{\kl(D_1||D_2), \kl(D_2||D_1)\}}\right)$$

Since $\size{\mathcal{D}_{\mathsf{far}}}= 2^{\Theta(n)}$, and $\max\{\kl(D_1||D_2), \kl(D_2||D_1)\} \leq \eps^2/2n$, we need $\Omega(\frac{n^2}{\eps^2})$ samples to correctly test with probability at least $9/10$.

\begin{proof}[Proof of \Cref{theo:treelearningnonrealizablelbintro}] 

{We consider a random distribution $P \in \dfar$. Any algorithm that guesses $\wh{D}$ correctly with probability at least $9/10$, needs $\Omega(n^2\eps^{-2})$ samples.}  For every $i \in [n]$, let $(X_i,Y_i,Z_i)$ be the random variables for the $i$-th block. We will show that any algorithm $\mathcal{A}$ that returns an $\eps/200$-approximate tree will be able to correctly guess the true tree and therefore would need $\Omega(n^2\eps^{-2})$ samples. 

Let $T$ be the optimal tree structure and $D$ be the corresponding tree-structured distribution for $P$. Suppose $\mathcal{A}$ returns a tree structure $\wh{T}$ whereas the optimal tree structure is $T$. Let $$\wh{D}=\arg \min\limits_{\textrm{tree distributions }P \ \textrm{on } \widehat{T}} \kl(D||P).$$ Firstly, without loss of generality, we can remove the cross edges from $\wh{T}$ which connect across different blocks. Note that our true distribution belongs to $\dfar$ and does not have any such cross edges. This follows due to the fact that the blocks are independent of each other, hence the mutual information between the corresponding nodes is $0$, thereby, the weights of the every cross edges are $0$ as well.

Now, we guess the random tree using $\wh{T}$. Consider the case when $\wh{T}$ excludes an edge $(Y_i,Z_i)$ for $k$ blocks among all the $n$ blocks, and in the remaining $(n-k)$ blocks, the edge $(Y_i,Z_i)$ is present, for some $k \in [n]$. For each of these $k$ blocks, we will include the $(Y_i,Z_i)$ edge, and remove the edge $(X_i,Y_i)$. This can only make the tree approximation parameter smaller. Note that the corresponding distribution $D_1$ after these modification steps is in $\cD$.

Our guess for the unknown distribution $D$ will be $$D'=\arg \min\limits_{P \in \dfar} \ham(D_1,P).$$  Note that $\ham(D_1,D')\le n/20$ as, otherwise, $\ham(D_1,D)> n/20$, and $\kl(\wh{D}||D)> \frac{n}{20} \cdot \frac{\eps}{10n}=\frac{\eps}{200}$, leading to a contradiction. Henceforth, we will assume $\ham(\wh{D},D')\le n/20$. Let $T'$ be the tree structure for $D'$. By our construction, we can say that  $\ham(D',D_1) < n/5$.

In that case, by our construction, $\ham(D_1,D'')\ge \frac{n}{5}-\frac{n}{20} \ge \frac{3n}{20}>\frac{n}{20}$ for every $D''\in \dfar$ such that $D'\neq D''$. Therefore, there is a unique distribution in $\dfar$, which is exactly $D$, that is $\frac{n}{20}$-close to $\wh{D}$ in Hamming distance. This will be our guess for the unknown random distribution.
 
Combining all the cases, we see that using any $\eps/200$-approximate structure learning algorithm, we can correctly guess the random distribution without requiring any further samples. Therefore, the claimed lower bound holds.
\end{proof}

\subsection{Learning Tree Structured Distributions: realizable case}\label{sec:realizable}

Now, we turn to the realizable case, where $P$ is known to be tree-structured. In this case, we give a quadratically-improved sample complexity of structure learning.

\treelearningrealizable*

\begin{proof}[Proof of \Cref{theo:treelearningrealizableintro}]
Similar to the non-realizable setting, we will prove this above result under the zero-mean assumption. Using~\Cref{theo:treelearningrealizablenonzeromean}, we will get a result for the non-zero mean case by doubling the number of samples.

Let us assume that the unknown distribution $P$ is supported over the true (unknown) tree $T$ and Chow-Liu algorithm outputs a tree $\widehat{T}$. Now we will show that $\kl(P||P_{\widehat{T}}) \leq 2 \eps$, and finally we will re-scale $\eps$ to obtain the desired result.

\begin{cl}
$\kl(P||P_{\widehat{T}}) \leq 2 \eps$.    
\end{cl}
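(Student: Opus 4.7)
The plan is to invoke Chow-Liu (Theorem~\ref{theo:chowliualgo}) together with the realizability of $P$ on $T$: since $\kl(P||P_T)=0$, we get
$$\kl(P||P_{\widehat{T}}) = \mathsf{wt}_P(T) - \mathsf{wt}_P(\widehat{T}) = \sum_{e \in T\setminus\widehat{T}} I(e) \;-\; \sum_{e \in \widehat{T}\setminus T} I(e).$$
Using the exchange property of spanning trees (Lemma~\ref{lem:spanningtreelemma}), I pair the symmetric-difference edges as $\langle f_i, g_i\rangle$ with $f_i\in T\setminus\widehat{T}$ and $g_i\in \widehat{T}\setminus T$, so that $\widehat{T}\cup\{f_i\}\setminus\{g_i\}$ is again a spanning tree. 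Since $\widehat{T}$ is the maximum spanning tree under the estimated weights $\wh{I}$ computed by Algorithm~\ref{alg:chowliu}, this swap can only decrease total weight, giving the pairwise empirical inequality $\wh{I}(g_i)\ge\wh{I}(f_i)$ for every $i$. The task therefore reduces to upgrading these empirical inequalities to $\sum_i[I(f_i)-I(g_i)]\le 2\eps$ on the true MIs.

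For each pair I plan to combine the CMI tester (Lemma~\ref{lem:cmiubintro}) with the empirical chain rule (Lemma~\ref{lem:cmichainruleempintro}). Writing $g_i=(X,Y)$ and letting $Z$ be a neighbor of $X$ on the $X$--$Y$ path in $T$, Lemma~\ref{lem:treeci} gives $I(X;Y\mid Z)=0$, so with probability at least $1-\delta/n^3$ the tester run at threshold $\eps/n$ certifies $\wh{I}(X;Y\mid Z)\le \eps/(20n)$. The empirical chain rule then supplies
$$\wh{I}(X;Y) + \wh{I}(X;Z\mid Y) \;=\; \wh{I}(X;Z) + \wh{I}(X;Y\mid Z),$$
and nonnegativity of $\wh{I}(X;Z\mid Y)$ (inherited from its interpretation as the exact conditional MI of a linear Gaussian model whose covariance matches the sample covariance, as in the proof of Lemma~\ref{lem:cmichainruleempintro}) yields $\wh{I}(X;Y)\le \wh{I}(X;Z)+\eps/(20n)$. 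Iterating this bound along the $X$--$Y$ path in $T$ and combining with $\wh{I}(g_i)\ge \wh{I}(f_i)$, together with the analogous chain-rule argument applied to $f_i$, will let me conclude $I(f_i)-I(g_i)=\Oh(\eps/n)$ per pair and thereby transfer the empirical inequalities to the population ones.

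Summing over the at most $n-1$ pairs and union-bounding over the $\Oh(n^3)$ CMI-tester invocations gives $\kl(P||P_{\widehat{T}})\le 2\eps$ with probability at least $1-\delta$, at a total sample cost of $\Oh(n\eps^{-1}\log(n/\delta))$; rescaling $\eps$ as the theorem statement permits finishes the argument. The main obstacle is the telescoping step: one must verify that every edge along the path inherits the one-sided CMI guarantee (handled by the union bound and by the choice of threshold $\eps/n$) and that the identity being telescoped is the \emph{empirical} chain rule for our regression-based estimator rather than its population counterpart, which is exactly the content of Lemma~\ref{lem:cmichainruleempintro}. Up to this Gaussian-specific subtlety, the overall scheme mirrors the discrete blueprint of Bhattacharyya et al.\ \cite{bhattacharyya2023near}, with the regression-estimator chain rule replacing the self-evident chain rule of the discrete empirical MI.
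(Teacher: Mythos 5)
Your proposal assembles the right toolkit---the spanning-tree exchange lemma, tree conditional independence, the empirical chain rule of Lemma~\ref{lem:cmichainruleempintro}, and the one-sided CMI tester---but the middle of the argument has two real gaps. First, telescoping the single-step bound $\wh{I}(X;Y)\le\wh{I}(X;Z)+\eps/(20n)$ along the $T$-path from $X$ to $Y$ accrues one $\eps/(20n)$-sized term per vertex traversed; that path can have length $\Theta(n)$, so the accumulated error is $\Theta(\eps)$ per pair rather than the $\Oh(\eps/n)$ you need. Second, and more fundamentally, your final move tries to turn an inequality between \emph{empirical} mutual informations into a bound on the \emph{population} difference $I(f_i)-I(g_i)$. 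Lemma~\ref{lem:cmiubintro} only supplies one-sided threshold guarantees ($I=0\Rightarrow\wh{I}\le\eps/20$ and $I\ge\eps\Rightarrow\wh{I}\ge\eps/8$), not an additive estimate $\wh{I}\approx I$; indeed the whole reason the realizable sample complexity beats the $\Omega(n^2\eps^{-2})$ non-realizable bound is that one never pays for additive estimation, so there is no ``transfer'' available to invoke.

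The paper avoids both pitfalls by arguing contrapositively and without iterating. Assuming $\kl(P||P_{\widehat{T}})>2\eps$, some pair $(e_i,f_i)=((X_i,Y_i),(W_i,Z_i))$ satisfies $I(e_i)-I(f_i)>2\eps/(n-1)$. Exactly two applications of the chain rule, conditioning on the two \emph{endpoints of the other edge} rather than on successive path neighbors, give $I(e_i)-I(f_i)=I(X_i;Y_i\mid W_i)+I(Y_i;W_i\mid Z_i)$, with the remaining two CMIs vanishing by Lemma~\ref{lem:treeci} because $X_i$ lies on the $T$-path between $Y_i$ and $W_i$ and $Y_i$ on that between $Z_i$ and $W_i$. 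One of the two surviving true CMIs therefore exceeds $\eps/(n-1)$, so the tester drives the corresponding empirical CMI above $\eps/(8(n-1))$ while the population-zero terms stay below $\eps/(20(n-1))$; the empirical chain rule (Lemma~\ref{lem:cmichainruleempintro}) then forces $\wh{I}(e_i)>\wh{I}(f_i)$, contradicting maximality of $\widehat{T}$. The only facts used are the exact tree-CI zeros and a threshold dichotomy on the true CMIs---never an approximation of $\wh{I}$ by $I$, and never a path-length-dependent accumulation.
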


\begin{proof}
We will prove this by contradiction. Let us assume that $\kl(P||P_{\widehat{T}}) > 2 \eps$. Let us denote the edges of $T^*$ and $\widehat{T}$ as $\langle e_i \rangle_{i=1}^{n-1}$ and $\langle f_i \rangle_{i=1}^{n-1}$, respectively. Let us assume that the symmetric difference between the edges of $T$ and $\wh{T}$ have $\ell\le (n-1)$ edges. Following \Cref{lem:spanningtreelemma}, let us assume that $\{\langle e_i,f_i \rangle\}_{i=1}^{\ell}$ be the pairing of the edges of $T$ and $\widehat{T}$. 
 
Then we have the following:
\begin{eqnarray*}
&&\kl(P||P_{\widehat{T}}) > 2 \eps \\
&& \mathsf{wt}_P(T) - \mathsf{wt}_P(\widehat{T}) > 2 \eps \\
&& \sum_{i=1}^{\ell} \left(I(e_i) - I(f_i)\right) > 2 \eps
\end{eqnarray*}

Thus there exists an $i \in [\ell]$ such that $I(e_i) - I(f_i) > \frac{2 \eps}{\ell} \geq \frac{2 \eps}{(n-1)}$
holds, where $e_i=(X_i,Y_i) \in T\setminus\wh{T}$ and $f_i=(W_i,Z_i) \in \wh{T}\setminus {T}$. Then we have the following: 
\begin{align*}
\frac{2 \eps}{n-1} &< I(e_i) - I(f_i)   \\
&= I(X_i;Y_i) - I(W_i;Z_i) \\
&= I(X_i;Y_i) - I(Y_i;W_i) + I(Y_i;W_i) - I(W_i;Z_i)\\
&= I(X_i;Y_i \mid W_i) - I(Y_i;W_i \mid X_i) + I(Y_i;W_i \mid Z_i) - I(Z_i;W_i \mid Y_i),
\end{align*}
the last step follows from using the chain rule of CMI (\Cref{obs:michainrule}).

Note that in the true tree $T$, $X_i$ and $Y_i$ lies on the unique paths between $Y_i,W_i$ and $Z_i,W_i$ respectively. From~\Cref{lem:treeci} this implies the following:
\begin{equation}\label{eqn:condrealizablezero}
I(Y_i;W_i \mid X_i) = I(Z_i;W_i \mid Y_i) =0  
\end{equation}

Thus, we have:
\begin{equation}
I(X_i;Y_i \mid W_i) + I(Y_i;W_i \mid Z_i)  > \frac{2 \eps}{n-1}    
\end{equation}

This implies that either $I(X_i;Y_i \mid W_i) > \frac{\eps}{n-1}$, or $I(Y_i;W_i \mid Z_i) > \frac{\eps}{n-1}$. Following \Cref{lem:cmiubintro} (called with $\eps/(n-1)$ in place of $\eps$), we can say that either 
$\wh{I}(X_i;Y_i \mid W_i) > \frac{\eps}{8(n-1)}$ or $\wh{I}(Y_i;W_i \mid Z_i) > \frac{\eps}{8(n-1)}$.

In a similar manner, from \Cref{lem:cmiubintro} and \Cref{eqn:condrealizablezero}, we can say that $\wh{I}(Y_i;W_i \mid X_i) \leq \eps/20(n-1)$ and  $\wh{I}(Z_i;W_i \mid Y_i) \leq \eps/20(n-1)$.

Now let us consider the tree $T' = \widehat{T} \cup \{e_i\} \setminus \{f_i\}$. Then we have the following:
\begin{eqnarray*}
\wh{\mathsf{wt}}(T') - \wh{\mathsf{wt}}(\wh{T}) &=& \wh{I}(e_i) - \wh{I}(f_i) \\
&=& \wh{I}(X_i;Y_i \mid W_i) - \wh{I}(Y_i;W_i \mid X_i) \\ && ~~~~~~~~~~~~~~~~~~ + \wh{I}(Y_i;W_i \mid Z_i) - \wh{I}(Z_i;W_i \mid Y_i) \\
&>& \frac{\eps}{8(n-1)} - \frac{\eps}{10(n-1)} \\
&> 0&
\end{eqnarray*}

The second line follows from the empirical chain rule of conditional mutual information, as proved in \Cref{lem:cmichainruleempintro}. However, this implies that $T' = \widehat{T} \cup \{e_i\} \setminus \{f_i\}$ is a better tree as compared to $\wh{T}$, which is a contradiction!

Combining, we can say that $\kl(P||P_{\widehat{T}}) \leq 2 \eps$. This completes the proof of the claim.
\end{proof}

Now let us consider the sample complexity. As mentioned before, we will rescale $2 \eps$ to $\eps$ to obtain an $\eps$-approximate tree. However, this will not change the sample complexity of our algorithm by not more than a factor of $2$. As we will be calling \Cref{lem:cmiubintro} with parameter $\eps/(n-1)$ in place of $\eps$, the overall sample complexity would be $\Oh(\frac{n}{\eps} \log \frac{n}{\delta})$. This completes the proof of the theorem.
\end{proof}

\subsection{Realizable structure learning lower bound}\label{sec:realizablelb}

Now we prove the tightness of our result for the realizable structure learning problem.

\treelearningrealizablelb*

Similar to the non-realizable lower bound, we will first prove the lower bound for $n=3$ nodes. Then we will extend it for $n=3 \ell$, for some positive integer $\ell$.

Let $B_1, B_2, U_1, V_1, W_1, U_2, V_2, W_2 \sim N(0,1)$ be eight iid bits. The first tree $R_1$ is defined as follows:
\begin{align*}
    X_1  & \gets U_1\\
    Y_1  & \gets V_1 + B_1 \\
    Z_1 & \gets \sqrt{\frac{\eps}{n}} X_1 + W_1 + B_1
\end{align*}

The second tree $R_2$ is defined as follows:
\begin{align*}
    X_2 & \gets U_2 \\
    Y_2 & \gets \sqrt{\frac{\eps}{n}} X_2 + V_2 + B_2 \\
    Z_2 & \gets W_2 +B_2
\end{align*}

Now we have the following two claims.
\begin{cl}\label{cl:realizableklbound}
 \begin{itemize}
     \item[(i)] $\kl(R_1||R_2) = \eps/n$. Similarly, $\kl(R_2||R_1) = \eps/n$.
     \item[(ii)] $I(X_1;Y_1)- I(X_1;Z_1) \geq \Theta(\eps/n)$.
 \end{itemize}   
\end{cl}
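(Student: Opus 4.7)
My plan is to establish both parts by direct computation from the defining linear equations of $(X_1,Y_1,Z_1)$ and $(X_2,Y_2,Z_2)$. For part (i), I first write down the covariance matrix $\Sigma_1$ of the $R_1$ block using independence of the standard Gaussians $U_i,V_i,W_i,B_i$: one reads off $\Var(X_1)=1$, $\Var(Y_1)=2$, $\Var(Z_1)=2+\eps/n$, with $\Cov(X_1,Y_1)=0$, $\Cov(X_1,Z_1)=\sqrt{\eps/n}$, and $\Cov(Y_1,Z_1)=1$. The covariance $\Sigma_2$ of $R_2$ is obtained by swapping the roles of the $Y$ and $Z$ coordinates in this expression, so in particular $\det\Sigma_1=\det\Sigma_2$. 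I then invoke \Cref{obs:klhellingerbounds} to write
\[
\kl(R_1\|R_2) \;=\; \tfrac{1}{2}\bigl(\tr(\Sigma_2^{-1}\Sigma_1)-3\bigr),
\]
since the log-determinant term vanishes by the equality of determinants. The remaining work is to invert the $3\times 3$ matrix $\Sigma_2$ explicitly, compute the trace $\tr(\Sigma_2^{-1}\Sigma_1)$, and collect terms as a power series in $\eps/n$: the $O(1)$ contributions should cancel against the $-3$, leaving a remainder of order $\Theta(\eps/n)$. An analogous computation with the roles of $R_1$ and $R_2$ swapped yields the bound on $\kl(R_2\|R_1)$.

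For part (ii), the aim is to establish a quantitative gap of order $\Theta(\eps/n)$ between $I(X_1;Y_1)$ and $I(X_1;Z_1)$, which is the separation that the downstream Fano-style lower-bound argument will exploit. I would apply the Gaussian mutual information formula (\Cref{fact:gaussianMI}) in the form $I(X;Y) = -\tfrac{1}{2}\log(1-\rho^2_{X,Y})$. Plugging in the covariance entries computed above, $\rho^2_{X_1,Y_1}=0$, giving $I(X_1;Y_1)=0$; while $\rho^2_{X_1,Z_1} = (\eps/n)/(2+\eps/n)$, so that
\[
I(X_1;Z_1) \;=\; -\tfrac{1}{2}\log\!\Bigl(1 - \tfrac{\eps/n}{2+\eps/n}\Bigr) \;=\; \Theta(\eps/n),
\]
via the Taylor expansion $-\tfrac{1}{2}\log(1-x)=\tfrac{x}{2}+O(x^2)$ valid for small $x$. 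Consequently the mutual information gap $|I(X_1;Y_1)-I(X_1;Z_1)|$ is of order $\Theta(\eps/n)$, which is the quantitative separation needed to certify that the Chow-Liu edge selection can be made correctly with only a $\Theta(\eps/n)$-scale signal in mutual information.

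The main obstacle will be the careful tracking of leading-order behavior in the matrix-trace computation of part (i): I must verify that the $O(1)$ and $O(\sqrt{\eps/n})$ contributions in $\tr(\Sigma_2^{-1}\Sigma_1)$ all cancel cleanly so that the leading $\Theta(\eps/n)$ bound emerges, which requires an explicit formula for $\Sigma_2^{-1}$ and a disciplined series expansion. Once this is in place, part (ii) is a short consequence of the Gaussian MI formula since the entries of $\Sigma_1$ are already in hand from part (i).
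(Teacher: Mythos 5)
Your proposal is correct and follows essentially the same route as the paper: write down $\Sigma_1,\Sigma_2$ (which, as you note, differ only by swapping the $Y$ and $Z$ coordinates, so $\det\Sigma_1=\det\Sigma_2=3$), apply the Gaussian KL formula from \Cref{obs:klhellingerbounds} so that only the trace term survives, and use $I=-\tfrac{1}{2}\log(1-\rho^2)$ with $\rho_{X_1,Y_1}=0$ and $\rho_{X_1,Z_1}^2=\tfrac{\eps/n}{2+\eps/n}$ for the mutual-information gap. Carrying out the planned explicit inversion gives $\tr(\Sigma_2^{-1}\Sigma_1)=3+2\eps/n$ exactly (the $O(\sqrt{\eps/n})$ terms cancel as you anticipate), so the KL equals $\eps/n$ on the nose as claimed, and your gap $I(X_1;Z_1)-I(X_1;Y_1)=\tfrac{1}{2}\log\paren{1+\tfrac{\eps}{2n}}\ge \tfrac{\eps}{16n}$ is precisely what the paper derives (the ordering of the difference in the claim's statement is a typo that you have implicitly corrected).
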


\begin{proof}

Let us first consider the covariance matrix of $R_1$ as follows:

$$\Sigma_1 = 
\begin{bmatrix}
1 & 0 & \sqrt{\frac{\eps}{n}} \\
0 & 2 & 1 \\
\sqrt{\frac{\eps}{n}} & 1 & 2 + \frac{\eps}{n} 
\end{bmatrix}$$

$$\Sigma_2 = 
\begin{bmatrix}
1 & \sqrt{\frac{\eps}{n}} & 0 \\
\sqrt{\frac{\eps}{n}} & (2 + \frac{\eps}{n}) & 1 \\
0 & 1 & 2 
\end{bmatrix}$$

Note that $\det(\Sigma_1)= \det(\Sigma_2)=3$.

\begin{itemize}
    \item[(i)]

From \Cref{obs:klhellingerbounds}, we know the following:
\begin{equation}
\kl(R_1 || R_2) = \frac{1}{2} \left(\tr(\Sigma_2^{-1} \Sigma_1) -3 + \ln\paren{\frac{\det(\Sigma_2)}{\det(\Sigma_1)}}\right)
\end{equation}

Putting the values of $\Sigma_1$ and $\Sigma_2$ from above, we have:
\begin{eqnarray*}
\kl(R_1 || R_2) = \eps/n.   
\end{eqnarray*}

Similarly, we can also show that $\kl(R_2 || R_1) = \eps/n$. This completes our proof.

\item[(ii)]
As before, recall that $\rho_{X_1,Z_1} = \frac{\Sigma_{X_1,Z_1}}{\sqrt{\Sigma_{X_1,X_1}\Sigma_{Z_1,Z_1}}}$. So, $\rho_{X_1,Z_1}=\frac{\sqrt{\eps/n}}{\sqrt{2+ \eps/n}}$. Similarly, $\rho_{X_1Y_1}= 0$, and  $\rho_{Y_1,Z_1} = \frac{1}{\sqrt{2(2+\eps/n)}}$. Moreover, note that
\begin{eqnarray*}
I(X_1;Z_1) &=& -\frac{1}{2} \log \paren{1-\rho_{X_1,Z_1}^2} \\ &=& - \frac{1}{2} \log \paren{1- \frac{\eps/n}{2+\eps/n}} \\ &=& - \frac{1}{2} \log \paren{\frac{2}{2+\eps/n}} \\ &=&\frac{1}{2} \log \paren{1+ \frac{\eps}{2n}} \\ &>& \frac{\eps}{16n}   
\end{eqnarray*}

Similarly,
\begin{eqnarray*}
I(Y_1;Z_1) &=& -\frac{1}{2} \log \paren{1-\rho_{Y_1,Z_1}^2} \\ &=& - \frac{1}{2} \log \paren{1- \frac{1}{4+2\eps}} \\ &=& - \frac{1}{2} \log \paren{\frac{3+ 2 \eps}{4+2\eps}} \\ &=&\frac{1}{2} \log \paren{1+ \frac{1}{3 + 2\eps}} \\ &\approx& \Theta(1)   
\end{eqnarray*}    

Finally, as $\rho_{X_1Y_1}=0$, $I(X_1;Y_1)=0$.

Thus,  from above, we can say that
\begin{equation}\label{eqn:realizalemidiff}
I(X_1;Z_1)- I(X_1;Y_1) \geq \frac{\eps}{16n}    
\end{equation}

\end{itemize}
    
\end{proof}

Thus,  the true structure tree $R_1$ will be of the form $Y-Z-X$ and similarly, the true structured tree $R_2$ will be of the form $X-Y-Z$. Thus, the edge $Y-Z$ will be present in both $R_1$ and $R_2$, and either $X-Z$ or $X-Y$ will be present, depending on the tree. Equivalently, we can represent each tree $R_1$ or $R_2$ as a Boolean bit, indicating whether a particular edge (without loss of generality, assume the edge $X-Z$) is present. So, if the tree is $R_1$ we will set the bit $1$, and if the tree is $R_2$, we set the bit to $0$.

Now we will construct a collection of distributions $\mathcal{D}$ over $\{0,1\}^n$ by choosing $n$ building blocks independent of each other, each is either $R_1$ or $R_2$. Note that, from the construction, $\size{\mathcal{D}}=2^n$.
Now consider a subset of $\dfar \subset \mathcal{D}$ such that if we take any two distributions $D_1, D_2 \in \dfar$ such that $\mathsf{Ham}(D_1,D_2) \geq n/5$.

We will start with the following observation.

\begin{obs}
\begin{itemize}
    \item[(i)] $\size{\dfar}= 2^{\Theta(n)}$.

    \item[(ii)] Consider any two arbitrary distribution $D_1, D_2 \in \dfar$. Then $\kl(D_1||D_2) \leq \eps$. Similarly, $\kl(D_2||D_1) \leq \eps$.
\end{itemize}
\end{obs}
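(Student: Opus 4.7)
The plan is to mirror the analogous observation already established for the non-realizable lower bound.

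For part~(i), the claim is standard: $\dfar$ is, by construction, a binary code of block length $n$ with minimum Hamming distance at least $n/5$. I would invoke the Gilbert--Varshamov bound, which guarantees the existence of such a code of size $2^{\Omega(n)}$ (see the reference already cited in the preceding observation). Combined with the trivial upper bound $|\dfar|\le |\cD|=2^n$, this yields $|\dfar|=2^{\Theta(n)}$.

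For part~(ii), I would use tensorization of KL divergence across the $n$ independent blocks. Each of the $n$ building blocks is defined from its own independent Gaussian noise sources ($U_i,V_i,W_i,B_i$), so under either a $D_1\in\dfar$ or $D_2\in\dfar$ labeling, the joint distribution factorizes as a product over the $n$ blocks. Consequently,
\[
\kl(D_1\|D_2)=\sum_{i=1}^{n}\kl\bigl(D_1^{(i)}\,\|\,D_2^{(i)}\bigr),
\]
where $D_j^{(i)}$ denotes the marginal on block $i$. On blocks where $D_1$ and $D_2$ agree, the summand is $0$; on the at most $n$ blocks where they disagree, the summand is either $\kl(R_1\|R_2)$ or $\kl(R_2\|R_1)$, and by \Cref{cl:realizableklbound}(i) each of these equals $\eps/n$. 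Summing gives $\kl(D_1\|D_2)\le n\cdot (\eps/n)=\eps$, and the bound on $\kl(D_2\|D_1)$ follows identically.

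There is no real obstacle here: the only point to be careful about is justifying the blockwise tensorization, which is immediate from the fact that the construction of $R_1$ and $R_2$ uses disjoint, independent noise variables across blocks, so the product structure is preserved under arbitrary choices of building blocks. With (i) and (ii) in hand, the lower bound $\Omega(n/\eps)$ follows by plugging into Fano's inequality (\Cref{lem:fanoinequality}), exactly as in the non-realizable case, together with the reduction from approximate structure recovery to identification in $\dfar$ sketched earlier.
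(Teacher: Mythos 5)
Your proposal is correct and mirrors the paper's own proof: part~(i) cites the Gilbert--Varshamov bound, and part~(ii) tensorizes the KL divergence over the $n$ independent building blocks and bounds each block's contribution by $\eps/n$ via the realizable-KL claim. Your version is slightly more careful in explicitly justifying the block-wise product structure, but the argument is the same.
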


\begin{proof}

\begin{itemize}
    \item[(i)] This follows from coding theory: the Gilbert-Varshamov bound, see \citep[Section 4.2]{guruswami2012essential} for reference.

    \item[(ii)] Consider any two arbitrary distributions $D_1, D_2 \in \dfar$. Note that $D_1$ and $D_2$ differ in $n_2$ bits, that is, $D_1$ and $D_2$ are different in $n/2$ building blocks. Without loss of generality, let us assume that the last $n/2$ bits are different. Since the building blocks of the distributions in $\dfar$ are chosen to be either $R_1$ or $R_2$, independently with equal probability, we can write the following:
    \begin{eqnarray*}
     \kl(D_1||D_2) &=& \sum_{i=1}^n \kl(R_1||R_2) \\
     &\leq& n \cdot \frac{\eps}{n}~~~~~~~~~~~~~~~~~~~[\mbox{From \Cref{cl:realizableklbound}}]\\
     &=& \eps
    \end{eqnarray*}
    
\end{itemize}
    
\end{proof}

Similarly, we can also show that $\kl(D_2||D_1) \leq \eps$.
Now we are ready to apply Fano's inequality (\Cref{lem:fanoinequality}) to obtain the desired testing lower bound. By \Cref{lem:fanoinequality}, we can say that to test correctly with probability at least $9/10$, we need at least
$$\Omega\left(\frac{\log \size{\dfar}}{\max\{\kl(D_1||D_2), \kl(D_2||D_1)\}}\right)$$

Since $\size{\dfar}= 2^{\Theta(n)}$, and $\max\{\kl(D_1||D_2), \kl(D_2||D_1)\} \leq \eps/2$, we need $\Omega(\frac{n}{\eps})$ samples to correctly test with probability at least $9/10$.

\begin{proof}[Proof of \Cref{theo:treelearningrealizablelbintro}]
We consider a random distribution $D\in \dfar$. Any algorithm that guesses $\wh{D}$ correctly with at least 9/10 probability requires $\Omega(n\eps^{-1})$ samples. For every $i \in [n]$, let $(X_i,Y_i,Z_i)$ be the random variables for the $i$-th block. We will show that any algorithm $\mathcal{A}$ that returns an $\eps/320$-approximate tree will be able to correctly guess the true tree and therefore would need $\Omega(n\eps^{-1})$ samples.

To see this, suppose $\mathcal{A}$ returns a tree structure $\wh{T}$ whereas the true tree structure is $T$. Let $\wh{D}=\arg \min\limits_{\textrm{tree distributions }P \ \textrm{on } \widehat{T}} \kl(D||P)$. Firstly, without loss of generality, we can remove the edges from $\wh{T}$ which connect across different blocks. Note that our true distribution belongs to $\dfar$ and does not have any such cross edges. The above follows due to the fact that the blocks are independent of each other, hence the mutual information between the corresponding nodes is $0$, thereby, the weights of the every cross edges are $0$ as well.

Now consider the case when $\wh{T}$ excludes an edge $(Y_i,Z_i)$ for some block $i \in [n]$. In that case, $\kl(D||\wh{D}) \ge \kl(D_i||\wh{D}_i) = I(Y_i;Z_i) - \max\{I(X_i;Z_i), I(X_i;Y_i)\} \ge \Theta(1)$ and $\wh{D}$ is no longer $\eps/320$-close,
where $D_i$ denotes the marginal distribution of the $i$-th block of $D$. Henceforth, we will assume that $\wh{D}\in \mathcal{D}$ without loss of generality.

Our guess for the unknown distribution $D$ will be $D'=\arg \min\limits_{P \in \dfar} \ham(\wh{D},P)$. Let $T'$ be the tree structure for $D'$. Note that $\ham(\wh{D},D')\le n/20$ as, otherwise, $\kl(\wh{D}||D)> \frac{n}{20} \cdot \frac{\eps}{16n}=\frac{\eps}{320}$, leading to a contradiction. Henceforth, we will assume $\ham(\wh{D},D')\le n/20$.

In that case, by our construction, $\ham(\wh{D},D'')\ge \frac{n}{5}-\frac{n}{20} \ge \frac{3n}{20}>\frac{n}{20}$ for every $D''\in \dfar$ such that $D'\neq D''$. Therefore, there is a unique distribution in $\dfar$, which is exactly $D$, that is $\frac{n}{20}$-close to $\wh{D}$ in Hamming distance. This will be our guess for the unknown random distribution.

Combining all the cases, we see that using any $\eps/320$-approximate structure learning algorithm, we can correctly guess the random distribution without requiring any further samples. Therefore, the claimed lower bound holds.
\end{proof}

\section{Experimental results}\label{sec:experiments_full}

In this section, we will present our experimental results. We start by describing our experimental setup in \Cref{sec:experiment_setup}. In \Cref{sec:experiment:chowliu}, we will present our results on Gaussian Chow-Liu algorithm where we compare the tolerance parameter ($\eps$) and the number of samples necessary ($m^*$), along with linear regression plots for comparing $\log m$ vs $\log 1/\eps$. In particular, in \Cref{sec:experiment:chowliu:realizable}, we demonstrate our result on the realizable setting, and in \Cref{sec:experiment:chowliu:nonrealizable},we present our results for the non-realizable setting. Later in \Cref{sec:experiment:chowliuglasso} and \Cref{sec:experiment:chowliuclime}, we compare the Gaussian Chow-Liu result in the realizable setting with Graphical Lasso (GLASSO) and Constrained $\ell_1$ Inverse Covariance Estimation (CLIME), two methods widely used in practice. We compare these two methods for $4$ settings of the tolerance parameter $\eps$.  Then, in \Cref{sec:experiment:mitest}, we discuss our results on Mutual Information testing, where in \Cref{sec:experiment:mitest:realizable}, we study the realizable case, and in \Cref{sec:experiment:mitest:nonrealizable}, we present our results for the additive estimation of mutual information.
Finally, in \Cref{sec:experiment_cmi}, we discuss our results on Conditional Mutual Information testing, where in \Cref{sec:experiment:cmitest:realizable}, we study the realizable setting and in \Cref{sec:experiment:cmitest:nonrealizable}, we present our results for additive estimation of conditional mutual information.

\subsection{Experimental Setup}\label{sec:experiment_setup}

All experiments are carried out on a personal computer Mac Book Air with an M2 chip having a RAM of 8GB. All the experiments are conducted using synthetic datasets constructed using our hard constructions from the lower bound theorems. Experiments took at most 4 hours each.

\color{black}

\subsection{Chow-Liu : $\eps$ vs $m^*$}\label{sec:experiment:chowliu}

In this experiment, we investigate the relationship between the number of samples required and the desired level of tolerance ($\eps$) for the Kullback-Leibler (KL) divergence to converge to zero. Specifically, we aim to determine the minimum number of samples necessary ($m^*$) for the KL divergence between the original structure distribution and the predicted tree distribution to approach zero within a fixed tolerance $\varepsilon$. Let us start with the realizable setting.

\subsubsection{Realizable setting}\label{sec:experiment:chowliu:realizable}

Let $U_1, V_1, W_1 \sim N(0,1)$ be three iid bits. Then, the realizable tree $R_1$ is defined as follows:
\begin{align*}
    Y_1  & \gets U_1\\
    Z_1  & \gets 0.5Y_1 + W_1 \\
    X_1 & \gets \sqrt{\eps} Z_1 + V_1
\end{align*}

For the fixing of tolerance ($\eps$), we are looking at the minimum number of samples ($m$) at which the KL-Divergence between the original tree and the predicted tree using Chow-Liu algorithm is less than or equal to $\frac{\eps}{4}$ with probability $\geq 0.95$. Here, we are considering 1000 trials for every fixing of $\eps$. Then, note the least value of $m$ (i.e. $m^*$) for which the KL between predicted tree distribution and original tree distribution is less than or equal to $\frac{\eps}{4}$.

The relation between $m^*$ and $\eps$ for the realizable case with tree structure $R_1$ is presented in \Cref{fig:prob1_6_2}. Here, $m^*$ is approximately inversely proportional to $\eps$. After fitting the linear regression to the $\log(m^*)$ vs $\log\left(\frac{1}{\eps}\right)$ data, we obtain a line with a slope of $1.016$.

\begin{figure}[h]
    \centering
    \begin{minipage}{.5\textwidth}
        \centering
     \includegraphics[width=0.8\linewidth]{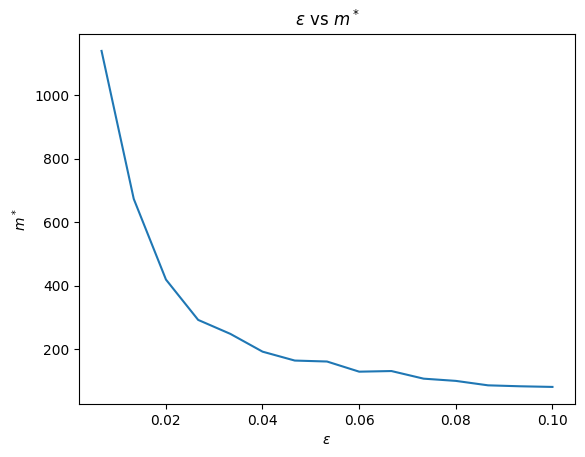}
        \caption{Proximity parameter vs No. of samples for the realizable case}
        \label{fig:prob1_6_2}
    \end{minipage}%
    \begin{minipage}{0.5\textwidth}
        \centering
    \includegraphics[width=0.8\linewidth]{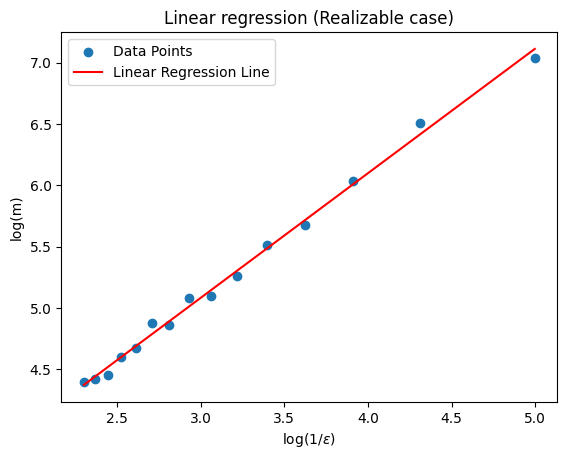}
        \caption{$\eps$ vs $m^*$ linear regression for realizable case }
        \label{fig:prob1_6_1}
    \end{minipage}
\end{figure}

\pagebreak
\subsubsection{Non-realizable setting}\label{sec:experiment:chowliu:nonrealizable}

Let us assume $B_1, U_1, V_1, W_1 \sim N(0,1)$ be four iid random bits. Then, the non-realizable structure $R_2$ is defined as follows:
\begin{align*}
 X_1 & \gets \paren{1+ \eps} \cdot B_1 + U_1 \\
 Y_1 & \gets \paren{1+ 2\eps} \cdot B_1 + V_1 \\
 Z_1  & \gets \paren{1+ 3\eps} \cdot B_1 + W_1
\end{align*}

For the fixing of tolerance ($\eps$), we are looking at the minimum number of samples ($m$) at which the KL-Divergence between the original graph and the predicted tree using the Chow-Liu algorithm is less than or equal to $\frac{\eps}{4}$ with probability $\geq 0.95$. Here, we are considering 1000 trials for every fixing of $\eps$. Then, noting the least value of $m$ (i.e. $m^*$) for which the KL between predicted tree distribution and original structure distribution is less than or equal to $\frac{\eps}{4}$. After fitting the linear regression to the $\log(m^*)$ vs $\log\left(\frac{1}{\eps}\right)$ data, we obtain a line with a slope of $1.916$. Here, $m^*$ is approximately inversely proportional to $\eps^2$.







\begin{figure}[!htb]
    \centering
    \begin{minipage}{.5\textwidth}
        \centering
     \includegraphics[width=0.8\linewidth]{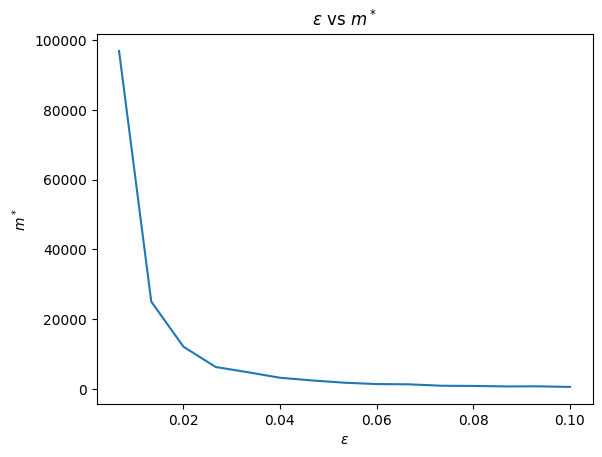}
        \caption{Proximity parameter vs No. of samples for the non-realizable case}
        \label{fig:prob1_7_2}
    \end{minipage}%
    \begin{minipage}{0.5\textwidth}
        \centering
    \includegraphics[width=0.8\linewidth]{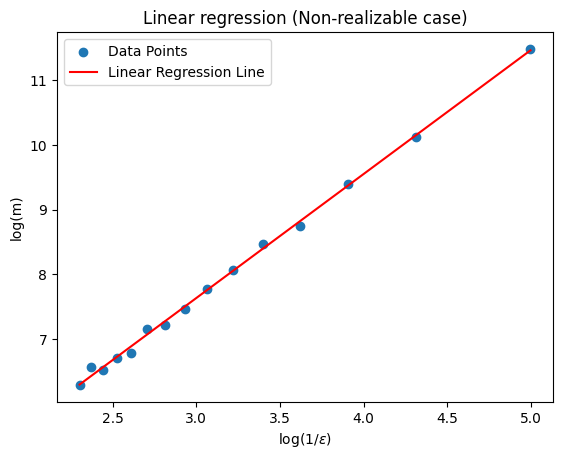}
        \caption{$\eps$ vs $m^*$ linear regression for non-realizable case}
        \label{fig:prob1_7_1}
    \end{minipage}
\end{figure}

\subsection{Chow-Liu and GLASSO Comparison}\label{sec:experiment:chowliuglasso}

In this subsection, we compare the results of our Gaussian Chow-Liu algorithm with Graphical Lasso (GLASSO), a popular algorithm used in the literature for estimation of covariance matrices. Let us start by briefly describing GLASSO.

\subsubsection{Graphical Lasso (GLASSO)}
The Graphical Lasso (GLASSO) is an algorithm for estimating sparse inverse covariance matrices in Gaussian graphical models, especially useful for high-dimensional data where the number of variables \( p \) exceeds the number of observations \( n \). It estimates the precision matrix \( \Theta = \Sigma^{-1} \), revealing conditional dependencies between variables.
The precision matrix \( \Theta \) is estimated by solving the optimization problem stated in \cite{friedman2008sparse}:

\[
\hat{\Theta} = \underset{\Theta \succ 0}{\text{argmin}} \left\{ \text{tr}(S \Theta) - \log \det(\Theta) + \lambda \|\Theta\|_1 \right\}
\]

Here, \( S \) is the sample covariance matrix, \( \|\Theta\|_1 \) is the element-wise \( \ell_1 \)-norm of \( \Theta \), and \( \lambda \) is a regularization parameter controlling sparsity. The \(\ell_1\)-penalty induces sparsity in \( \Theta \).

\subsubsection{Chow-Liu and GLASSO Comparison experiments}
Firstly, we will set the original tree construction similar to \Cref{sec:experiment:chowliu}. Let $U, V, W \sim N(0,1)$ be three iid bits. Then, the realizable tree $R_1$ is defined as follows:
\begin{align*}
    Y  & \gets U\\
    Z  & \gets 0.5Y + W \\
    X & \gets \sqrt{\eps} Z + V
\end{align*}

Then, do the following experiment on it.
We used the GLASSO package from R for the experiments~\cite{glassopackage}.
GLASSO returns a precision matrix for the triangle graph with the nodes $X,Y,Z$. We simply remove the lowest entry of the returned precision matrix in absolute value to obtain the approximate tree structure and compare the latter with the Chow-Liu output.
We conduct 200 trials to ascertain the frequency of correct recovery for various combinations of fixed epsilon and sample size ($m$). This gives us a way to compare the performance of the two algorithms to determine which requires fewer samples to prefer the $X$-$Z$ edge over the $X$-$Y$ edge.
We report this frequency of correct recovery in \Cref{fig:glasso_r_1}--\Cref{fig:glasso_r_4} by the above two algorithms for
100 different fixings of samples for $4$ different values of $\eps: 0.1,0.01,0.001,0.0001$.

\begin{figure}[h!]
    \centering
    \begin{minipage}{.45\textwidth}
        \centering
        \includegraphics[width=\linewidth]{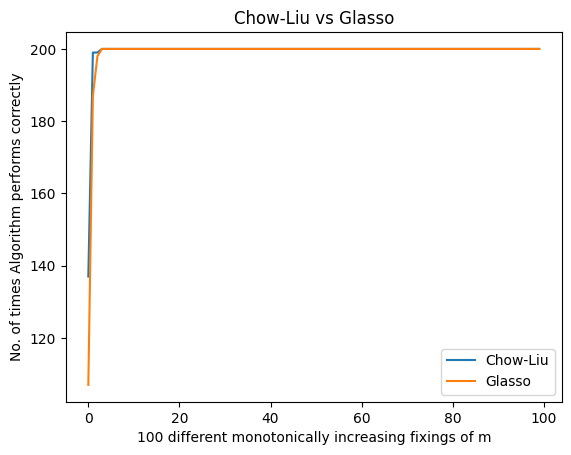}
        \caption{Chow-Liu vs GLASSO for $\eps = 0.1$}
        \label{fig:glasso_r_1}
    \end{minipage}
    \begin{minipage}{.45\textwidth}
        \centering     \includegraphics[width=\linewidth]{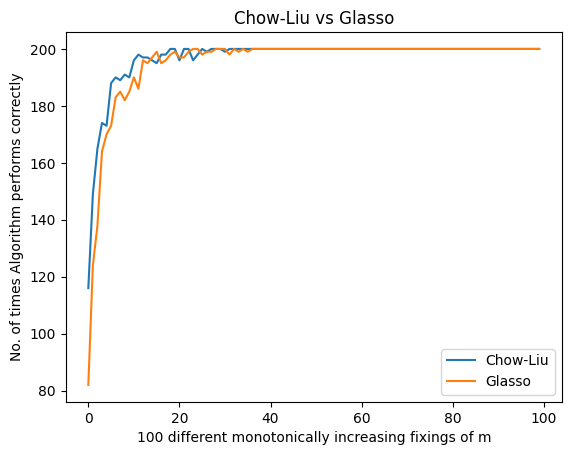}
        \caption{Chow-Liu vs GLASSO for $\eps = 0.01$}
        \label{fig:glasso_r_2}
    \end{minipage}
    \begin{minipage}{0.45\textwidth}
        \centering
    \includegraphics[width=\linewidth]{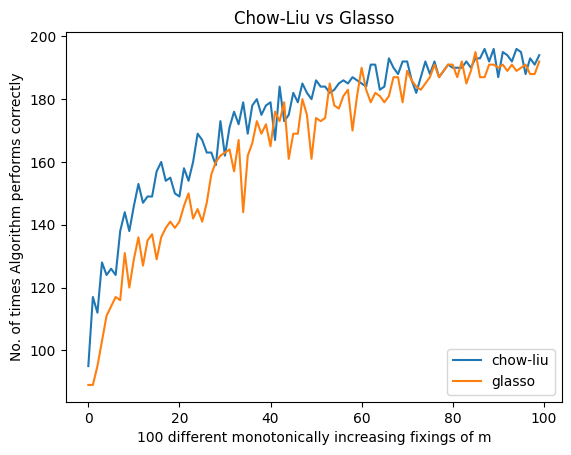}
        \caption{Chow-Liu vs GLASSO for $\eps = 0.001$}
        \label{fig:glasso_r_3}
    \end{minipage}
    \begin{minipage}{.45\textwidth}
        \centering
     \includegraphics[width=\linewidth]{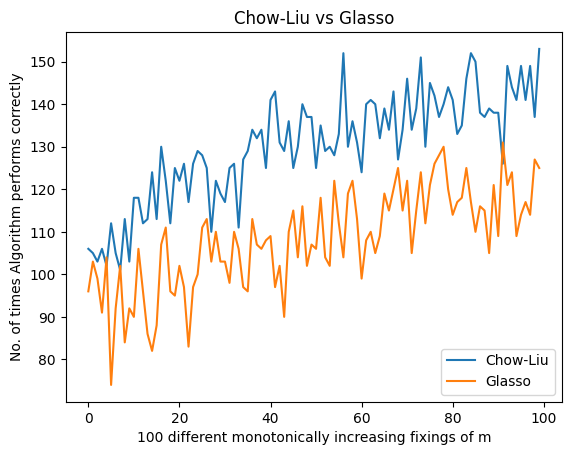}
        \caption{Chow-Liu vs GLASSO for $\eps = 0.0001$}
        \label{fig:glasso_r_4}
    \end{minipage}%
\end{figure}

Overall, we can conclude that, for the tree-structured distributions, the Chow-Liu algorithm performs better than the GLASSO algorithm in reconstructing the structure from the samples. 

\subsection{Chow-Liu and CLIME Comparison}\label{sec:experiment:chowliuclime}

In this subsection, we compare the results of our Gaussian Chow-Liu algorithm with CLIME, another popular algorithm used in the literature for estimation of covariance matrices. We start by briefly describing the method.

\subsubsection{Constrained $\ell_1$ Inverse Covariance Estimation (CLIME)}

Constrained $\ell_1$ Inverse Covariance Estimation (CLIME) is an algorithm for estimating inverse covariance matrices in Gaussian Graphical models. Similar to GLASSO, this is useful for high-dimensional data. It estimates the precision matrix \( \Theta = \Sigma^{-1} \), revealing conditional dependencies between variables. The precision matrix \( \Theta \) is estimated by solving the optimization problem stated in \cite{cai2011constrained}.
$$\min_{\Theta} ||\Theta||_1 \ \mbox{ s.t. } \ ||\hat{\Sigma} \Theta - I||_{\infty} \leq \lambda$$

\subsubsection{Chow-Liu and CLIME Comparison experiments}

We will set the original tree construction similar to \Cref{sec:experiment:chowliu}, as done for experiments with GLASSO. Let $U, V, W \sim N(0,1)$ be three iid bits. Then, the realizable tree $R_1$ is defined as follows:
\begin{align*}
    Y  & \gets U\\
    Z  & \gets 0.5Y + W \\
    X & \gets \sqrt{\eps} Z + V
\end{align*}

Then, we perform the following experiment on it.
We use the CLIME package from R for the experiments~\cite{climepackage}.
CLIME returns a precision matrix for the triangle graph with the nodes $X,Y,Z$. We simply remove the lowest entry of the returned precision matrix in absolute value to obtain the approximate tree structure and compare the latter with the Chow-Liu output.
We conduct 200 trials to ascertain the frequency of correct recovery for various combinations of fixed epsilon and sample size ($m$). This gives us a way to compare the performance of the two algorithms to determine which requires fewer samples to prefer the $X$-$Z$ edge over the $X$-$Y$ edge.
We report this frequency of correct recovery in \Cref{fig:clime_r_1}--\Cref{fig:clime_r_4} by the above two algorithms for
100 different fixings of samples for $4$ different values of $\eps: 0.1,0.01,0.001,0.0001$.

Similar to the case of GLASSO, we conclude that for the tree-structured distributions, the Chow-Liu algorithm performs better than the CLIME algorithm in reconstructing the structure from the samples.

\begin{figure}[hbt!]
    \centering
    \begin{minipage}{.45\textwidth}
        \centering
        \includegraphics[width=\linewidth]{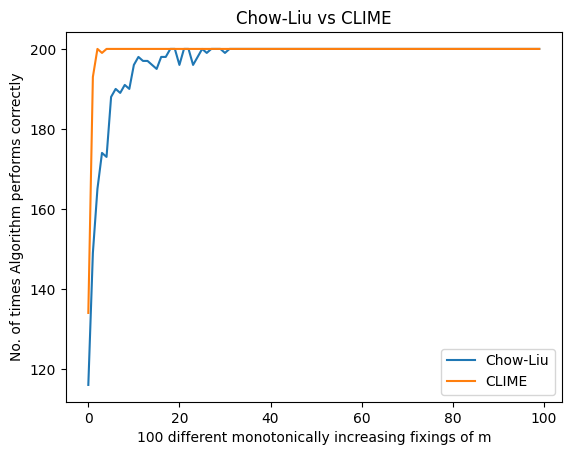}
        \caption{Chow-Liu vs CLIME for $\eps = 0.1$}
        \label{fig:clime_r_1}
    \end{minipage}
    \begin{minipage}{.45\textwidth}
        \centering     \includegraphics[width=\linewidth]{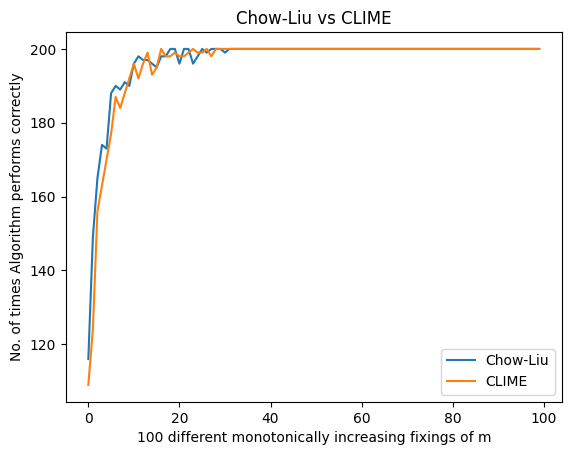}
        \caption{Chow-Liu vs CLIME for $\eps = 0.01$}
        \label{fig:clime_r_2}
    \end{minipage}
    \begin{minipage}{0.45\textwidth}
        \centering
    \includegraphics[width=\linewidth]{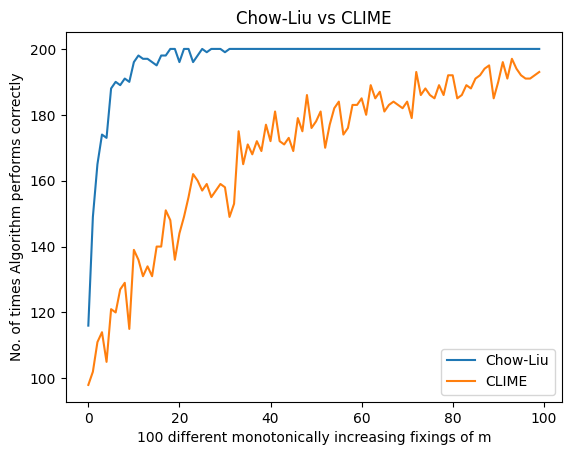}
        \caption{Chow-Liu vs CLIME for $\eps = 0.001$}
        \label{fig:clime_r_3}
    \end{minipage}
    \begin{minipage}{.45\textwidth}
        \centering
     \includegraphics[width=\linewidth]{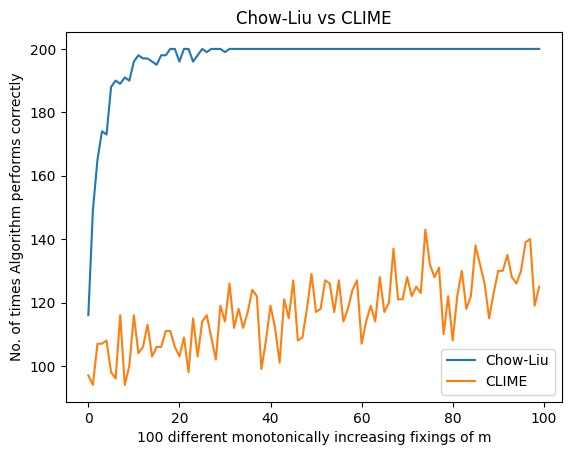}
        \caption{Chow-Liu vs CLIME for $\eps = 0.0001$}
        \label{fig:clime_r_4}
    \end{minipage}%
\end{figure}

\subsection{Mutual Information (MI) testing experiments}\label{sec:experiment:mitest}

In this subsection, we present our mutual information testing experiment results. We start with the realizable setting, then move to the additive estimation experiment.

\subsubsection{Realizable setting}\label{sec:experiment:mitest:realizable}

Let us assume $U, V \sim N(0,1)$ be two iid random bits. Then, the realizable structure $R_1$ is defined as follows:
\begin{align*}
 X & \gets U \\
 Y & \gets V \\
\end{align*}

We conducted experiments to investigate the impact of varying sample sizes on empirical mutual information. For each fixed value of \( m \), we calculated the average empirical mutual information over 1000 trials. Subsequently, we performed a regression analysis to determine the slope of the linear regression line fitted to the relationship between \( \log(m) \) and \( \log(\text{MI}) \). Our observations indicate that the slope of this line is approximately \( -1.003 \).

\subsubsection{Additive estimation of Mutual Information}\label{sec:experiment:mitest:nonrealizable}

Let us assume $U, V \sim N(0,1)$ be two iid random bits. Then, the non-realizable structure $R_2$ is defined as follows:
\begin{align*}
 X & \gets U \\
 Y & \gets X + V \\
\end{align*}
We conducted experiments to investigate the impact of varying sample sizes on empirical mutual information. For each fixed value of \( m \), we calculated the average empirical mutual information over 100 trials. Subsequently, we performed a regression analysis to determine the slope of the linear regression line fitted to the relationship between \( \log(m) \) and \( \log(\text{MI}) \). Our observations indicate that the slope of this line is approximately \( -0.5214 \). However, it is not completely clear to us why the variance is high in this experiment.

\begin{figure}[!htb]
    \centering
    \begin{minipage}{.5\textwidth}
        \centering
     \includegraphics[width=\linewidth]{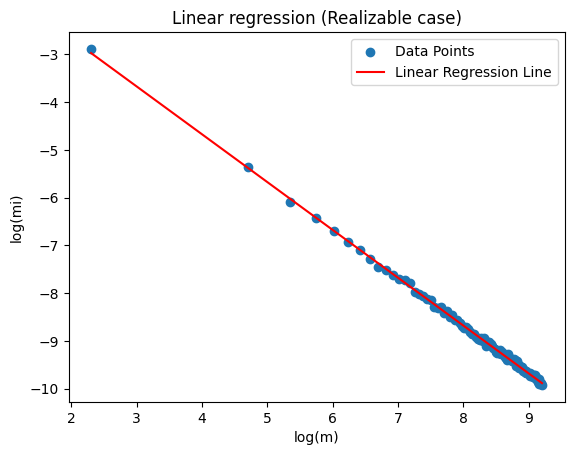}
        \caption{MI testing for realizable case}
        \label{fig:log(m)_vs_log(mi)}
    \end{minipage}%
    \begin{minipage}{0.5\textwidth}
        \centering
    \includegraphics[width=\linewidth]{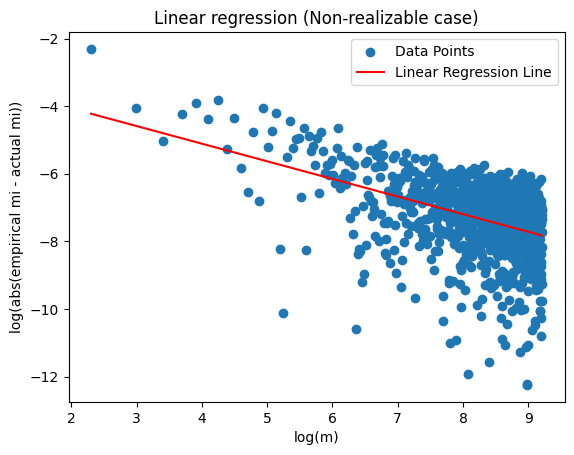}
        \caption{Additive estimation of MI}
        \label{fig:log(m)_vs_log(mi)_est}
    \end{minipage}
\end{figure}

\subsection{Conditional Mutual Information (CMI) testing experiments}\label{sec:experiment_cmi}

Similar to the mutual information testing and estimation in the previous subsection, here we present the conditional mutual information testing results. We start with the realizable setting, and then move to the additive estimation problem.

\subsubsection{Realizable setting}\label{sec:experiment:cmitest:realizable}

Let us assume $Z, U, V \sim N(0,1)$ be three iid random bits. Then, the realizable structure $R_1$ is defined as follows:
\begin{align*} 
        Z & \sim N(0,1) \\
        V & \sim N(0,1) \\
        U & \sim N(0,1) \\  
        X & \longleftarrow  Z + V\\
        Y & \longleftarrow Z + U \\
\end{align*}

We conducted experiments to investigate the impact of varying sample sizes on empirical conditional mutual information. For each fixed value of \( m \), we calculated the average empirical conditional mutual information over 1000 trials. Subsequently, we performed a regression analysis to determine the slope of the linear regression line fitted to the relationship between \( \log(m) \) and \( \log(\text{MI}) \). Our observations indicate that the slope of this line is approximately \( 1.0023 \).

\subsubsection{Additive estimation of Conditional Mutual Information}\label{sec:experiment:cmitest:nonrealizable}

Let us assume $Z,U, V \sim N(0,1)$ be three iid random bits. Then, the non-realizable structure $R_2$ is defined as follows:
\begin{align*} 
        Z & \sim N(0,1) \\
        V & \sim N(0,1) \\
        U & \sim N(0,1) \\  
        X & \longleftarrow  Z + V\\
        Y & \longleftarrow X + Z + U \\
    \end{align*}

We conducted experiments to investigate the impact of varying sample sizes on empirical conditional mutual information. For each fixed value of \( m \), we calculated the average empirical conditional mutual information over 100 trials. Subsequently, we performed a regression analysis to determine the slope of the linear regression line fitted to the relationship between \( \log(m) \) and \( \log(\text{MI}) \). Our observations indicate that the slope of this line is approximately \( -0.5153 \). However, it is not completely clear to us why the variance is high in this experiment.

\begin{figure}[!htb]
    \centering
    \begin{minipage}{.5\textwidth}
        \centering
     \includegraphics[width=\linewidth]{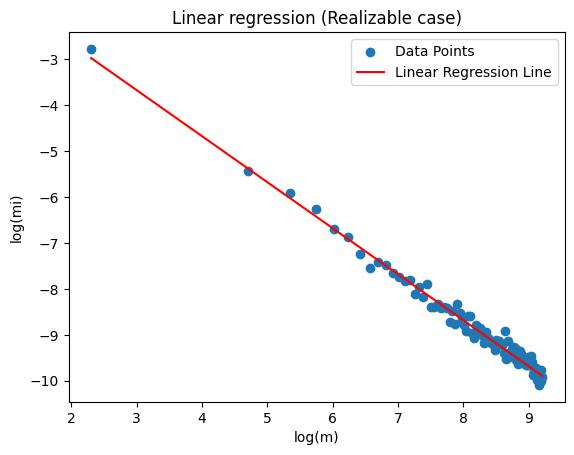}
        \caption{CMI testing for realizable case}
        \label{fig:log(m)_vs_log(cmi)}
    \end{minipage}%
    \begin{minipage}{0.5\textwidth}
        \centering
    \includegraphics[width=\linewidth]{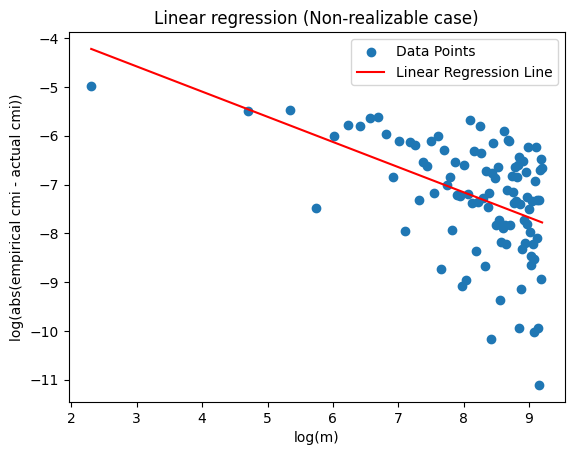}
        \caption{Additive estimation of CMI}
        \label{fig:log(m)_vs_log(cmi)_est}
    \end{minipage}
\end{figure}

\section{Conclusion}\label{sec:conclusion}
In this work, we designed novel (conditional) mutual information testers for Gaussian random variables and used them to design efficient structure learning algorithms for Gaussian tree models. There are several open problems.
An interesting open problem here is the \emph{robust learning} problem, where adversarially corrupted samples from the unknown distribution are obtained.  Another open problem is to study this problem for continuous models other than the Gaussian model.

\section*{Acknowledgements}
We would like to thank the anonymous reviewers of CODS-COMAD'24 for their suggestions which improved the presentation of the paper.
A version of this work excluding the experimental section was earlier submitted to the IJCAI 2024 conference. We are also thankful to the anonymous reviewers of IJCAI 2024 for suggesting us to experiment with our algorithms which led to~\Cref{sec:experiments_full}.

SG's work is partially supported by the SERB CRG Award CRG/2022/00798. SS's research is supported by the National Research Foundation, Singapore and A*STAR under its Quantum Engineering Programme NRF2021-QEP2-02-P05. SS would like to thank the Computer Science department at IIT Kanpur and the IIT Kanpur Initiation grant of SG for the hospitality during his visit, where this work was initiated.

\bibliographystyle{alpha}
\bibliography{reference}

\newcommand{\etalchar}[1]{$^{#1}$}
\begin{thebibliography}{MKtBvE10}

\bibitem[ABDH{\etalchar{+}}20]{ashtiani2020near}
Hassan Ashtiani, Shai Ben-David, Nicholas~JA Harvey, Christopher Liaw, Abbas Mehrabian, and Yaniv Plan.
\newblock Near-optimal sample complexity bounds for robust learning of gaussian mixtures via compression schemes.
\newblock {\em Journal of the ACM (JACM)}, 67(6):1--42, 2020.

\bibitem[ADK15]{acharya2015optimal}
Jayadev Acharya, Constantinos Daskalakis, and Gautam Kamath.
\newblock Optimal testing for properties of distributions.
\newblock {\em Advances in Neural Information Processing Systems (NeurIPS)}, 2015.

\bibitem[AHHK12]{anandkumar2012learning}
Anima Anandkumar, Daniel~J Hsu, Furong Huang, and Sham~M Kakade.
\newblock Learning mixtures of tree graphical models.
\newblock {\em Advances in Neural Information Processing Systems (NeurIPS)}, 2012.

\bibitem[BABK22]{boix2022chow}
Enric Boix-Adsera, Guy Bresler, and Frederic Koehler.
\newblock Chow-liu++: Optimal prediction-centric learning of tree ising models.
\newblock In {\em Symposium on Foundations of Computer Science (FOCS)}, pages 417--426, 2022.

\bibitem[BCG{\etalchar{+}}22a]{DBLP:conf/aistats/0001CGGW22}
Arnab Bhattacharyya, Davin Choo, Rishikesh Gajjala, Sutanu Gayen, and Yuhao Wang.
\newblock Learning sparse fixed-structure gaussian bayesian networks.
\newblock In {\em International Conference on Artificial Intelligence and Statistics (AISTATS)}, pages 9400--9429, 2022.

\bibitem[BCG{\etalchar{+}}22b]{bhattacharyya2022learning}
Arnab Bhattacharyya, Davin Choo, Rishikesh Gajjala, Sutanu Gayen, and Yuhao Wang.
\newblock Learning sparse fixed-structure gaussian bayesian networks.
\newblock In {\em International Conference on Artificial Intelligence and Statistics (AISTATS)}, pages 9400--9429, 2022.

\bibitem[BCY22]{DBLP:journals/corr/abs-2204-08690}
Arnab Bhattacharyya, Cl{\'e}ment~L Canonne, and Joy~Qiping Yang.
\newblock Independence testing for bounded degree bayesian networks.
\newblock {\em Advances in Neural Information Processing Systems (NeurIPS)}, 35:15027--15038, 2022.

\bibitem[BFF{\etalchar{+}}01]{batu2001testing}
Tugkan Batu, Eldar Fischer, Lance Fortnow, Ravi Kumar, Ronitt Rubinfeld, and Patrick White.
\newblock Testing random variables for independence and identity.
\newblock In {\em Symposium on Foundations of Computer Science (FOCS)}, pages 442--451, 2001.

\bibitem[BGP{\etalchar{+}}23]{bhattacharyya2023near}
Arnab Bhattacharyya, Sutanu Gayen, Eric Price, Vincent~YF Tan, and NV~Vinodchandran.
\newblock Near-optimal learning of tree-structured distributions by chow and liu.
\newblock {\em SIAM Journal on Computing (SICOMP)}, 52(3):761--793, 2023.

\bibitem[BK20]{bresler2020learning}
Guy Bresler and Mina Karzand.
\newblock Learning a tree-structured ising model in order to make predictions.
\newblock {\em The Annals of Statistics}, 48(2):713--737, 2020.

\bibitem[BMS{\etalchar{+}}05]{basso2005reverse}
Katia Basso, Adam~A Margolin, Gustavo Stolovitzky, Ulf Klein, Riccardo Dalla-Favera, and Andrea Califano.
\newblock Reverse engineering of regulatory networks in human b cells.
\newblock {\em Nature genetics}, 37(4):382--390, 2005.

\bibitem[Bre15]{bresler2015efficiently}
Guy Bresler.
\newblock Efficiently learning ising models on arbitrary graphs.
\newblock In {\em Symposium on Theory of Computing (STOC)}, pages 771--782, 2015.

\bibitem[C{\etalchar{+}}22]{CanonneTopicsDT2022}
Cl{\'e}ment~L Canonne et~al.
\newblock Topics and techniques in distribution testing: A biased but representative sample.
\newblock {\em Foundations and Trends{\textregistered} in Communications and Information Theory}, 19(6):1032--1198, 2022.

\bibitem[Can20]{Canonne:Survey:ToC}
Cl{\'e}ment~L Canonne.
\newblock A survey on distribution testing: Your data is big. but is it blue?
\newblock {\em Theory of Computing}, pages 1--100, 2020.

\bibitem[CDKS18]{canonne2018testing}
Cl{\'e}ment~L Canonne, Ilias Diakonikolas, Daniel~M Kane, and Alistair Stewart.
\newblock Testing conditional independence of discrete distributions.
\newblock In {\em Symposium on Theory of Computing (STOC)}, pages 735--748, 2018.

\bibitem[CDKS20]{DBLP:conf/colt/CanonneDKS17}
Cl{\'e}ment~L Canonne, Ilias Diakonikolas, Daniel~M Kane, and Alistair Stewart.
\newblock Testing bayesian networks.
\newblock {\em IEEE Transactions on Information Theory}, 66(5):3132--3170, 2020.

\bibitem[Chi96]{chickering1996learning}
David~Maxwell Chickering.
\newblock Learning bayesian networks is np-complete.
\newblock {\em Learning from data: Artificial intelligence and statistics V}, 1996.

\bibitem[CL68]{chow1968approximating}
CKCN Chow and Cong Liu.
\newblock Approximating discrete probability distributions with dependence trees.
\newblock {\em IEEE transactions on Information Theory}, 14(3):462--467, 1968.

\bibitem[cli]{climepackage}
{\em CLIME R Package}.

\bibitem[CLL11]{cai2011constrained}
Tony Cai, Weidong Liu, and Xi~Luo.
\newblock A constrained $\ell_1$ minimization approach to sparse precision matrix estimation.
\newblock {\em Journal of the American Statistical Association}, 106(494):594--607, 2011.

\bibitem[CYBC24]{DBLP:journals/corr/abs-2310-06333}
Davin Choo, Joy~Qiping Yang, Arnab Bhattacharyya, and Cl{\'e}ment~L Canonne.
\newblock Learning bounded-degree polytrees with known skeleton.
\newblock In {\em International Conference on Algorithmic Learning Theory (ALT)}, pages 402--443, 2024.

\bibitem[Das13]{dasgupta2013learning}
Sanjoy Dasgupta.
\newblock Learning polytrees.
\newblock {\em arXiv preprint arXiv:1301.6688}, 2013.

\bibitem[DGK{\etalchar{+}}21]{diakonikolas2021optimal}
Ilias Diakonikolas, Themis Gouleakis, Daniel~M Kane, John Peebles, and Eric Price.
\newblock Optimal testing of discrete distributions with high probability.
\newblock In {\em Symposium on Theory of Computing (STOC)}, pages 542--555, 2021.

\bibitem[DP21]{DBLP:conf/stoc/DaskalakisP21}
Constantinos Daskalakis and Qinxuan Pan.
\newblock Sample-optimal and efficient learning of tree ising models.
\newblock In {\em Symposium on Theory of Computing (STOC)}, pages 133--146, 2021.

\bibitem[Duc07]{duchi2007derivations}
John Duchi.
\newblock Derivations for linear algebra and optimization.
\newblock {\em Berkeley, California}, 3(1):2325--5870, 2007.

\bibitem[FHT08]{friedman2008sparse}
Jerome Friedman, Trevor Hastie, and Robert Tibshirani.
\newblock Sparse inverse covariance estimation with the graphical lasso.
\newblock {\em Biostatistics}, 9(3):432--441, 2008.

\bibitem[gla]{glassopackage}
{\em GLASSO R Package}.

\bibitem[GM24]{gaitonde2023unified}
Jason Gaitonde and Elchanan Mossel.
\newblock A unified approach to learning ising models: Beyond independence and bounded width.
\newblock In {\em Symposium on Theory of Computing (STOC)}, pages 503--514, 2024.

\bibitem[GRS]{guruswami2012essential}
Venkatesan Guruswami, Atri Rudra, and Madhu Sudan.
\newblock {\em Essential Coding Theory}.

\bibitem[HLS{\etalchar{+}}10]{huang2010learning}
Shuai Huang, Jing Li, Liang Sun, Jieping Ye, Adam Fleisher, Teresa Wu, Kewei Chen, Eric Reiman, Alzheimer's Disease~NeuroImaging Initiative, et~al.
\newblock Learning brain connectivity of alzheimer's disease by sparse inverse covariance estimation.
\newblock {\em NeuroImage}, 50(3):935--949, 2010.

\bibitem[Jon21]{jones2021bayesian}
Edmund Jones.
\newblock Bayesian learning of forest and tree graphical models.
\newblock {\em arXiv preprint arXiv:2108.13992}, 2021.

\bibitem[KDDC23]{kandiros2023learning}
Vardis Kandiros, Constantinos Daskalakis, Yuval Dagan, and Davin Choo.
\newblock Learning and testing latent-tree ising models efficiently.
\newblock In {\em Advances in Neural Information Processing Systems (NeurIPS)}, pages 1666--1729, 2023.

\bibitem[KF09]{koller2009probabilistic}
Daphne Koller and Nir Friedman.
\newblock {\em Probabilistic Graphical Models: Principles and Techniques}.
\newblock MIT press, 2009.

\bibitem[KKMM20]{kelner2020learning}
Jonathan Kelner, Frederic Koehler, Raghu Meka, and Ankur Moitra.
\newblock Learning some popular gaussian graphical models without condition number bounds.
\newblock {\em Advances in Neural Information Processing Systems (NeurIPS)}, 33:10986--10998, 2020.

\bibitem[KM17]{klivans2017learning}
Adam Klivans and Raghu Meka.
\newblock Learning graphical models using multiplicative weights.
\newblock In {\em Symposium on Foundations of Computer Science (FOCS)}, pages 343--354, 2017.

\bibitem[KNBW22]{kim2022local}
Ilmun Kim, Matey Neykov, Sivaraman Balakrishnan, and Larry Wasserman.
\newblock Local permutation tests for conditional independence.
\newblock {\em The Annals of Statistics}, 50(6):3388--3414, 2022.

\bibitem[KNBW23]{kim2023conditional}
Ilmun Kim, Matey Neykov, Sivaraman Balakrishnan, and Larry Wasserman.
\newblock Conditional independence testing for discrete distributions: Beyond $\chi^{2}$-and g-tests.
\newblock {\em arXiv preprint arXiv:2308.05373}, 2023.

\bibitem[KOPS15]{kamath2015learning}
Sudeep Kamath, Alon Orlitsky, Dheeraj Pichapati, and Ananda~Theertha Suresh.
\newblock On learning distributions from their samples.
\newblock In {\em Conference on Learning Theory (COLT)}, pages 1066--1100, 2015.

\bibitem[Mee01]{meek2001finding}
Christopher Meek.
\newblock Finding a path is harder than finding a tree.
\newblock In {\em International Workshop on Artificial Intelligence and Statistics}, pages 192--195. PMLR, 2001.

\bibitem[MJ00]{meila2000learning}
Marina Meila and Michael~I Jordan.
\newblock Learning with mixtures of trees.
\newblock {\em Journal of Machine Learning Research (JMLR)}, 1:1--48, 2000.

\bibitem[MKtBvE10]{menendez2010gene}
Patricia Men{\'e}ndez, Yiannis~AI Kourmpetis, Cajo~JF ter Braak, and Fred~A van Eeuwijk.
\newblock Gene regulatory networks from multifactorial perturbations using graphical lasso: application to the dream4 challenge.
\newblock {\em PloS one}, 5(12):e14147, 2010.

\bibitem[MV19]{marx2019testing}
Alexander Marx and Jilles Vreeken.
\newblock Testing conditional independence on discrete data using stochastic complexity.
\newblock In {\em International Conference on Artificial Intelligence and Statistics (AISTATS)}, pages 496--505, 2019.

\bibitem[NB04]{narasimhan2012pac}
Mukund Narasimhan and Jeff Bilmes.
\newblock Pac-learning bounded tree-width graphical models.
\newblock In {\em Uncertainty in Artificial Intelligence (UAI)}, pages 410--417, 2004.

\bibitem[NBW21]{neykov2021minimax}
Matey Neykov, Sivaraman Balakrishnan, and Larry Wasserman.
\newblock Minimax optimal conditional independence testing.
\newblock {\em The Annals of Statistics}, 49(4):2151--2177, 2021.

\bibitem[Rub12]{rubinfeld2012taming}
Ronitt Rubinfeld.
\newblock Taming big probability distributions.
\newblock {\em XRDS: Crossroads, The ACM Magazine for Students}, 19(1):24--28, 2012.

\bibitem[SP20]{shah2020hardness}
RAJEN~D Shah and JONAS Peters.
\newblock The hardness of conditional independence testing and the generalised covariance measure.
\newblock {\em The Annals of Statistics}, 48(3):1514--1538, 2020.

\bibitem[Sre03]{srebro2003maximum}
Nathan Srebro.
\newblock Maximum likelihood bounded tree-width markov networks.
\newblock {\em Artificial intelligence}, 143(1):123--138, 2003.

\bibitem[SS05]{schafer2005learning}
Juliane Sch{\"a}fer and Korbinian Strimmer.
\newblock Learning large-scale graphical gaussian models from genomic data.
\newblock In {\em AIP Conference Proceedings}, volume 776, pages 263--276. American Institute of Physics, 2005.

\bibitem[TAW10a]{tan2010error}
Vincent~YF Tan, Animashree Anandkumar, and Alan~S Willsky.
\newblock Error exponents for composite hypothesis testing of markov forest distributions.
\newblock In {\em International Symposium on Information Theory (ISIT)}, pages 1613--1617, 2010.

\bibitem[TAW10b]{tan2010learningexp}
Vincent~YF Tan, Animashree Anandkumar, and Alan~S Willsky.
\newblock Learning gaussian tree models: Analysis of error exponents and extremal structures.
\newblock {\em IEEE Transactions on Signal Processing}, 58(5):2701--2714, 2010.

\bibitem[TAW10c]{tan2010scaling}
Vincent~YF Tan, Animashree Anandkumar, and Alan~S Willsky.
\newblock Scaling laws for learning high-dimensional markov forest distributions.
\newblock In {\em Allerton Conference on Communication, Control, and Computing (Allerton)}, pages 712--718, 2010.

\bibitem[TSFW10]{tan2010learninghypothesis}
Vincent~YF Tan, Sujay Sanghavi, John~W Fisher, and Alan~S Willsky.
\newblock Learning graphical models for hypothesis testing and classification.
\newblock {\em IEEE Transactions on Signal Processing}, 58(11):5481--5495, 2010.

\bibitem[Ver18]{vershynin2018high}
Roman Vershynin.
\newblock {\em High-dimensional Probability: An Introduction with Applications in Data Science}.
\newblock Cambridge University Press, 2018.

\bibitem[VGPT10]{varoquaux2010brain}
Ga{\"e}l Varoquaux, Alexandre Gramfort, Jean-Baptiste Poline, and Bertrand Thirion.
\newblock Brain covariance selection: better individual functional connectivity models using population prior.
\newblock {\em Advances in Neural Information Processing Systems (NeurIPS)}, 23, 2010.

\bibitem[VP22]{verma2022equivalence}
Thomas~S Verma and Judea Pearl.
\newblock Equivalence and synthesis of causal models.
\newblock In {\em Probabilistic and causal inference: The works of Judea Pearl}, pages 221--236. 2022.

\bibitem[WGT{\etalchar{+}}24]{DBLP:conf/aistats/WangGTA024}
Yuhao Wang, Ming Gao, Wai~Ming Tai, Bryon Aragam, and Arnab Bhattacharyya.
\newblock Optimal estimation of gaussian (poly)trees.
\newblock In {\em International Conference on Artificial Intelligence and Statistics (AISTATS)}, volume 238, pages 3619--3627, 2024.

\bibitem[WZV{\etalchar{+}}04]{wille2004sparse}
Anja Wille, Philip Zimmermann, Eva Vranov{\'a}, Andreas F{\"u}rholz, Oliver Laule, Stefan Bleuler, Lars Hennig, Amela Preli{\'c}, Peter von Rohr, Lothar Thiele, et~al.
\newblock Sparse graphical gaussian modeling of the isoprenoid gene network in arabidopsis thaliana.
\newblock {\em Genome biology}, 5:1--13, 2004.

\bibitem[Yu97]{yu1997cam}
Bin Yu.
\newblock Assouad, fano, and le cam.
\newblock {\em Festschrift for Lucien Le Cam: Research Papers in Probability and Statistics, Springer-Verlag, New York}, pages 423--435, 1997.

\bibitem[ZPL{\etalchar{+}}24]{zhang2024properties}
Yufeng Zhang, Jialu Pan, Li~Ken Li, Wanwei Liu, Zhenbang Chen, Xinwang Liu, and Ji~Wang.
\newblock On the properties of kullback-leibler divergence between multivariate gaussian distributions.
\newblock {\em Advances in Neural Information Processing Systems (NeurIPS)}, 36, 2024.

\end{thebibliography}

\end{document}